\newtheorem{thm}{Theorem}
\newtheorem{lemma}{Lemma}
\xpatchcmd{\section}{%
  \normalfont\scshape\centering}{%
  \normalfont\scshape}{\typeout{Success}}{\typeout{Failure}}%
\newcommand{\amit}[1]{{{\color{red} \textbf{(Amit: #1)}}}}
\ifcvprfinal\pagestyle{plain}\fi
\begin{document}

%%%%%%%%% TITLE
\title{Camera On-boarding for Person Re-identification using Hypothesis Transfer Learning}

% \author{Sk Miraj Ahmed\\
% University of California, Riverside\\
% Institution1 address\\
% {\tt\small firstauthor@i1.org}

% \and
% Aske R Lejb{\o}lle\\
% Aalborg University\\
% First line of institution2 address\\
% {\tt\small askelejboelle@gmail.com}
% }
\author{Sk Miraj Ahmed$^{1,}$\thanks{Equal Contribution}, Aske R Lejb{\o}lle$^{2,\ast,}$\thanks{This work was done while AL was a visiting student at UC Riverside.}, Rameswar Panda$^{3}$, Amit K. Roy-Chowdhury$^{1}$\\
$^{1}$ University of California, Riverside,  $^{2}$ Aalborg University, Denmark, $^{3}$ IBM Research AI, Cambridge\\
{\tt \small \{sahme047@, alejboel@, rpand002@,  amitrc@ece.\}ucr.edu}
}
\maketitle

%\thispagestyle{empty}

%%%%%%%%% ABSTRACT
\begin{abstract}
%   Most person re-identification problem uses supervised approaches to learn the metric between two camera pairs. Practically, it is very hard to annotate and label all the persons. So, in general the supervised methods are not scalable. For that reason people adopted transfer learning approaches, where the knowledge from metrics learnt in one camera network i.e the source network is transferred to another network named target network. These requires limited supervision or no supervision on the target domain. Most of the existing transfer learning approaches use the source domain features for the training of the target dataset. There are very few approaches which does not use source features, but they require fine tuning and re-training of the source classifier. In, that context we have used a newer approach in the transfer learning literature named hypothesis transfer learning, which does not use  the source features and it uses one training procedure as well. This method also requires very limited amount of labelled target data, which is the main objective here, to increase scalability.

Most of the existing approaches for person re-identification consider a static setting where the number of cameras in the network is fixed. An interesting direction, which has received little attention, is to explore the dynamic nature of a camera network, where one tries to adapt the existing re-identification models after on-boarding new cameras, with little additional effort. There have been a few recent methods proposed in person re-identification that attempt to address this problem by assuming the labeled data in the existing network is still available while adding new cameras. This is a strong assumption since there may exist some privacy issues for which one may not have access to those data. Rather, based on the fact that it is easy to store the learned re-identifications models, which mitigates any data privacy concern, we develop an efficient model adaptation approach using hypothesis transfer learning that aims to transfer the knowledge using only source models and limited labeled data, but without using any source camera data from the existing network. Our approach minimizes the effect of negative transfer by finding an optimal weighted combination of multiple source models for transferring the knowledge. Extensive experiments on four challenging benchmark datasets with a variable number of cameras well demonstrate the efficacy of our proposed approach over state-of-the-art methods.

\end{abstract}

%%%%%%%%% BODY TEXT
\section{Introduction}
Person re-identification (re-id), which addresses the problem of matching people across different cameras, has attracted intense attention in recent years~\cite{zheng2016person,gou2018systematic,roy2012camera}. Much progress has been made in developing a variety of methods to learn features \cite{Liao_2015_CVPR,matsukawa2016hierarchical,matsukawa2019hierarchical} or distance metrics by exploiting unlabeled and/or manually labeled data. Recently, deep learning methods have also shown significant performance improvement on person re-id~\cite{ahmed2015improved,li2018harmonious,sun2019perceive,tay2019aanet,yang2019towards,zheng2019joint}.
However, with the notable exception of \cite{panda2017unsupervised,panda2019adaptation}, most of these works have not yet considered the dynamic nature of a camera network, where new cameras can be introduced at any time to cover a certain related area that is not well-covered by the existing network of cameras. To build a more scalable person re-identification system, it is very essential to consider the problem of how to on-board new cameras into an existing network with little additional effort.
%\amit{I added usepackage[numbers,sort]{natbib} - makes refs in order.}

\begin{figure}[t]
\hspace{-1em}
\label{cam_insert}
\includegraphics[width=0.5\textwidth, height=2.5in]{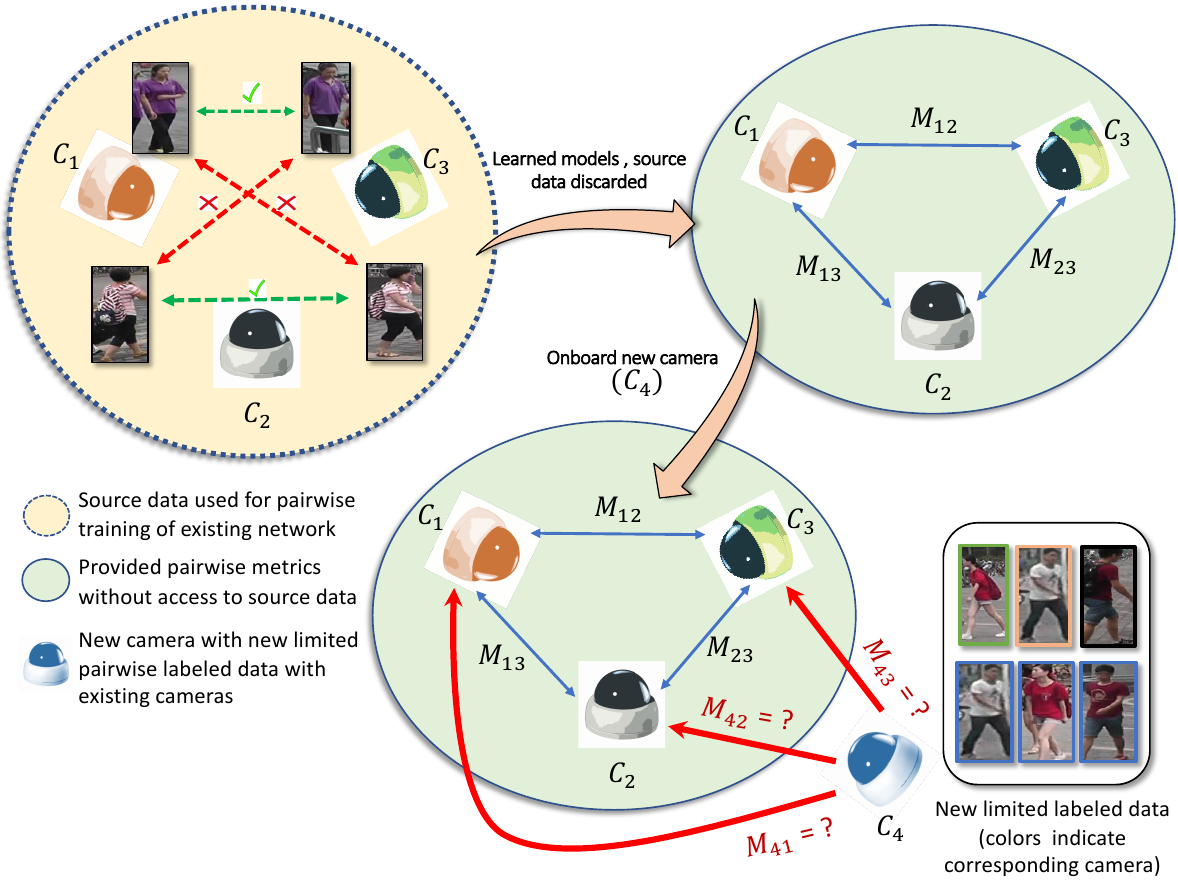}
\caption{Consider a three camera ($C_1$, $C_2$ and $C_3$) network, where we have only three pairwise distance metrics ($M_{12}$, $M_{23}$ and $M_{13}$) available for matching persons, and no access to the labeled data due to privacy concerns. A new camera, $C_4$, needs to be added into the system quickly, thus, allowing us to have only very limited labeled data across the new camera and the existing ones. Our goal in this paper is to learn the pairwise distance metrics ($M_{41}$, $M_{42}$ and $M_{43}$) between the newly inserted camera(s) and the existing cameras, using the learned source metrics from the existing network and a small amount of labeled data available after installing the new camera(s).}
%\amit{Re-read. I want to talk about this fig.}

\label{fig:network}
\end{figure}

Let us consider $K$ number of cameras in a network for which we have learned $\binom{K}{2}$ number of optimal pairwise matching metrics, one for each camera pair (see Figure~\ref{fig:network} for an illustrative example). However, during an operational phase of the system, new camera(s) may be temporarily introduced to collect additional information, which ideally should be integrated with minimal effort.
Given newly introduced camera(s), 
%most of the prior
the traditional re-id methods aim to re-learn the pairwise matching metrics using a costly training phase. This is impractical in many situations where the newly added camera(s) need to be operational soon after they are added. In this case, we cannot afford to wait a long time to obtain significant amount of labeled data for learning pairwise metrics, thus, we only have limited labeled data of persons that appear in the entire camera network after addition of the new camera(s).

Recently published works~\cite{panda2017unsupervised,panda2019adaptation} attempt to address the problem of on-boarding new cameras to a network by utilizing old data that were collected in the original camera network, combined with newly collected data in the expanded network, and source metrics to learn new pairwise metrics. They also assume the same set of people in all camera views, including the new camera (i.e., before and after on-boarding new cameras) for measuring the view similarity. However, this is unrealistic in many surveillance scenarios as source camera data may have been lost or not accessible due to privacy concerns. Additionally, new people may appear after the target camera is installed who may or may not have appeared in existing cameras. Motivated by this observation, we pose an important question: \textit{How can we swiftly on-board new camera(s) in an existing re-id framework (i) without having access to the source camera data that the original network was trained on, and (ii) relying upon only a small amount of labeled data during the transient phase, i.e., after adding the new camera(s).}

Transfer learning, which focuses on transferring knowledge from a source to a target domain, has recently been very successful in various computer vision problems \cite{lv2018unsupervised,zamir2018taskonomy,sun2019not,yin2019feature,noh2019transfer}. However, knowledge transfer in our system is challenging, because of limited labeled data and absence of source camera data while on-boarding new cameras. To solve these problems, we develop an efficient model adaptation approach using \textit{hypothesis transfer learning} that aims to transfer the knowledge using only source models (i.e., learned metrics) and limited labeled data, but without using any original source camera data. \emph{Only a few labeled identities that are seen by the target camera, and one or more of the source cameras, are needed for effective transfer of source knowledge to the newly introduced target cameras.} Henceforth, we will refer to this as \textit{target data}. Furthermore, unlike~\cite{panda2017unsupervised,panda2019adaptation}, which identify only one best source camera that aligns maximally with the target camera, our approach focuses on identifying an optimal weighted combination of multiple source models for transferring the knowledge.

Our approach works as follows. Given a set of pairwise source metrics and limited labeled target data after adding the new camera(s), we develop an efficient convex optimization formulation based on hypothesis transfer learning~\cite{kuzborskij2013stability,du2017hypothesis} that minimizes the effect of negative transfer from any outlier source metric while transferring knowledge from source to the target cameras. More specifically, we learn the weights of different source metrics and the optimal matching metric jointly by alternating minimization, where the weighted source metric is used as a biased regularizer that aids in learning the optimal target metric only using limited labeled data. The proposed method, essentially, learns which camera pairs in the existing source network best describe the environment that is covered by the new camera and one of the existing cameras.  
Note that our proposed approach can be easily extended to multiple additional cameras being introduced at a time in the network or added sequentially one after another.

\subsection{Contributions}
%We propose a novel approach for on-boarding new cameras for person re-identification that can effectively transfer knowledge from existing source cameras by only accessing the learned pairwise metrics without any source data, and very limited labeled data after the addition of the new camera(s). 
%To the best of our knowledge, it is the first time such a problem has been addressed. 
%It is a critically important one as many applications require new cameras to be added quickly to supplement the information from the existing network. 
%Our work provides both theoretical and experimental validation for the proposed approach. 
We address the problem of swiftly on-boarding new camera(s) into an existing person re-identification network without having access to the source camera data, and  relying upon only a small amount of labeled target data in the transient phase, i.e., after adding the new cameras. Towards solving the problem, we make the following contributions.
\begin{itemize}
    \item We propose a robust and efficient multiple metric hypothesis transfer learning algorithm to efficiently adapt a newly introduced camera to an existing person re-id framework without having access to the source data. 
    \item We theoretically analyse the properties of our algorithm and show that it minimizes the risk of negative transfer and performs closely to fully supervised case even when a small amount of labeled data is available. 
    \item We perform rigorous experiments on multiple benchmark datasets to show the effectiveness of our proposed approach over existing alternatives.
\end{itemize}

\section{Related Work}

\noindent\textbf{Person Re-identification.} Most of the methods in person re-id are based on supervised learning. These methods apply extensive training using lots of manually labeled training data, and can be broadly classified in two categories: (i) \textit{Distance metric learning based} \cite{guillaumin2009you,weinberger2009distance,koestinger2012large,Liao_2015_CVPR,yang2017person,yu2017cross} (ii) \textit {Deep learning based} \cite{ahmed2015improved,xiao2016learning,qian2017multi,zhou2017point,zheng2019joint,wang2019spatial,yang2019towards}. \textit{Distance metric learning based} methods tend to learn distance metrics for camera pairs using pairwise labeled data between those cameras, whereas end-to-end \textit {Deep learning based} methods tend to learn robust feature representations of the persons, taking into consideration all the labeled data across all the cameras at once. %\amit{Do you want to add something like: While deep learning based methods have gained popularity, a recent paper \cite{PAMI paper you mentioned} showed the advantages of pair-wise metrics too.} 
To overcome the problem of manual labeling, several unsupervised \cite{yu2017cross,wang2018transferable,lv2018unsupervised,yu2019unsupervised,yang2019patch,lin2019bottom} and semi-supervised \cite{wu2018exploit,fan2018unsupervised,xin2019semi,wu2019distilled} methods have been developed over the past decade. However, these methods do not consider the case where new cameras are added to an existing network. The most recent approach in this direction \cite{panda2017unsupervised,panda2019adaptation} has considered unsupervised domain adaptation of the target camera by making a strong assumption of accessibility of the source data. None of these methods have considered the fact of not having access to the source data in the dynamic camera network setting. This is relevant, as source camera data might have been deleted after a while due to privacy concerns. \\
% \vspace{1mm}
\noindent\textbf{Hypothesis Transfer Learning.} Hypothesis transfer learning \cite{yang2007cross,mansour2009domain,orabona2009model,kuzborskij2013stability,du2017hypothesis} is a type of transfer learning that uses only the learned classifiers from a source domain to efficiently learn a classifier in the target domain, which contains only limited labeled data. This approach is practically appealing as it does not assume any relationship between source and target distribution, nor the availability of source data, which may be non accessible \cite{kuzborskij2013stability}. Most of the literature has dealt with simple linear classifiers for  transferring knowledge \cite{kuzborskij2013stability,wang2016learning}. One recent work \cite{perrot2015theoretical} has addressed the problem of transferring the knowledge of a source metric, which is a positive semi-definite matrix, with some provable guarantees. However, it has been analyzed for only a single source metric and the weight of the metric is calculated by minimizing a cost function using sub-gradient descent from the generalization bound separately, which is a highly non-convex non-differential function. In \cite{wang2016learning}, the method has addressed transfer of multiple linear classifiers in an SVM framework, where the corresponding weights are calculated jointly with the target classifiers in a single optimization. Unlike these approaches, our approach addresses the case of transfer from multiple source  metrics by jointly optimizing for target metric, as well as the source weights to reduce the risk of negative transfer.
\section{Methodology}

Let us consider a camera network with $K$ cameras for which we have learned a total $N= {\binom{K}{2}}$ pairwise metrics using extensive labeled data.
We wish to install some new camera(s) in the system that need to be operational soon after they are added, i.e., without collecting and labeling lots of new training data. We do not have access to the old source camera data, rather, we only have the pairwise source distance metrics.
%We have access to a total number of $N= {K \choose 2}$ pairwise metrics, which was computed using full supervision of the  source training data. 
Moreover, we also have access to only a limited amount of labeled data across the target and different source cameras, which is collected after installing the new cameras.
%from the target camera where we assume that people seen in the target camera are also seen in at least one source camera. The basic assumption here is that we do not have that training data, rather we only have these pairwise distance metrics. After installing the new camera(s) in the system we collect limited amount of annotated training data for every source to target camera pairs \amit{this creates the impression that the set of people in the target is seen in every source camera, which is not the case. You only need that the people seen in the target camera are also seen in at least one source camera. Clarify.}. 
Using the source metrics and the limited pairwise source-target labeled data, we propose to solve a constrained convex optimization problem (Eq.~\ref{opt:main_opt}) that aims to transfer knowledge from the source metrics to the target efficiently while minimizing the risk of negative transfer. \

\begin{comment}
\begin{itemize}
    %\item $\sum_{(i,j) \in S}{ x_{ij}^\top M x_{ij}}=\|x_i,x_j\|_{MS}^2$
     %\item $\sum_{(i,j) \in D}{ x_{ij}^\top M x_{ij}}=\|x_i,x_j\|_{MD}^2$
     \item $\sum_{(i,j) \in S}{ x_{ij}x_{ij}^\top}=\Sigma_S$
      \item $\sum_{(i,j) \in D}{ x_{ij}x_{ij}^\top}=\Sigma_D$
      \item $\mathcal{C}_1=\{M \in \mathbb{R}^{d \times d} \mid \sum_{(i,j) \in D}(x_{ij}^\top M x_{ij})-b \geq 0\}$
      \item $\mathcal{C}_2=\{M \in \mathbb{R}^{d \times d}\mid M\succeq 0\}$
      \item $\mathcal{C}_3=\{ \beta  \in \mathbb{R}^N \mid \beta \geq 0 \cap  \|\beta\|_2 \leq 1\}$
      \item $\Pi_\mathcal{C}(X)=\underset{\hat{X} \in \mathcal{C}}{\text{minimize}}~~ \frac{1}{2}\|\hat{X}-X\|_F^2$
      \item $f(M,\beta)=\sum_{(i,j) \in S}{ x_{ij}^\top M x_{ij}} + \lambda \|M-\sum_{j=1}^{N}\beta_j M_j\|_F^2$
      \item $\max{(a,b)}=a \quad \forall \quad a>b$
\end{itemize}
\end{comment}

\vspace{1mm}
\noindent\textbf{Formulation.} Suppose we have access to the optimal distance metric $M_{ab} \in \mathbb{R}^{d \times d}$ for the $a$ and $b$-th camera pair of an existing re-id network, where $d$ is the dimension of the feature representation of the person images and $a,b \in \{1,2 \ldots K\}$. We also have limited pairwise labeled data $\{(x_{ij},y_{ij})\}_{i=1}^{C}$ between the target camera $\tau$ and the source camera $s$, where $x_{ij}=(x_i-x_j)$ is the feature difference between image $i$ in camera $\tau$ and image $j$ in camera $s$, $C= \binom{n_{\tau s}}{2}$, where $n_{\tau s}$ is the total number of ordered pair images across cameras $\tau$ and $s$,
%different persons who appeared on the cameras $\tau$ and $p$, 
and $y_{ij} \in \{-1,1\}$. $y_{ij}=1$ if the persons $i$ and $j$ are the same person across the cameras, and $-1$ otherwise. Note that our approach does not need the presence of every person seen in the new target camera across all the source cameras; rather, it is enough for some people in the target camera to be seen in at least one of the source cameras, in order to compute the new distance metric across source-target pairs. Let $S$ and $D$ be defined as $S=\{(i,j) \mid y_{ij}=1\}$ and $D=\{(i,j) \mid y_{ij}=-1\}$. Our main goal is to learn the optimal metric between target and each of the source cameras by using the information from all the pairwise source metrics $\{M_{j}\}_{j=1}^N$ and limited labeled data $\{(x_{ij},y_{ij})\}_{i=1}^{C}$. In standard metric learning context, the distance between two feature vectors $x_i \in \mathbb{R}^d$ and $x_j\in \mathbb{R}^d$ with respect to a metric $M\in \mathbb{R}^{d \times d}$ is calculated by $\sqrt{(x_i-x_j)^\top M (x_i-x_j)}$. 

% The key idea of HTL is that it uses a biased regularization term in the cost function which ensures that the optimal metric does not deviate much from the source metric, which is  a prior mean to the distribution of target to source metric $M_t$. If there are multiple sources, then there is a chance that some metrics are completely outliers. In this case we must have the provision to give those metrics zero weight so as to prevent negative transfer. The practical example in our application is that we don't want to transfer knowledge from a camera pair which is environment wise completely than the desired target to source camera pair.\\
Thus, we formulate the following optimization problem for calculating the optimal metric $M_{\tau s}$ between target camera $\tau$ and the $s$-th source camera, with $n_s$ and $n_d$ number of similar and dissimilar pairs, as follows:
% \begin{alignat*}{6}
% & \underset{M_{\tau p}}{\text{minimize}}~~
%  & &\sum_{(i,j) \in S}{ x_{ij}^\top M_{\tau p} x_{ij}} + \lambda \|M_{\tau p}-\sum_{j=1}^{N}\beta_jM_j\|_F^2 \\
% & \text{subject to}~~
%  &&  \sum_{(i,j) \in D}(x_{ij}^\top M_{\tau p} x_{ij})-b \geq 0 \\
%  &&& \beta \geq 0 \\
%  &&& \|\beta\|_2 \leq 1 \\
%  &&&  M_{\tau p} \succeq 0.
%  \label{optmain}
% \end{alignat*} 
\begin{mini}
{M_{\tau s},\beta}{\frac{1}{n_s} \sum_{(i,j) \in S}{ x_{ij}^\top M_{\tau s} x_{ij}} +  \lambda \|  M_{\tau s}-\sum_{j=1}^{N}\beta_jM_j \| _F^2}
{}{}
\label{opt:main_opt}
\addConstraint {\frac{1}{n_d} \sum_{(i,j) \in D}(x_{ij}^\top M_{\tau s} x_{ij})-b}{\geq 0 ,}{\ M_{\tau s} \succeq 0}
\addConstraint{\beta}{ \geq 0,}{\ \|\beta\|_2}{ \leq 1}
%\addConstraint{\beta}{\geq 0}
%\addConstraint{\|\beta\|_2}{\leq 1}
%\addConstraint{ M_{\tau p}}{\succeq 0}
\end{mini}
The above objective consists of two main terms. The first term is the normalized sum of distances of all similar pair of features between camera $\tau$ and $s$ with respect to the Mahalanobis metric $M_{\tau s}$, and the second term represents the Frobenius norm of the difference of $M_{\tau s}$ and weighted combination of source metrics squared. $\lambda$ is a regularization parameter to balance the two terms. Note that the second term in Eq.~\ref{opt:main_opt} is essentially related to hypothesis transfer learning~\cite{kuzborskij2013stability,du2017hypothesis} where the hypotheses are the source metrics. 
%Minimization Eq.~\ref{opt:main_opt} is equivalent to minimizing the sum of distances of all the similar matching pairs by learning an optimal metric, which is also as close as possible to the weighted combination of the source metrics. 
The first constraint represents that the normalized sum of distances of all dissimilar pairs of features with respect to $M_{\tau s}$ is greater than a user defined threshold $b$, and the second constraints the distance metrics to always lie in the positive semi-definite cone. While the third constraint keeps all the elements of the source weight vector non-negative, the last constraint ensures that the weights should not deviate much from zero (through upper-bounding the $\ell$-2 norm by $1$).

\vspace{1mm}
\noindent\textbf{Notation.} We use the following notations in the optimization steps.
\begin{itemize}
    \item[(a)] $\mathcal{C}_1=\{M \in \mathbb{R}^{d \times d} \mid \frac{1}{n_d}\sum\limits_{(i,j) \in D}(x_{ij}^\top M x_{ij})-b \geq 0\}$
    \item[(b)] $\mathcal{C}_2=\{M \in \mathbb{R}^{d \times d}\mid M\succeq 0\}$
    \item[(c)] $\mathcal{C}_3=\{ \beta  \in \mathbb{R}^N \mid \beta \geq 0 \cap  \|\beta\|_2 \leq 1\}$
\end{itemize}

\vspace{1mm}
\noindent\textbf{Optimization.}
The proposed optimization problem (\ref{opt:main_opt}) is jointly convex over $M_{\tau s}$ and $\beta$. To solve this optimization over large size matrices, we devise an iterative algorithm to efficiently solve (\ref{opt:main_opt}) by alternatively solving for two sub-problems. For the sake of brevity, we denote $M_{\tau s}$ as $M$ in the subsequent steps.
Specifically, in the first step, we fix the weight $\beta$ and  take a gradient step with respect to $M$ in the descent direction with step size $\alpha$ (Eq.~\ref{opt:gradM}). Then, we project the updated $M$ onto $\mathcal{C}_1$ and $\mathcal{C}_2$ in an alternating fashion until convergence (Eq.~\ref{opt:pic1} and Eq.~\ref{opt:pic2}). In the next step, we fix the the updated $M$ and take a step with size $\gamma$ towards the direction of negative gradient with respect to $\beta$ (Eq.~\ref{opt:gradbetawhole}). In the last step, we simply project $\beta$ onto the set $\mathcal{C}_3$ (Eq.~\ref{opt:pic3}).
Algorithm~\ref{algo1} summarizes the alternating minimization procedure to optimize (\ref{opt:main_opt}). We briefly describe these steps below and refer the reader to the supplementary material for more mathematical details.

\begin{algorithm}[]
\SetAlgoLined
\textbf{Input:} Source metric $\{M_{j}\}_{j=1}^N$, $\{(x_{ij},y_{ij})\}_{i=1}^{C}$  \\
\textbf{Output:} Optimal metric $M^\star$ \\
\textbf{Initialization}: $M^k,\beta^k,k=0$\;
 \While{convergence}{
 $M^{k+1}=M^k-\alpha \nabla_M f(M,\beta^k)|_{M=M^k}$ (Eq.~\ref{opt:gradM})\;
  \While{convergence}{
  $M^{k+1}=\Pi_{\mathcal{C}_1}(M^{k+1})$ (Eq.~\ref{opt:pic1})\;
  $M^{k+1}=\Pi_{\mathcal{C}_2}(M^{k+1})$ (Eq.~\ref{opt:pic2})\;
 }
  $\beta^{k+1}= \beta^{k}-\gamma \nabla_\beta(f(M^{k+1},\beta)|_{\beta=\beta^k}$ (Eq.~\ref{opt:gradbetawhole})\;
  $\beta^{k+1}=\Pi_{\mathcal{C}_3}(\beta^{k+1})$ (Eq.~\ref{opt:pic3})\;
  $k=k+1$ \;
 
 }
 \caption{Algorithm to Solve Eq.~\ref{opt:main_opt}} 
 \label{algo1}
\end{algorithm}

% \begin{algorithm}[]
% \SetAlgoLined
% \KwResult{The optimal metric $M^\star$}
%  initialization: $M^k,\beta^k,k=0$ where k is the iteration number\;
%  \While{convergence}{
%  $M^{k+1}=M^k-\alpha \nabla_M f(M,\beta^k)|_{M=M^k}$ \;
%   \While{convergence}{
%   Project $M^{k+1}$ onto the set $\mathcal{C}_1$\;
%   $M^{k+1}=\Pi_{\mathcal{C}_2}(M^{k+1})$\;
%  }
%   $\beta^{k+1}= \beta^{k}-\gamma \nabla_\beta(f(M^{k+1},\beta)|_{\beta=\beta^k}$ \;
%   $\beta^{k+1}=\Pi_{\mathcal{C}_3}(\beta^{k})$ \;
%   $k=k+1$ \;
%   \label{algo1}
%  }
 
% \caption{Pseudo-code for the proposed algorithm} 
% \end{algorithm}

\textbf{Step 1: Gradient w.r.t $M$ with fixed $\beta$.}

% With fixed $\beta^k$, where $\beta^k$ denotes the $\beta$ in iteration number $k$,we compute the gradient of the objective function in optimization \eqref{opt:main_opt} with respect to $M_{\tau p}$ to take a descent direction towards the negative gradient with to get updated $M_{\tau p}^{k+1}$ of the $(k+1)$-th iteration as follows,
With $k$ being the iteration number and $M^k$, $\beta^k$ being $M$ and $\beta$ in the $k$-th iteration, we compute the  gradient of the objective function (\ref{opt:main_opt}) with respect to $M$ by fixing $\beta=\beta^k$ at the $k$-th iteration as follows:
%\amit{Please use $left($ etc.}
% \begin{equation}
% \begin{split}
%     \nabla_M(f(M,\beta)) &= \sum_{(i,j) \in S}x_{ij} x_{ij}^\top +2 \lambda (M-\sum_{j=1}^{N}\beta_jM_j)\\
%     &= \Sigma_S+2 \lambda (M-\sum_{j=1}^{N}\beta_jM_j)
%     \end{split}
% \end{equation}
\begin{equation}
\nabla_M f(M,\beta^k)|_{M=M^k}=\Sigma_S+2 \lambda (M^k-\sum_{j=1}^{N}\beta_j^k M_j),\\
% M^{k+1}=M^k-\alpha (\Sigma_S+2 \lambda (M^k-\sum_{j=1}^{N}\beta_j^k M_j))
\label{opt:gradM}
\end{equation}
where $\scriptsize {\Sigma_S=\frac{1}{n_s}\sum\limits_{(i,j) \in S}{ x_{ij}x_{ij}^\top}}$  and $\scriptsize {f(M,\beta^k)=\frac{1}{n_s}\sum\limits_{(i,j) \in S}{ x_{ij}^\top M x_{ij}} + \lambda \|M-\sum\limits_{j=1}^{N}\beta_j^k M_j\|_F^2}$.

\textbf{Step 2: Projection of $M$ onto $\mathcal{C}_1$ and $\mathcal{C}_2$.}
The projection of $M$ onto $\mathcal{C}_1$ (denoted as $\Pi_{\mathcal{C}_1}(M)$) can be computed by solving a constrained optimization as follows: 

\begin{alignat*}{6}
\Pi_{\mathcal{C}_1}(M) =&\text{arg}\ \underset{\hat{M}}{\text{min}}~~&& \frac{1}{2} \|\hat{M}-M\|_F^2 \\
& \text{Subject to} && \frac{1}{n_d}\sum_{(i,j) \in D}(x_{ij}^\top \hat{M} x_{ij})-b \geq 0 \\
\end{alignat*} 

By writing the Lagrange for the above constrained optimization and using KKT conditions with strong duality, the projection of $M$ onto $\mathcal{C}_1$ can be written as 
\begin{equation}
    \Pi_{\mathcal{C}_1}(M)= M + \max\left \{0,\frac{\left(b-\frac{1}{n_d}\sum\limits_{(i,j) \in D}x_{ij}^\top M x_{ij}\right)}{\|\Sigma_D\|_F^2}\ \right \} \Sigma_D,
    \label{opt:pic1}
\end{equation}
where $\scriptsize{\Sigma_D = \frac{1}{n_d}\sum\limits_{(i,j) \in D}{ x_{ij}x_{ij}^\top}}$.
Similarly, using spectral value decomposition, the projection of $M$ onto $\mathcal{C}_2$ can be written as  
\begin{equation}
   % V \textit{diag}(\max \{\lambda_i,0\}) V^\top
   \Pi_{\mathcal{C}_2}( M)=V\textit{diag}(\begin{bmatrix}\hat{\lambda_1} & \hat{\lambda_2} \ldots \hat{\lambda_n} \end{bmatrix}) V^\top,
   \label{opt:pic2}
\end{equation}
where $V$ is the eigenvector matrix of $M$, $\lambda_i$ is the $i$-th eigenvalue of $M$ and $\hat{\lambda_j}=\max \{\lambda_j,0\} \quad \forall \quad j \in \begin{bmatrix} 1 \ldots d\end{bmatrix}$.

\textbf{Step 3: Gradient w.r.t $\beta$ with fixed $M$.} By fixing $M=M^{k+1}$ in the objective function, differentiating it w.r.t $\beta_i$, the $i$-th element of $\beta$ at the point $\beta=\beta^k$, we get 
% \begin{equation}
% \begin{split}
%     \Phi(\beta) &= f(M,\beta) \\
%     &=\sum_{(i,j) \in S}{ x_{ij}^\top M x_{ij}} + \lambda \|M-\sum_{j=1}^{N}\beta_j M_j\|_F^2 \\
%     % &= K+  \lambda \|M-\sum_{j=1}^{N}\beta_j M_j\|_F^2 \\
%     % &= K+  \lambda \mathrm{trace}\Big((M-\sum_{j=1}^{N}\beta_j M_j)^\top(M-\sum_{j=1}^{N}\beta_j M_j)\Big)  \\
%     &=\sum_{(i,j) \in S}{ x_{ij}^\top M x_{ij}} + \lambda(\beta_i^2 \mathrm{trace}(M_i^\top M_i) \\
%     &-2 \beta_i  \mathrm{trace}( M_i^\top (M  -\sum_{j=1,j \neq i}^{N}\beta_j M_j) ))
%     \end{split}
%     \label{phibeta}
% \end{equation}
%Here $K$ is a term which does not depend on $\beta$.
% differentiating equation~\eqref{phibeta} with respect to $\beta_i$ we get ,
\begin{equation}
\begin{split}
    \nabla_{\beta_i}(f(M^{k+1},\beta))|_{\beta_i=\beta_i^k} = 2 \lambda \beta_i^k  \mathrm{trace}(M_i^\top M_i) - \\ 2 \lambda \mathrm{trace}( M_i^\top (M^{k+1}-\sum_{j=1,j \neq i}^{N}\beta_j^k M_j) ) 
\end{split}
\label{opt:gradbeta}
\end{equation}
By denoting $\nabla_{\beta_i}(f(M^{k+1},\beta))|_{\beta_i=\beta_i^k}$ as $a_i^k$, we get
% So, derivative of $\Phi(\beta)$ w.r.t $\beta$ is given by,
\begin{equation}
  \nabla_\beta(f(M^{k+1},\beta))|_{\beta=\beta^k}  = \begin{bmatrix} 
a_1^k   & a_2^k   & \ldots & a_N^k \\
\end{bmatrix}^\top
\label{opt:gradbetawhole}
\end{equation}
\textbf{Step 4: Projection of $\beta$ onto $\mathcal{C}_3$.}
This step essentially projects a vector to the first quadrant of an $N$-dimensional unit norm hyper-sphere. 
%If $\beta \in \mathbb{R}^N$ has $\ell$-2 norm less than $1$, we keep it as it is, otherwise we normalize the vector to make sure that it lies within the unit norm hyper-sphere. After this step, we only keep the non-negative elements of the updated $\beta$ by making all the negative elements equal to zero to ensure that the vector lies in the first quadrant only. 
The closed form expression of the projection onto $\mathcal{C}_3$ is as follows:
\begin{equation}
    \Pi_{\mathcal{C}_3} (\beta^{k+1}) = \max \Big\{0,\frac{\beta^{k+1}}{\max \{1,\|\beta^{k+1}\|_2 \}}\Big\}
    \label{opt:pic3}
\end{equation}

\section{Discussion and Analysis}
\label{sec:insight}
One of the key differences between our approach and existing methods is that the nature of our problem deals with the multiple metric setting within the hypothesis transfer learning framework. In this section, following~\cite{perrot2015theoretical}, we theoretically analyze the properties of our Algorithm~\ref{algo1} for transferring knowledge from multiple metrics.

Let $\mathcal{T}$ be a domain defined over the set ($\mathcal{X} \times \mathcal{Y}$) where $\mathcal{X} \subseteq \mathbb{R}^d$ and $\mathcal{Y} \in \{-1,1\}$ denote the feature and label set, respectively, and has a probability distribution denoted by $\mathcal{D}_\mathcal{T}$. Let $T$ be the target domain defined by $\{(x_i,y_i)\}_{i=1}^{n}$ consisting of $n$ i.i.d samples, each drawn from the distribution $\mathcal{D}_\mathcal{T}$. The optimization proposed in Eq.1 of \cite{perrot2015theoretical} (page. 2) is defined as: 
\begin{equation}
     \underset{M \succeq 0}{\text{minimize}}~~L_T(M)+\lambda \|M-M_S\|_F^2
     \label{perrotopt}
\end{equation}
Fixing the value of $\beta$ in our proposed optimization (\ref{opt:main_opt}), we have an optimization problem equivalent to \eqref{perrotopt}, where $M_S= \sum_{j=1}^{N} \beta_j M_j$ and
\begin{equation}
   L_T(M)=\frac{1}{n_s}\sum_{(i,j) \in S}{ x_{ij}^\top M x_{ij}}+\mu^\star\big(b-\frac{1}{n_d}\sum_{(i,j) \in D}x_{ij}^\top M x_{ij}\big)
   \label{LTM}
\end{equation}
 Note that $\mu^\star$ in Eq.~\ref{LTM} is the optimal dual variable for the inequality constraint optimization \eqref{opt:main_opt} with the weight vector fixed. Clearly, the expression is linear, hence convex in $M$, and has a finite Lipschitz constant $k$.
\begin{thm}
For the convex and $k$-Lipschitz loss (shown in supp) defined  in~\eqref{LTM} the average bound can be expressed as
\begin{equation}
    \mathbb{E}_{T \sim \mathcal{D}_{\mathcal{T}^n}}[L_{\mathcal{D}_\mathcal{T}}(M^\star)] \leq L_{\mathcal{D}_\mathcal{T}}(\widehat{M_S}) + \frac{8k^2}{\lambda n},
    \label{ineq}
\end{equation}
where $n$ is the number of target labeled examples, $M^\star$ is the optimal metric computed from Algorithm~\ref{algo1}, $\widehat{M_S}$ is the average of all source metrics defined as $\frac{\sum_{j=1}^{N} M_j}{N}$, $\mathbb{E}_{T \sim \mathcal{D}_{\mathcal{T}^n}}[L_{\mathcal{D}_\mathcal{T}}(M^\star)]$ is the expected loss by $M^\star$ computed over distribution $\mathcal{D}_\mathcal{T}$ and $L_{\mathcal{D}_\mathcal{T}}(\widehat{M_S})$ is the loss of average of source metrics computed over  $\mathcal{D}_\mathcal{T}$. 
\label{thm1}
\end{thm}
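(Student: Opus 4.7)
The plan is to mimic the stability-based argument of \cite{perrot2015theoretical} for single-source hypothesis transfer metric learning, exploiting the fact that the particular weight vector $\beta_0=(1/N,\ldots,1/N)^\top$ is admissible in our constraint set $\mathcal{C}_3$ and produces exactly the average metric $\widehat{M_S}$.

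First I would verify feasibility: $\beta_0\in\mathcal{C}_3$ since $\beta_0\ge 0$ coordinatewise and $\|\beta_0\|_2=1/\sqrt{N}\le 1$, and by linearity $\sum_j\beta_{0,j}M_j=\widehat{M_S}$. Because every source metric $M_j$ is PSD, $\widehat{M_S}$ is PSD too, so $(\widehat{M_S},\beta_0)$ is a feasible point of~(\ref{opt:main_opt}) (the dissimilar-pair margin constraint at $\widehat{M_S}$ is inherited from the source metrics, possibly up to a rescaling of $b$). The joint optimality of $(M^\star,\beta^\star)$ against the competitor $(\widehat{M_S},\beta_0)$ then yields, using $\widehat{M_S}-\sum_j\beta_{0,j}M_j=0$,
\begin{equation*}
L_T(M^\star)+\lambda\bigl\|M^\star-\textstyle\sum_{j}\beta^\star_j M_j\bigr\|_F^2 \;\le\; L_T(\widehat{M_S}).
\end{equation*}
Taking expectation over $T\sim\mathcal{D}_\mathcal{T}^{\,n}$ and using that $\widehat{M_S}$ is independent of the sample, so $\mathbb{E}_T[L_T(\widehat{M_S})]=L_{\mathcal{D}_\mathcal{T}}(\widehat{M_S})$, and dropping the nonnegative Frobenius term, gives the empirical comparison $\mathbb{E}_T[L_T(M^\star)]\le L_{\mathcal{D}_\mathcal{T}}(\widehat{M_S})$.

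To promote this to a true-risk bound on $M^\star$, I would invoke a Bousquet--Elisseeff uniform-stability bound of order $4k^2/(\lambda n)$ for the map $T\mapsto M^\star_T$. The key observation is that at the joint optimum the regularizer collapses to $\lambda\|M-M(\beta^\star)\|_F^2$, which is $2\lambda$-strongly convex in $M$; one can then replay the two-sided optimality inequality of \cite{perrot2015theoretical} between $(M^\star_T,\beta^\star_T)$ and $(M^\star_{T^i},\beta^\star_{T^i})$ (with $T^i$ differing from $T$ in one example) to obtain $\|M^\star_T-M^\star_{T^i}\|_F=O(k/(\lambda n))$. Composing this with the $k$-Lipschitz property of $L_T$ gives $\tfrac{4k^2}{\lambda n}$-uniform stability, and standard stability theory then yields
\begin{equation*}
\mathbb{E}_T\bigl[L_{\mathcal{D}_\mathcal{T}}(M^\star)-L_T(M^\star)\bigr] \;\le\; \frac{8k^2}{\lambda n}.
\end{equation*}
Adding this to the empirical comparison produces the claimed inequality~(\ref{ineq}).

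The main obstacle is the stability step, because the ``effective'' regularizer $\phi(M)=\min_{\beta\in\mathcal{C}_3}\|M-\sum_j\beta_j M_j\|_F^2$ is convex but not uniformly strongly convex in $M$ (it is flat along the low-dimensional image of $\mathcal{C}_3$ under $\beta\mapsto M(\beta)$), so a naive application of Bousquet--Elisseeff to the $M$-marginal fails. I would resolve this by conditioning on the realized $\beta^\star$ and applying the two-sided optimality inequality only in the $M$-coordinate, so that the $2\lambda$-strong convexity of $\|M-M(\beta^\star)\|_F^2$, rather than of $\phi$, drives the contraction; the $\beta$-coordinate is then controlled by the boundedness of $\mathcal{C}_3$. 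Verifying the explicit Lipschitz constant $k$ of $L_T$ in~(\ref{LTM}) (deferred by the authors to the supplementary) is a prerequisite I would check first, since its value feeds directly into the $8k^2/(\lambda n)$ constant.
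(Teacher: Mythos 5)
Your proposal takes essentially the same route as the paper: the paper's entire proof consists of (i) observing that Perrot et al.'s Theorem~2 applies verbatim with $M_S$ replaced by $\sum_j\beta_jM_j$ for any fixed $\beta\in\mathcal{C}_3$, and (ii) instantiating at the uniform weights $\beta'=(\tfrac{1}{N},\ldots,\tfrac{1}{N})^\top$, whose feasibility ($\beta'\ge 0$, $\|\beta'\|_2=\tfrac{1}{\sqrt{N}}\le 1$) you verify identically. The only difference is that you attempt to re-derive the $\tfrac{8k^2}{\lambda n}$ stability term from scratch rather than citing it, and in doing so you flag a genuine subtlety the paper glosses over: the cited bound is for the minimizer at a \emph{fixed} $\beta$, whereas $M^\star$ is the joint minimizer, and the joint objective is not strongly convex in $(M,\beta)$; your proposed fix (conditioning on $\beta^\star$) is not fully watertight since $\beta^\star_T$ and $\beta^\star_{T^i}$ differ across perturbed samples, but on this point you are, if anything, more careful than the paper itself.
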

\begin{proof}
% If there is a single source metric is available for transfer , the is derived in \cite{perrot2015theoretical}. In case of multiple metric for any fixed $\beta$, we can directly replace $M_S$ by $\sum_{j=1}^{N} \beta_j M_j$ in the \textbf{Theorem 2} in \cite{} to get,
% \begin{equation}
%     \mathbb{E}_{T \sim \mathcal{D}_{\mathcal{T}^n}}[L_{\mathcal{D}_\mathcal{T}}(M^\star)] \leq L_{\mathcal{D}_\mathcal{T}}\big(\sum_{j=1}^{N} \beta_j M_j\big) + \frac{8k^2}{\lambda n}
%     \label{ineqbeta}
%\end{equation}

% which is true $\forall \beta \in \mathcal{C}_3$. Clearly without loss of generality we can write  $\beta=\beta^\prime$ where,
% \begin{equation}
% \beta^\prime= 
%     \begin{bmatrix}
%     \frac{1}{N} &\frac{1}{N} &\ldots &\frac{1}{N}
%     \end{bmatrix}^\top \in \mathcal{C}_3
% \end{equation}
% since, $\beta^\prime \geq 0$ and $\|\beta\|_2 =\frac{1}{\sqrt{N}} \leq 1$. So, plugging $\beta^\prime$ in equation~\eqref{ineqbeta} we get equation~\eqref{ineq}, which completes the proof.
The proof is given in supplementary material.
\end{proof}

\noindent \textbf{Implication of Theorem}~\ref{thm1}: 
Since we transfer knowledge from multiple source metrics, and do not know which is the most generalizable over the target distribution (i.e., the best source metric), the most sensible thing is to check for the average performance of using each of the source metrics directly over the target test data. It is equivalently giving all the source metrics equal weights and not using any of the target data for training purpose. The bound in Theorem~\eqref{LTM} shows that, on average, the  metric learned form Algorithm~\ref{algo1} tends to do better than, or in worst case, at least equivalent to the average of source metrics with a fast convergence rate of $\mathcal{O}(\frac{1}{n})$ with limited number of target samples~\cite{perrot2015theoretical}. 
%Faster convergence means that the term $\frac{8k^2}{\lambda n} \rightarrow 0$ with limited number of target samples, since 
%$(\frac{1}{n})$ converges faster than $(\frac{1}{\sqrt{n}})$ which is the standard argument  of order in the generalization bound ($\mathcal{O}(\frac{1}{\sqrt{n}})$ ) in case of learning with full supervision \cite{perrot2015theoretical}. 
\\

\begin{comment}
This phenomenon happens automatically because of the way we setup our optimization problem. Let's say by optimizing \ref{opt:main_opt} we get a $\beta$ vector in which the weights corresponding to the outlier source metrics has such a weight so that the learnt $M$ is hurting the performance on target data by allowing considerable negative transfer. In this case, obviously $\beta$ should belong to $\mathcal{C}_3$, as per the way optimization \ref{opt:main_opt} is designed. Now we can construct a $\Tilde{\beta}$ which has all the elements same as $\beta$ except for the weights corresponding to outlier metrics, which are lesser than the corresponding elements of $\beta$. It is obvious from the definition that  $\Tilde{\beta}$ is also in  $\mathcal{C}_3$ and it results in lesser objective value in \ref{opt:main_opt} than $\beta$ by giving source outlier metrics lesser weightage. So it is obvious to get $\Tilde{\beta}$ instead of $\beta$ as a result of the optimizaton. Thus it proves that optimization \ref{opt:main_opt} always tries to minimize the risk of negative transfer as much as possible.
\end{comment}

\begin{thm}
With probability $(1-\delta)$, for any metric $M$ learned from Algorithm~\ref{algo1} we have,
\begin{equation}
\begin{aligned}
    L_{\mathcal{D}_\mathcal{T}}(M) \leq &L_T(M) + \mathcal{O}\big(\frac{1}{n}\big) + \\ \Bigg(\sqrt{\frac{L_T(\sum_{j=1}^{N} \beta_{j} M_j)}{\lambda}}+ 
    &\|  \sum_{j=1}^{N} \beta_j M_j\|_F \Bigg) \sqrt{\frac{\ln(\frac{2}{\delta})}{2n}},
\end{aligned}
\label{gbound}
\end{equation}
\label{thm2}
where $L_{\mathcal{D}_\mathcal{T}}(M)$ is the loss over the original target distribution (true risk), $L_T(M)$ is the loss over the existing target data (empirical risk), and $n$ is the number of target samples. 
\end{thm}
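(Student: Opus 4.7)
The plan is to invoke the uniform algorithmic stability framework of Bousquet--Elisseeff, as adapted to regularised metric learning in the Perrot--Habrard paper already cited for Theorem~\ref{thm1}. Once $\beta$ is fixed, optimisation~(\ref{opt:main_opt}) reduces to minimising $L_T(M) + \lambda \| M - M_S\|_F^2$ over the PSD cone, with $M_S := \sum_{j=1}^{N} \beta_j M_j$. The regulariser is $2\lambda$-strongly convex in $M$ (Frobenius norm), and by the same hypothesis used for Theorem~\ref{thm1}, the empirical loss $L_T$ is convex and $k$-Lipschitz in $M$. These are exactly the two ingredients needed for a uniform stability argument, so Theorem~\ref{thm2} will follow by combining a stability bound with a concentration inequality and an a~posteriori bound on $\|M\|_F$.

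The first step is to establish uniform stability. Let $T$ and $T'$ be two target training sets of size $n$ that differ in a single labelled pair, and let $M$ and $M'$ denote the corresponding minimisers of the regularised objective. Using $2\lambda$-strong convexity, together with the $k$-Lipschitzness of $L_T$ in $M$, a standard convex-analysis argument yields $\|M - M'\|_F \le \mathcal{O}(k/(\lambda n))$, and then a second application of Lipschitzness gives
\begin{equation*}
\sup_{z} |\ell(M,z) - \ell(M',z)| \;\le\; \frac{Ck^2}{\lambda n} \;=:\; \beta_{\mathrm{stab}},
\end{equation*}
so the algorithm is uniformly $\beta_{\mathrm{stab}}$-stable. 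Plugging this into the Bousquet--Elisseeff generalisation bound, with probability at least $1-\delta$,
\begin{equation*}
L_{\mathcal{D}_\mathcal{T}}(M) \;\le\; L_T(M) \;+\; 2\beta_{\mathrm{stab}} \;+\; \bigl(4n\beta_{\mathrm{stab}} + B\bigr)\sqrt{\tfrac{\ln(2/\delta)}{2n}},
\end{equation*}
where $B$ is any almost-sure upper bound on the loss. The term $2\beta_{\mathrm{stab}}$ produces the $\mathcal{O}(1/n)$ summand in~(\ref{gbound}), and $4n\beta_{\mathrm{stab}} = \mathcal{O}(k^2/\lambda)$ is a constant absorbed into that summand.

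The second step is to control $B$ using the optimality of $M$. Since $M_S$ is itself a feasible comparator (PSD as a non-negative combination of PSD source metrics), evaluating the regularised objective at $M = M_S$ gives
\begin{equation*}
L_T(M) + \lambda \|M - M_S\|_F^2 \;\le\; L_T(M_S),
\end{equation*}
hence $\|M - M_S\|_F \le \sqrt{L_T(M_S)/\lambda}$, and by the triangle inequality $\|M\|_F \le \|M_S\|_F + \sqrt{L_T(M_S)/\lambda}$. Since $L_T$ is $k$-Lipschitz and vanishes at $M = 0$, one obtains $|\ell(M,z)| \le k\|M\|_F$ almost surely, so we may take
\begin{equation*}
B \;\le\; k\Bigl(\|M_S\|_F + \sqrt{L_T(M_S)/\lambda}\Bigr).
\end{equation*}
Substituting this into the stability bound, absorbing the $k$- and $\lambda$-dependent constants into the $\mathcal{O}(1/n)$ term, and recalling $M_S = \sum_j \beta_j M_j$, gives exactly the inequality claimed in~(\ref{gbound}).

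The main obstacle I anticipate is the stability step under the Lagrangian form of $L_T$ in~(\ref{LTM}), because the dual variable $\mu^\star$ implicitly depends on the training set and could in principle inflate the effective Lipschitz constant when comparing minimisers on $T$ and $T'$. To handle this cleanly, I would either argue via the constrained formulation directly (so $L_T$ is just the ``similar-pair'' term on the feasible PSD/margin set, which is clearly Lipschitz with a data-independent constant), or show that $\mu^\star$ stays uniformly bounded when a single sample is perturbed; either route preserves the $\mathcal{O}(k^2/(\lambda n))$ stability rate and lets the remainder of the proof go through verbatim.
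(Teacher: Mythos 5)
Your proposal is correct and follows essentially the same route as the paper: the paper likewise invokes the Bousquet--Elisseeff-style stability bound of Perrot and Habrard (their Theorem~4), establishes $(\sigma,m)$-admissibility with $\sigma=0$ and $m$ proportional to $\sqrt{L_T(\sum_{j}\beta_j M_j)/\lambda}+\|\sum_{j}\beta_j M_j\|_F$ by exactly your comparator argument at $M_S=\sum_{j}\beta_j M_j$, and absorbs the remaining constants (including the additive offset $\gamma$ in the loss, which means the loss does not actually vanish at $M=0$ as you claim, but this only perturbs the constant $c$ that the paper also discards) into the $\mathcal{O}(1/n)$ term. Your concern about the data-dependence of $\mu^\star$ is legitimate, but the paper glosses over it in the same way, treating $\mu^\star$ as a fixed constant in its Lipschitz bound.
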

\begin{proof}
See the supplementary material for the proof.
\end{proof}

\noindent \textbf{Implication of Theorem}~\ref{thm2}: This bound shows that given only a small amount of labeled target data, our method performs closely to the fully supervised case.
% if some of the source metrics generalize well for the target distribution. 
The right hand side of the inequality~\eqref{gbound} consists of the term $\mathcal{O}\big(\frac{1}{n}\big)+\Phi(\beta) \mathcal{O}\big(\frac{1}{\sqrt{n}}\big)$. Since the optimal weight $\beta^\star$ from optimization~\eqref{opt:main_opt} will be sparse due to the way $\beta$ is constrained, zero weights will automatically be assigned to the outlier metrics, i.e., outlier $M_j$s, resulting in zero values for the terms $\beta_k^\star L_T(M_j)$ corresponding to those indices $j$ and hence smaller value of $\Phi(\beta)$. As a result, the $\mathcal{O}\big(\frac{1}{\sqrt{n}}\big)$ term will be less dominant in~\eqref{gbound} than $\mathcal{O}\big(\frac{1}{n}\big)$, due to smaller associated coeffiecient $\Phi(\beta^\star)$ and, hence, can be ignored. Thus, due to the faster decay rate of $\mathcal{O}\big(\frac{1}{n}\big)$, this implies that with very limited target data, the empirical risk will converge to the true risk. 
Furthermore, when $n$ is very large (the fully supervised case), $\mathcal{O}\big(\frac{1}{\sqrt{n}}\big)$ will be close to zero and cannot be altered by multiplication with any coefficient. This implies that the source metrics will not have any effect on learning when there is enough labeled target data available and are only useful in the presence of limited data as in our application domain.\\
\noindent \textbf{Negative Transfer}: In optimization~(\ref{opt:main_opt}), we jointly estimate the optimal metric, as well as the weight vector, which determines which source to transfer from and with how much weight. If a source metric is not a good representative of the target distribution, for an optimal $\lambda$, the weight associated to this metric will automatically be set to zero or close to zero by optimization~(\ref{opt:main_opt}), due to the sparsity constraint of $\beta$.
% does not generalize well across the target distribution, for an optimal $\lambda$, giving more weight to that metric will increase the loss in Eq.~\ref{perrotopt}. 
% It is evident that our proposed optimization automatically prevents this increment since we have a constraint, where the weights can be zero. 
Hence, our approach minimizes the risk of negative transfer. 

\begin{figure*}[ht]
\centering
\begin{subfigure}{0.24\textwidth}
\includegraphics[width=\textwidth]{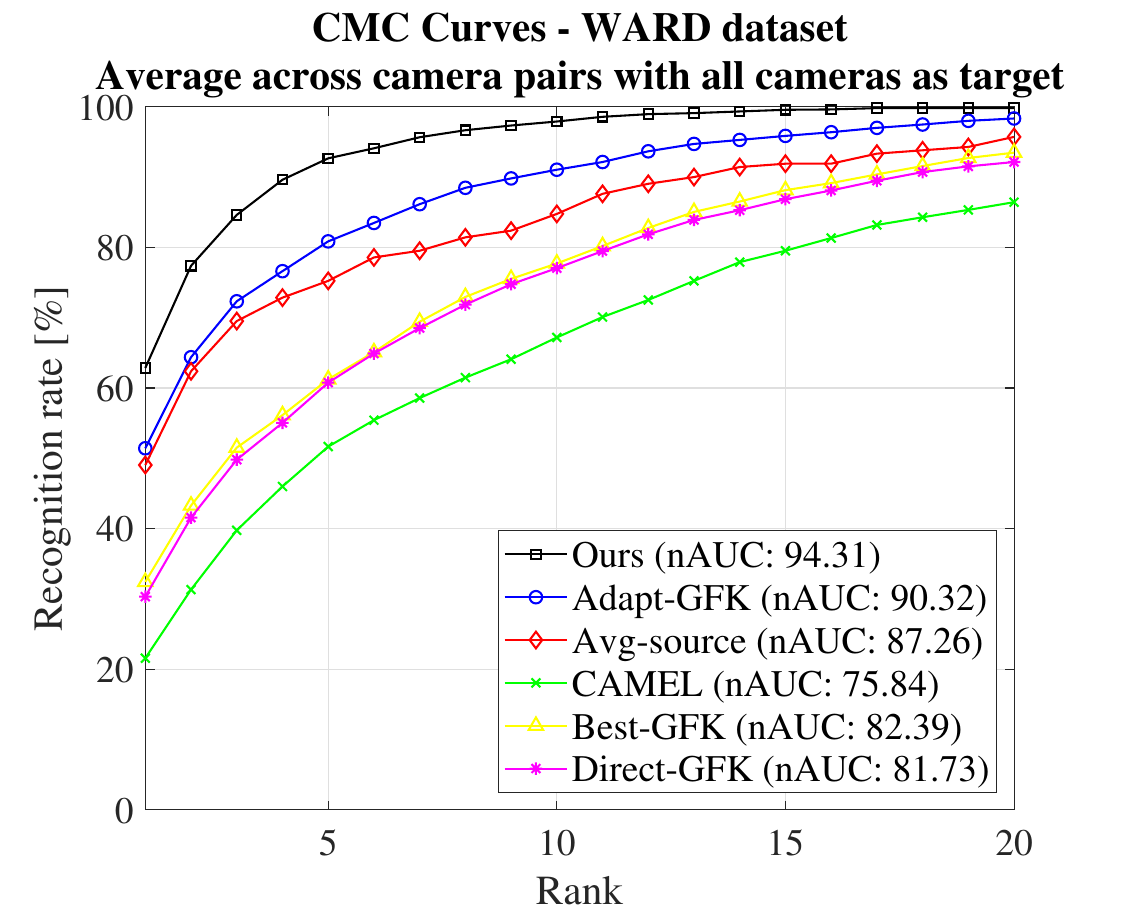}
\caption{}
\end{subfigure}
\begin{subfigure}{0.24\textwidth}
\includegraphics[width=\textwidth]{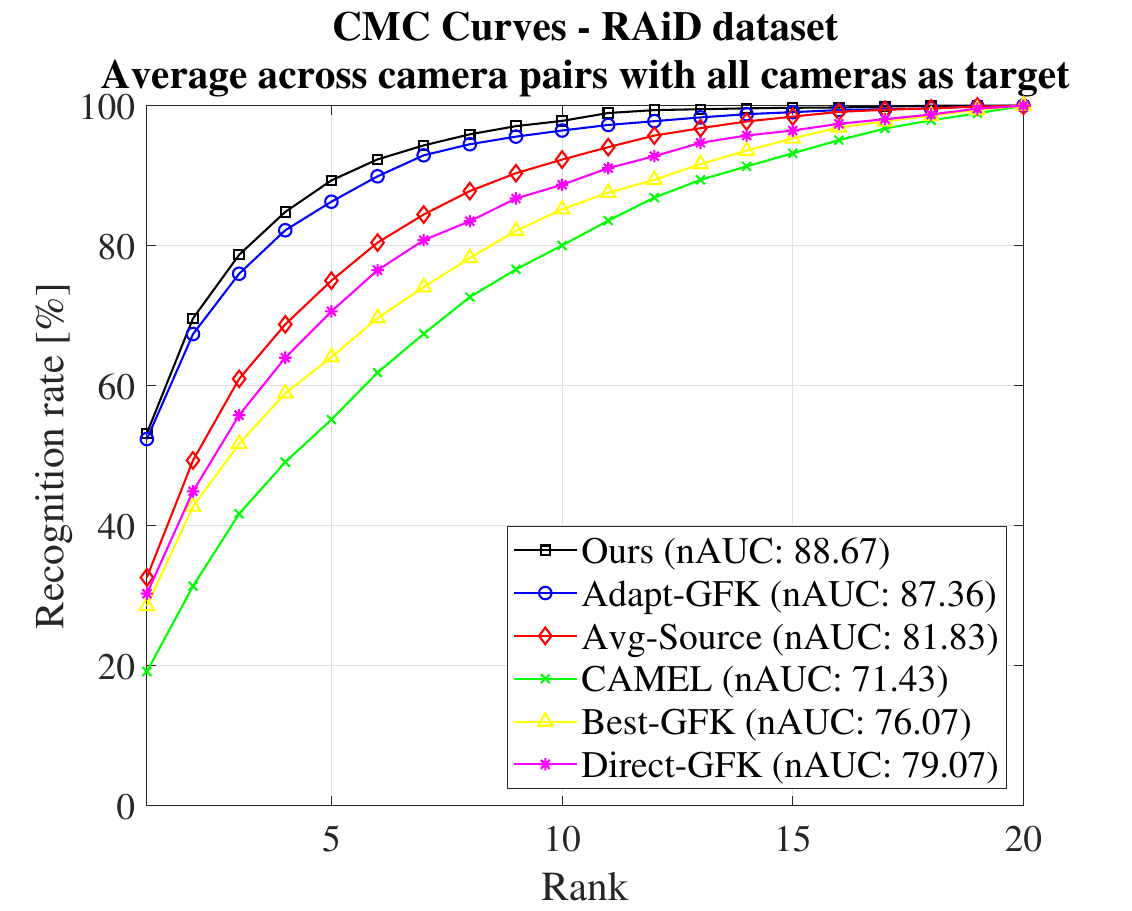}
\caption{}
\end{subfigure}
\begin{subfigure}{0.24\textwidth}
\includegraphics[width=\textwidth]{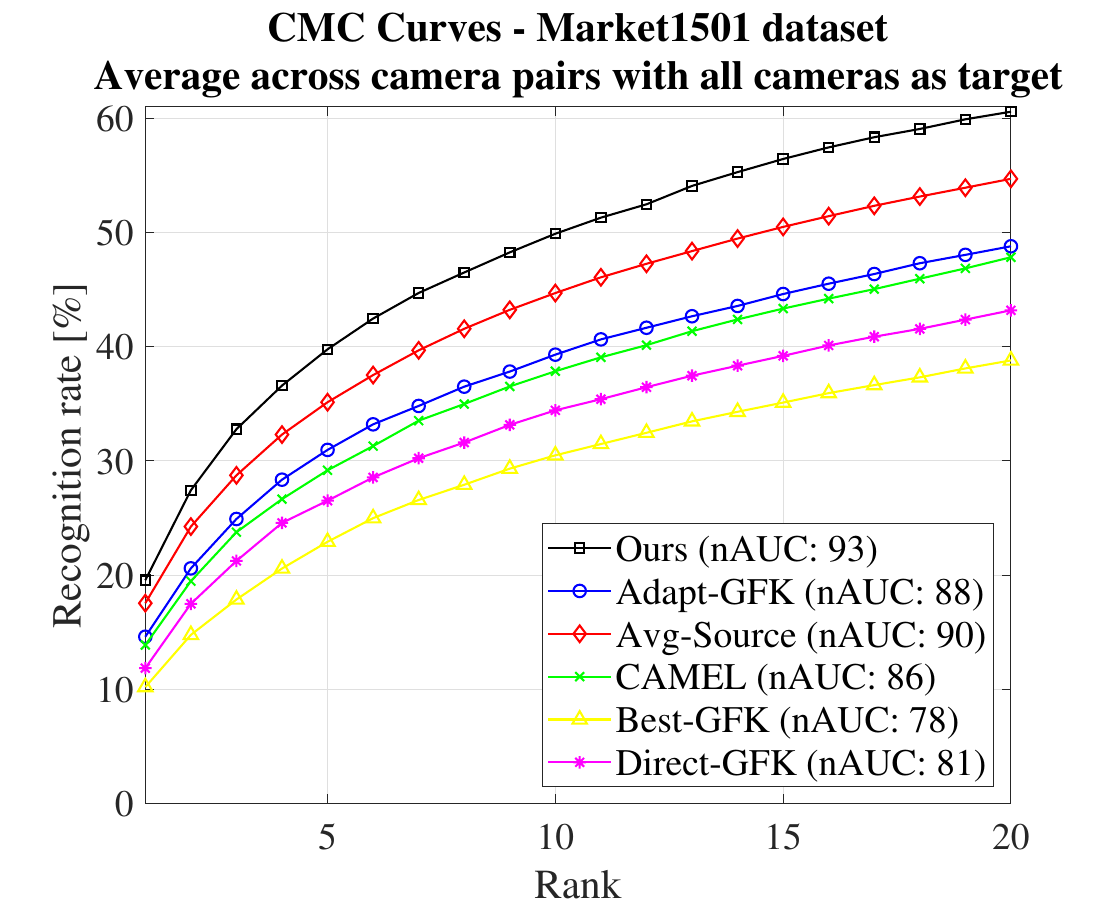}
\caption{}
\end{subfigure}
\begin{subfigure}{0.24\textwidth}
\includegraphics[width=\textwidth]{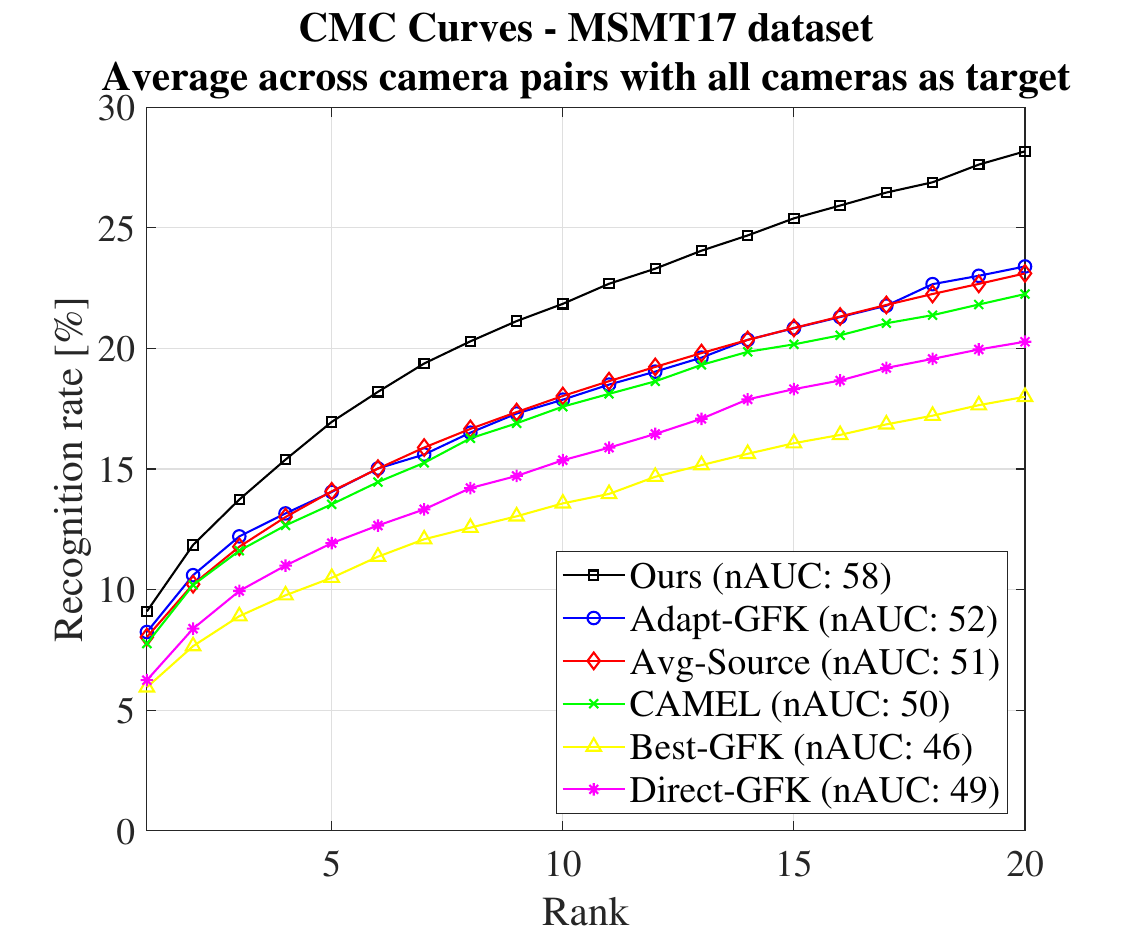}
\caption{}
\end{subfigure}
\caption{CMC curves averaged over all target camera combinations, introduced one at a time. (a) WARD with 3 cameras, (b) RAiD with 4 cameras, (c) Market1501 with 6 cameras and (d) MSMT17 with 15 cameras. Best viewed in color.}
\label{fig:singlecam}
\end{figure*}

\section{Experiments} 
\label{sec:experiments}
\noindent \textbf{Datasets}. We test the effectiveness of our method by experimenting on four publicly available person re-id datasets such as WARD \cite{martinel2012re}, RAiD \cite{das2014consistent}, Market1501 \cite{zheng2015scalable}, and MSMT17 \cite{wei2018person}. There are several other re-id datasets like ViPeR \cite{gray2008viewpoint}, PRID2011 \cite{hirzer2011person} and CUHK01 \cite{li2012human}; however, those do not apply in our case due to availability of only two cameras. RAiD and WARD are smaller datasets with 43 and 70 persons captured in $4$ and $3$ cameras, respectively, whereas Market1501 and MSMT17 are more recent and large datasets with 1,501 and 4,101 persons captured across $6$ and $15$ cameras, respectively. 

\vspace{1mm}
\noindent \textbf{Feature Extraction and Matching.}
We use Local Maximal Occurrence (LOMO) feature \cite{Liao_2015_CVPR} of length $29,960$ in RAiD and WARD datasets. However, since LOMO usually performs poorly on large datasets \cite{gou2018systematic}, for Market1501 and MSMT17 we extract features from the last layer of an Imagenet \cite{deng2009imagenet} pre-trained ResNet50 network \cite{he2016deep} (denoted as IDE features in our work). 
%Similar to previous works \cite{gou2018systematic}, we will refer to these features as IDE. In Section~\ref{subsec:deepfeat}, for Market1501 we further study the effect of training a Resnet50 model on the source data and later fine-tune on the new limited target data. 
We follow standard PCA technique to reduce the feature dimension to $100$, as in \cite{koestinger2012large,panda2017unsupervised}.  

\vspace{1mm}
\noindent \textbf{Performance Measures.} We provide standard Cumulative Matching Curves (CMC) and normalized Area Under Curve (nAUC), as is common in person re-id \cite{Liao_2015_CVPR,koestinger2012large,das2014consistent,panda2019adaptation}. While the former shows accumulated accuracy by considering the $k$-most similar matches within a ranked list, the latter is a measure of re-id accuracy, independent on the number of test samples.
Due to the space constraint, we only report average
CMC curves for most experiments and leave the full CMC
curves in the supplementary material. 

\vspace{1mm}
\noindent \textbf{Experimental Settings.} 
For RAiD we follow the protocol in \cite{Liao_2015_CVPR} and randomly split the persons into a training set of 21 persons and a test set of 20 persons, whereas for WARD, we randomly split the 70 persons into a set of 35 persons for training and rest 35 persons for testing. For both datasets, we perform 10 train/test splits and average accuracy across all splits. We use the standard training and testing splits for both Market1501 and MSMT17 datasets.
During testing, we follow a multi-query approach by averaging all query features of each id in the target camera and compare with all features in the source camera \cite{zheng2015scalable}. 

\vspace{1mm}
\noindent \textbf{Compared Methods.} We compare our approach with the following methods. (1) Two variants of Geodesic Flow Kernel (GFK) \cite{gong2012geodesic} such as Direct-GFK where the kernel between a source-target camera pair is directly used to evaluate the accuracy and Best-GFK where GFK between the best source camera and the target camera is used to evaluate accuracy between all source-target camera pairs as in \cite{panda2017unsupervised,panda2019adaptation}. Both methods use the supervised dimensionality reduction method, Partial Least Squares (PLS), to project features into a low dimensional subspace \cite{panda2017unsupervised,panda2019adaptation}. (2) State-of-the-art method for on-boarding new cameras \cite{panda2017unsupervised,panda2019adaptation} that uses transitive inference over the learned GFK across the best source and target camera (Adapt-GFK). (3) Clustering-based Asymmetric MEtric Learning (CAMEL) method of \cite{yu2017cross}, which projects features from source and target camera to a shared space using a learned projection matrix. 
%We therefore show that using only limited target data is insufficient to learn a proper metric between target and source cameras. 
For all compared methods, we use their publicly available code and perform evaluation in our setting.

 \subsection{On-boarding a Single New Camera}
 \label{onboardsinglecam}
%The goal of this experiment is to show the strength of our method compared to other relevant methods in terms of adaptation capability when a single camera is introduced in the network.
We consider one camera as newly introduced target camera and all the other as source cameras. We consider all the possible combinations for conducting experiments. 
In addition to the baselines described above, we compare against the accuracy of average of the source metrics (Avg-Source) by applying it directly over the target test set to prove the validity of Theorem~\ref{thm1}.
We also compute the GFK kernels in two settings; by considering only target data available after introducing the new cameras (Figure~\ref{fig:singlecam}) and by considering the presence of both old source data and the new labeled data after camera installation as in \cite{panda2017unsupervised,panda2019adaptation} (Figure~\ref{fig:gfkbothdata}).

\vspace{1mm}
\noindent \textbf{Implementation details.}
%In this experiment, we considered one camera as newly introduced target camera and all the other as source cameras. We considered all the possible combinations for conducting experiments. 
We split training data into disjoint source and target data considering the fact that the persons that appear in the new camera after installation may or may not be seen before in the source cameras. That is, for Market1501 and MSMT17, we split the training data into 90\% of persons that are only seen by the source cameras and 10\% that are seen in both source cameras and the new target camera after the installation. Since there are much fewer persons in RAiD and WARD training set, we split the persons into 80\% source and 20\% target for those two datasets. For each dataset, we evaluate every source-target pair and average accuracy across all pairs. Furthermore, we average accuracy across all cameras as target. Note that the train and test set are kept disjoint in all our experiments.

%\rp{write about testing as you did on other sections for example 5.2}

%we first compute CMC curves for each camera as target and compute the average accuracy over all the target cameras. After splitting training and testing data for all four datasets, we further split training data into source and target data. That is, for Market1501 and MSMT17 we split the training data into 90\% of persons that are only seen by the source cameras and 10\% that are seen in both source cameras and the new target camera. Since there are much fewer persons in the RAiD and WARD training set, for those datasets we split the persons to 80\% source and 20\% target, thus, still showing the effectiveness of our approach by having only few amount of labeled id's. During test, we follow a multi-query approach by average all query features of each id and compare with all features in the gallery.

\vspace{1mm}
\noindent \textbf{Results.} Figure~\ref{fig:singlecam} and \ref{fig:gfkbothdata} show the results. In all cases, our method outperforms all the compared methods. The most competitive methods are those of Adapt-GFK and Avg-Source that also use source metrics. For the remaining methods, we see the limitation of only using limited target data to compute the new metrics. For Market1501, we see that Avg-Source outperforms the Adapt-GFK baseline indicating the advantage of knowledge transfer from multiple source metric compared to one single best source metric as in \cite{panda2017unsupervised,panda2019adaptation}. However, our approach still outperforms the Avg-Source baseline by a margin of 20.60\%, 13.81\%, 2.01\% and 1.07\% in Rank-1 accuracy on RAiD, WARD, Market1501 and MSMT17, respectively, validating our implications of Theorem~\ref{thm1}. Furthermore, we observe that even without accessing the source training data that was used for training the network before adding a new camera, our method outperforms the GFK based methods that use all the source data in their computations (see Figure~\ref{fig:gfkbothdata}). To summarize, the experimental results show that our method performs better on both small and large camera networks with limited supervision, as it is able to adapt multiple source metrics through reducing negative transfer by dynamically weighting the source metrics.
%\amit{Is the effect more when there is larger variation in the views between source and target cameras? If so, we should point that out.} 

\begin{figure}[t]
    \centering
    \includegraphics[width=0.35\textwidth]{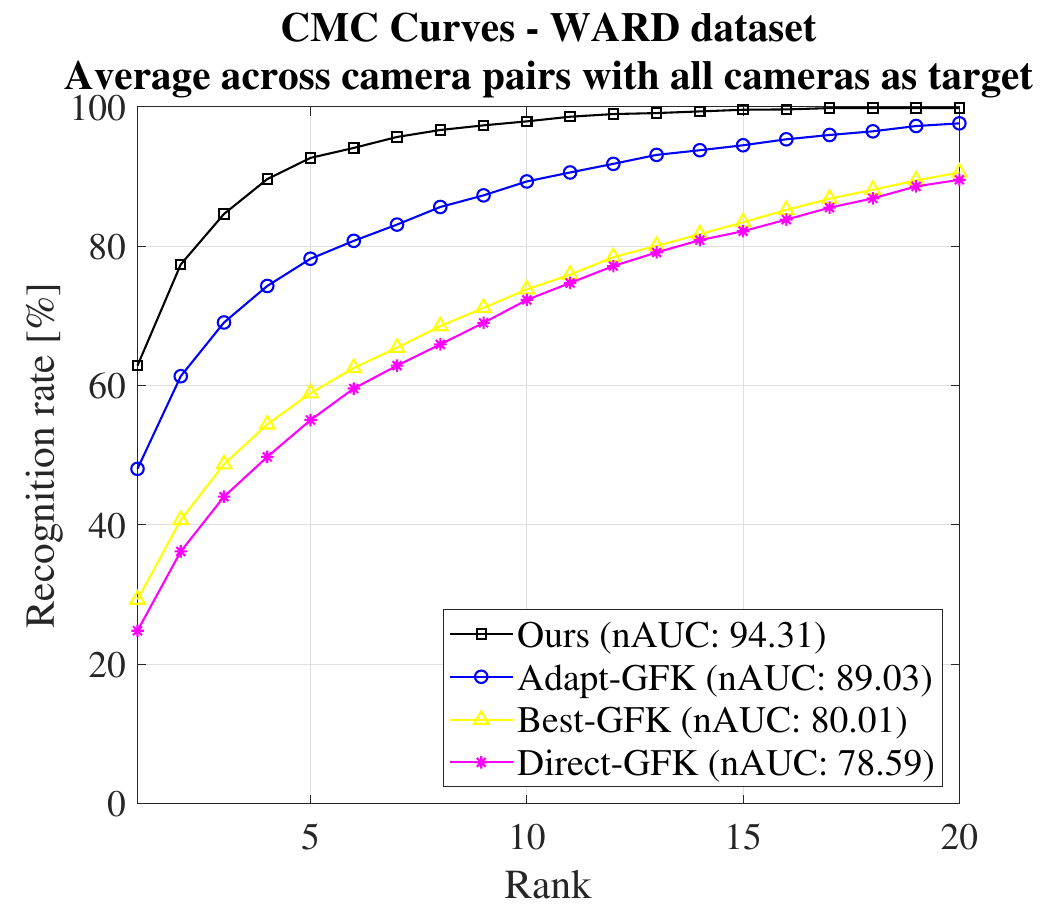}
    \caption{CMC curves averaged over all the target camera combinations, introduced one at a time, on the WARD dataset. Note that both old and new source data are used for calculation of GFK. Best viewed in color.}
    \label{fig:gfkbothdata}
\end{figure}

\begin{figure*}[ht]
\centering
\begin{subfigure}{0.3\textwidth}
\includegraphics[width=\textwidth]{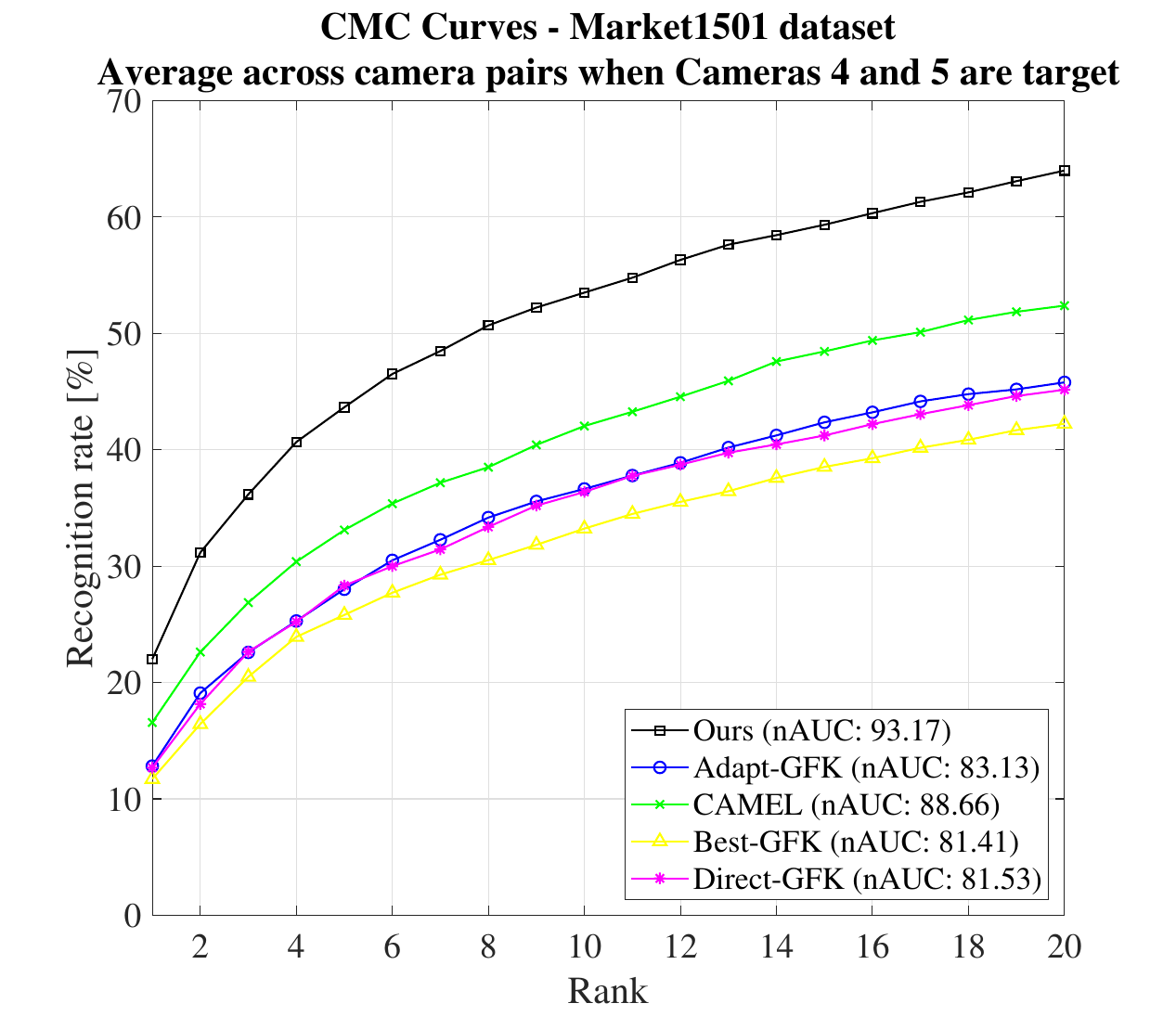}
\caption{}
\end{subfigure}
\begin{subfigure}{0.3\textwidth}
\includegraphics[width=\textwidth]{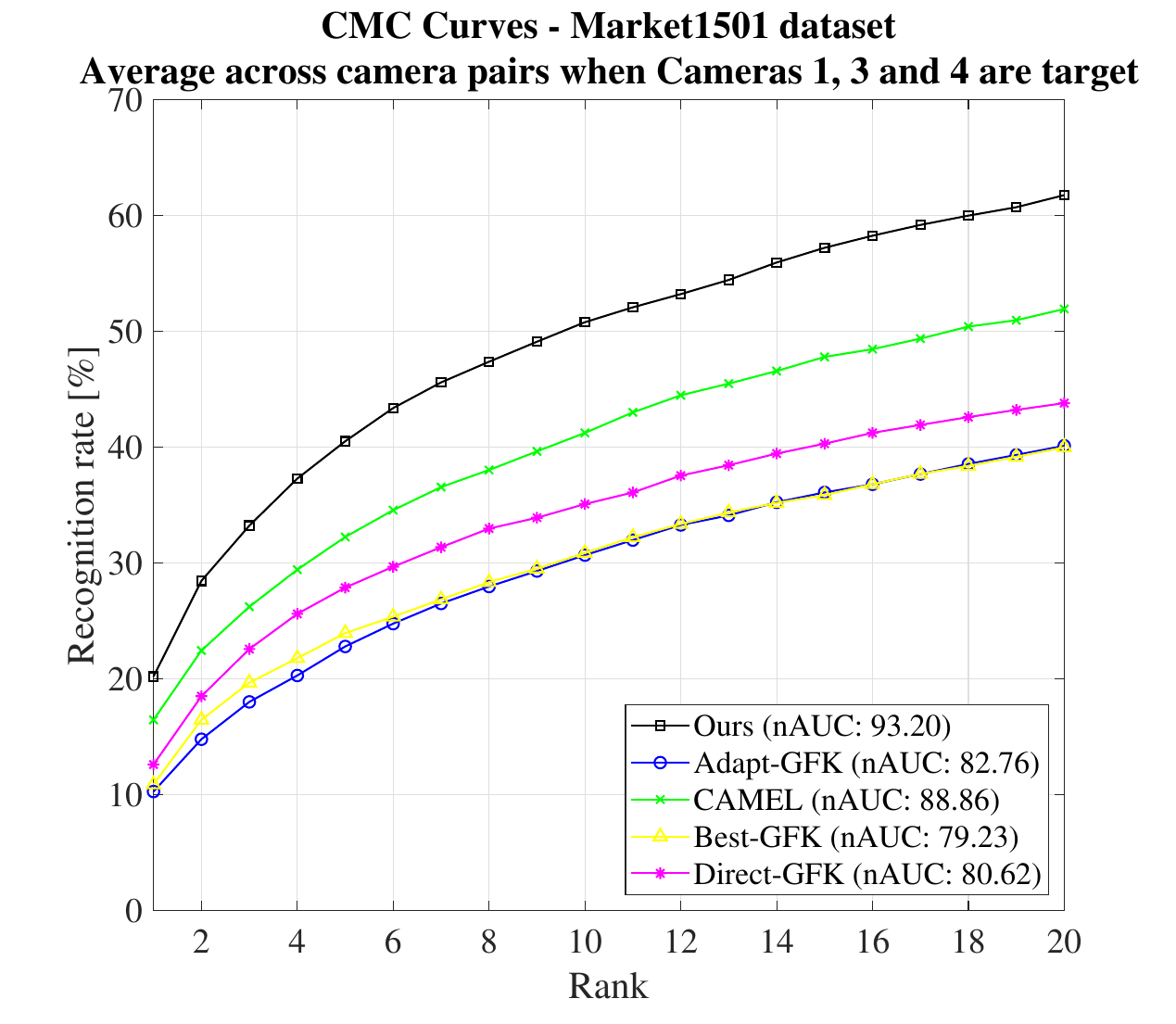}
\caption{}
\end{subfigure}
\begin{subfigure}{0.3\textwidth}
\includegraphics[width=\textwidth]{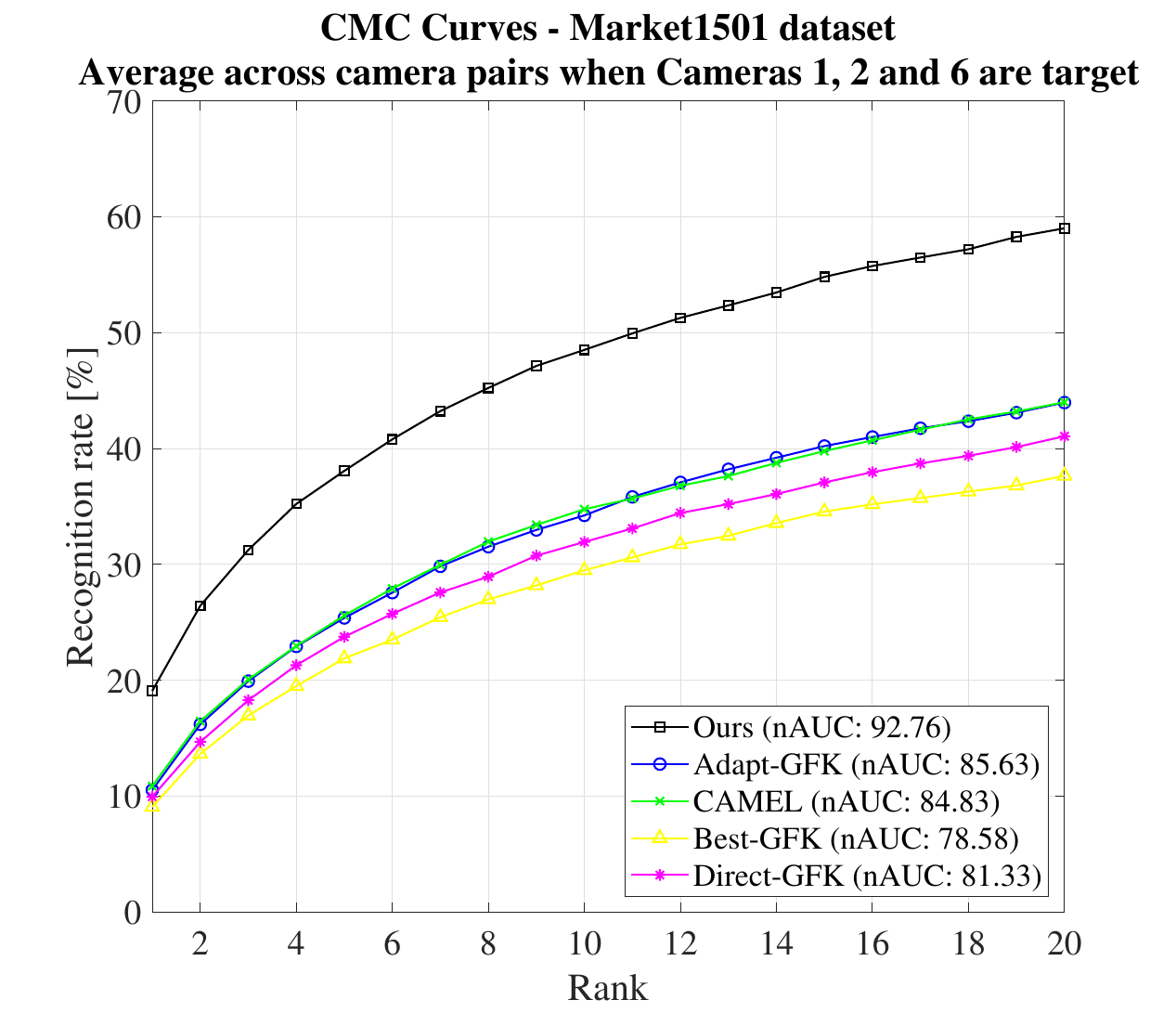}
\caption{}
\end{subfigure}
%\caption{On-boarding multiple target cameras continuously or in parallel. Accuracy is averaged across all source-target pairs. (best viewed in colors)}
\caption{CMC curves averaged across target cameras on Market1501 dataset. (a) and (b) show results while adding two and three cameras in parallel, (c) show result while adding three cameras sequentially one after another. Best viewed in color.}
\label{fig:multicam} \vspace{3mm}
\end{figure*}

\subsection{On-boarding Multiple New Cameras} 
%\noindent \textbf{Goal.} The goal of this experiment is to show the stability of our method when adding multiple target cameras to the network, either continuously or in parallel. 

We perform this experiment on Market1501 dataset using the same strategy as in Section~\ref{onboardsinglecam} and compare our results with other methods while adding multiple target cameras to the network, either continuously or in parallel.   
%\rp{why}. 
%\paragraph{Parallel on-boarding of cameras}

\vspace{1mm}
\textbf{Parallel On-boarding of Cameras:} We randomly select two or three cameras as target while keeping the remaining as source. All the new target cameras are tested against both source cameras and other target cameras. The results of adding two and three cameras in parallel (at the same time) are shown in Figure~\ref{fig:multicam} (a) and (b), respectively. In both cases, our method outperforms all the compared methods with an increasing margin as rank increases. 
%The rank-1 margin between our method and the next best of CAMEL is slightly decreased from two to three parallel target cameras, however, 
We outperform the most competitive CAMEL in Rank-1 accuracy by 5.45\% and 3.73\%, while adding two and three cameras respectively. 
%These results show the differences when only considering limited target data as in CAMEL or considering both target data and source metrics as in our method. 
Furthermore, our method better adapts source metrics since it has the capability of assigning zero weights to the metrics that do not generalize well over target data. Meanwhile, Adapt-GFK has a high probability of using the outlier source metrics in the presence of fewer available source metrics, which causes negative transfer. This has been shown in Figure~\ref{fig:multicam} where GFK based methods are performing worse than CAMEL, which is computed just with limited supervision without using any source metrics. 
%\rp{write a line about why your results are better}
%\paragraph{Continual on-boarding of Cameras}

\vspace{1mm}
\textbf{Sequential On-boarding of Cameras:} For this experiment, we randomly select three target cameras that are added sequentially.
% \amit{Does continuous mean one after another? Are the results after introducing all cameras? Why not call it sequential?}.
A target camera is tested against all source cameras and previously added target cameras.
The results are shown in Figure~\ref{fig:multicam} (c). Similar to parallel on-boarding, our methods outperforms compared methods by a large margin. In this setting, we outperform CAMEL by 8.22\% in Rank-1 accuracy. Additionally, compared to all GFK-based methods, the Rank-1 margin is kept constant at 10\% for both parallel and sequential on-boarding. These results show the scalability of our proposed method while adding multiple cameras to a network, irrespective of whether they are added in parallel or sequentially. 

\subsection{Different Labeled Data in New Cameras}
We perform this experiment to show the implications of Theorem~\ref{thm2} by using different percentages of labeled target data (10\%, 20\%, 30\%, 50\%, 75\% and 100\%) in our method. 
%We compare our method by using 10\%, 20\%, 30\%, 50\%, 75\% and 100\% of available target data.
We compare with a widely used KISS metric learning (KISSME) \cite{koestinger2012large} algorithm and show the difference in Rank-1 accuracy as a function of labeled target data. 
%\rp{what do you mean by dependent on the available target data}.
Figure~\ref{fig:ablation} (a) shows the results. At only 10\% labeled data, the difference between our method and KISSME \cite{koestinger2012large} is almost 30\%; however, as we add more labeled data, the Rank-1 accuracy becomes equivalent for the two methods at 100\% labeled data. This confirms the implications of Theorem~\ref{thm2}, where we showed that with increasing labeled target data, the effect of source metrics in learning becomes negligible.
%\begin{figure}[ht]
%\centering
%\includegraphics[width=0.4\textwidth]{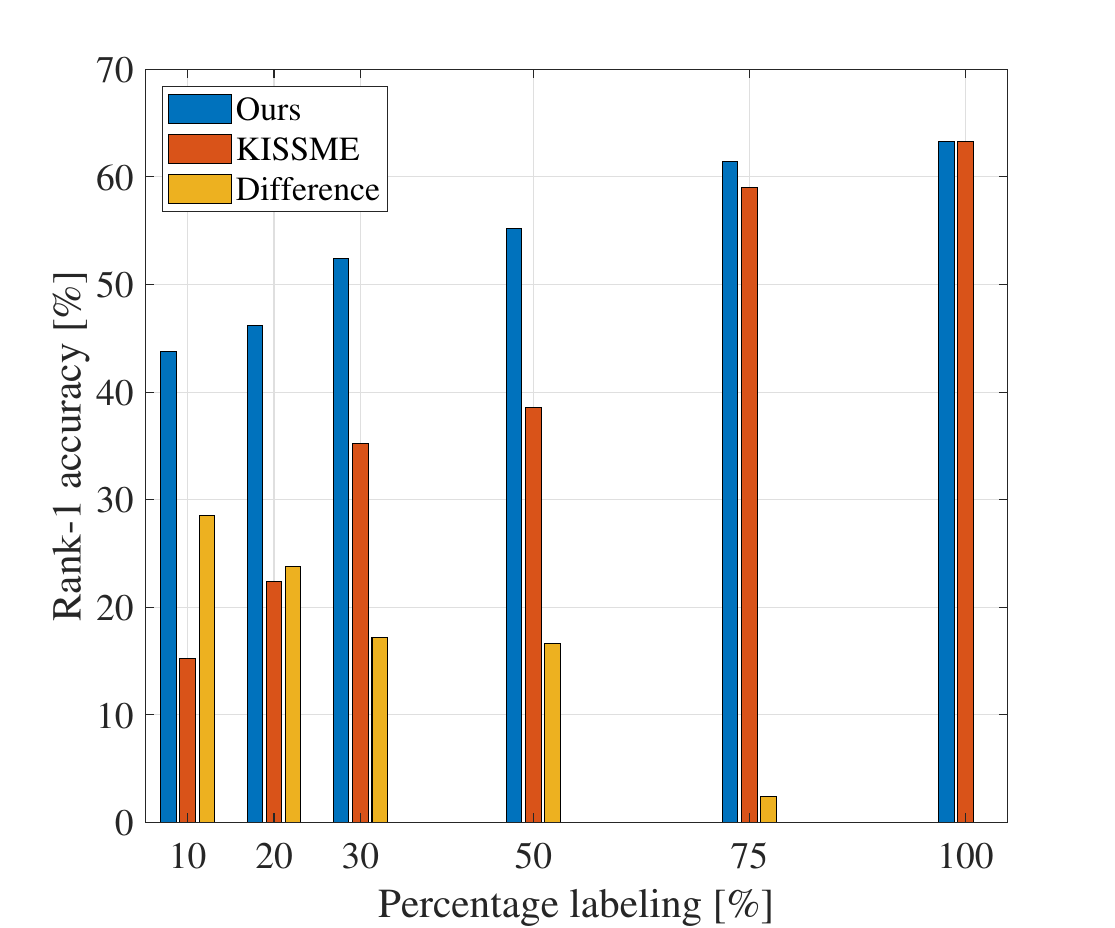}
%\caption{Bar plot to show theroem~\ref{thm2} experimentally. Results are %shown for WARD dataset}
%\label{fig:difflabel}
%\end{figure}

\begin{figure*}[ht]
\centering
\begin{subfigure}{0.3\textwidth}
\includegraphics[width=\textwidth]{images/ward_diff_perc}
\caption{}
\end{subfigure}
\begin{subfigure}{0.3\textwidth}
\includegraphics[width=\textwidth]{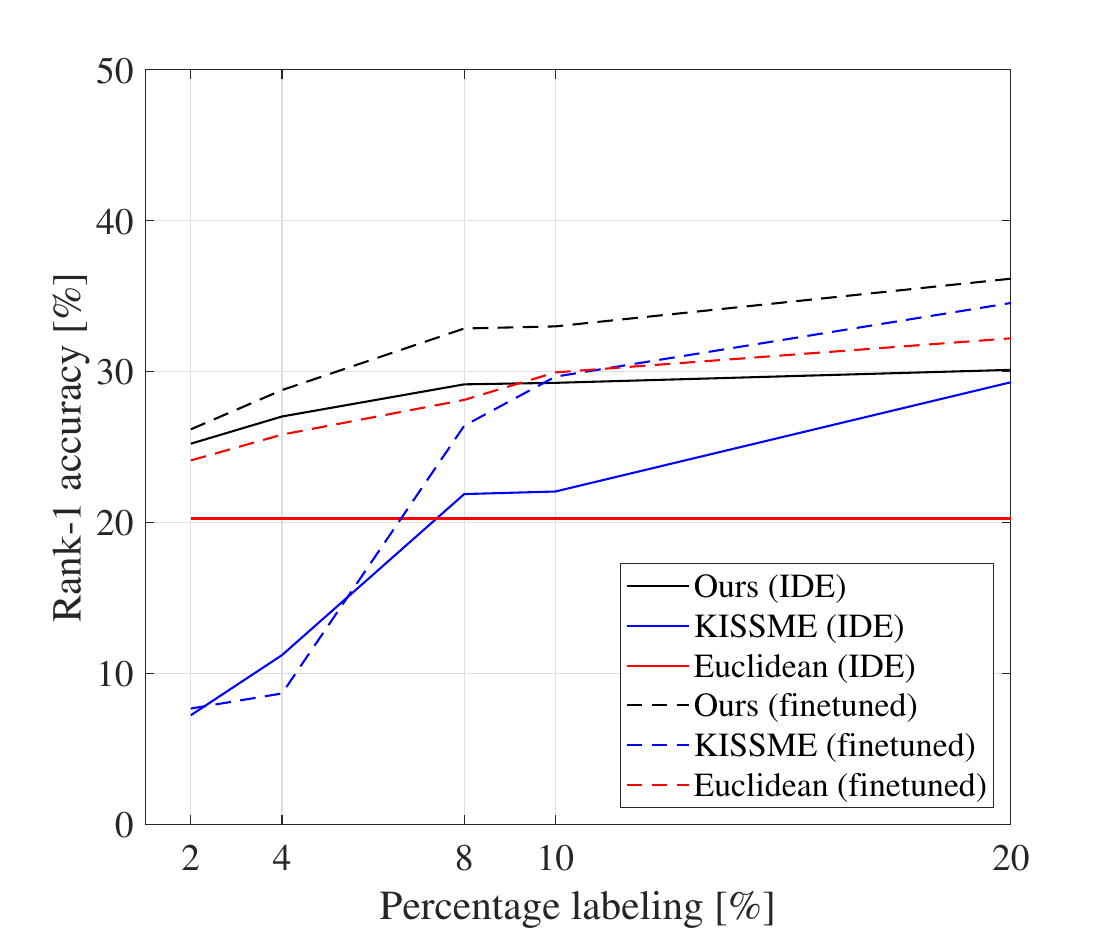}
\caption{}
\end{subfigure}
\begin{subfigure}{0.3\textwidth}
\includegraphics[width=\textwidth]{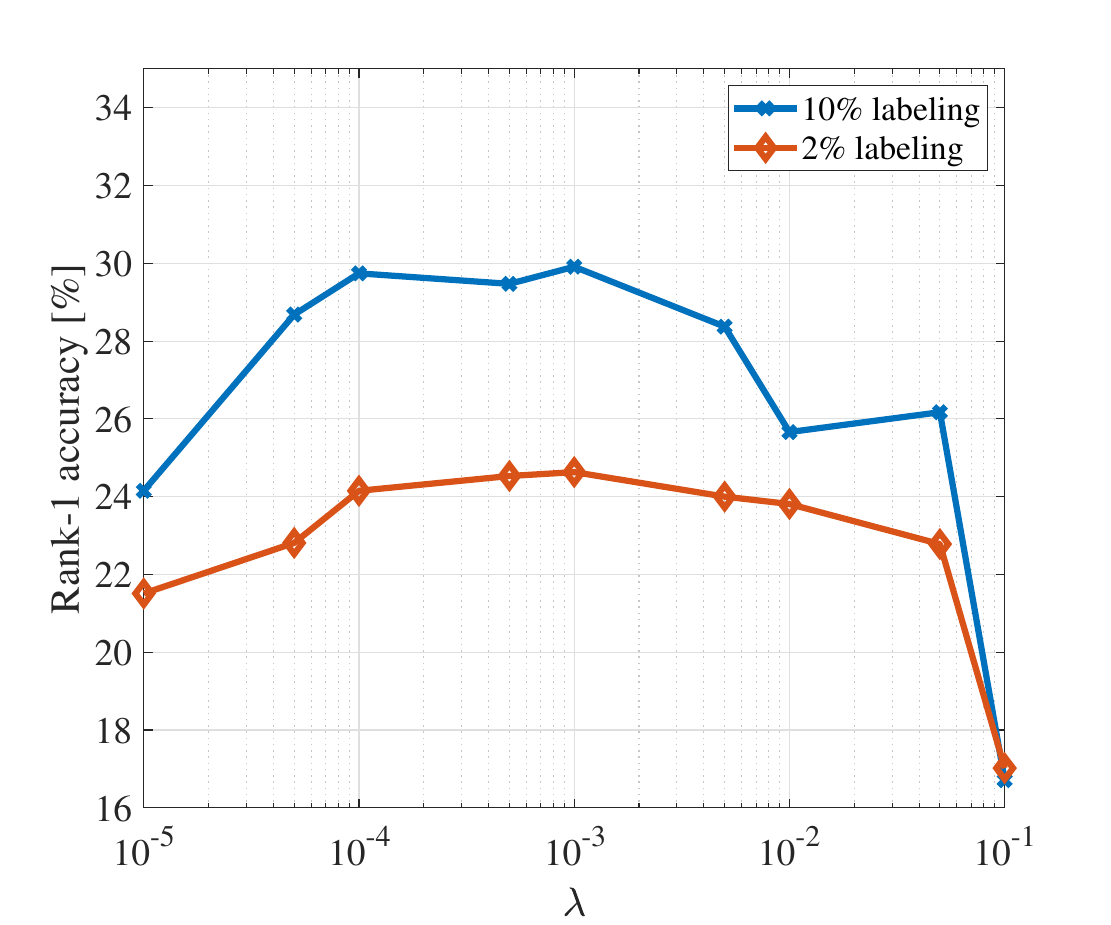}
\caption{}
\end{subfigure} \vspace{2mm}
\caption{(a) Effect of different percentage of target labelling on WARD dataset for justifying Theorem~\ref{thm2}, (b) Analysis of our method with deep features trained on source camera data in Market1501 dataset with 6th camera as target, (c) Sensitivity of $\lambda$ on the Rank-1 performance tested using deep features in Market1501 with 6th camera as target.  Best viewed in color. }
\label{fig:ablation} \vspace{2mm}
\end{figure*}

\subsection{Finetuning with Deep Features}
\label{subsec:deepfeat}

\begin{comment}
This experiment shows the effect of using features from a deep network that is first trained on the source data and later finetuned on new limited target data. We compare with supervised metric learning KISSME \cite{koestinger2012large} and Eucldean distance metric using IDE features directly and using features after finetuning the model.
%To show the issues of naively applying supervised learning with limited data, or not applying any learning, we compare our method to baseline Euclidean distance and KISSME  metric, both when directly using IDE features and after fine-tuning the model.
We perform this experiment on Market1501 dataset using the same settings as described in Section~\ref{onboardsinglecam}. We train a ResNet50 model \cite{he2016deep}, pretrained on the Imagenet dataset, using the source data, and use the optimized source features to train the source metrics.
Afterwards, we finetune the model using the labeled target data and use the new target features along with the source metrics in optimization~\ref{opt:main_opt}.  Please see supplementary material for more details.
Fig.~\ref{fig:ablation} (c) shows the results with different percentage of labeled target data. At $<$ 8\% labeling, the differences between IDE and finetuned features are low for Ours and KISSME, however, we increase Rank-1 accuracy by more than 15\% compared to KISSME in both cases. At 10\% labeling, Ours (IDE) still performs better than both Euclidean (finetuned) and KISSME (finetuned). 
The accuracy of Ours (IDE) only increases slowly with more than 8\% labeling, however, Ours (finetuned) still improves when increasing amount of labeled data. This indicates that our method works even better when using deep features from a model, which was finetuned on limited target data.
\end{comment}
This section shows the strength of our method while comparing with CNN features extracted from a network trained on the source data (we train a ResNet50 model \cite{he2016deep}, pretrained on the Imagenet dataset). Without transfer learning, we have two options: (a) directly use the source model to extract features in the target and do matching based on Euclidean/KISSME metric (IDE), (b) finetune the source model using limited target data and then extract features to do matching using Euclidean/KISSME (finetuned). 
We compared these baselines with our method with different percentage of labeling on Market1501 dataset, where the pairwise metrics are computed using the source features extracted from the model without any finetuning.
We use those source metrics along with the target features, extracted before (Ours(IDE)) and after finetuning the source model (Ours(finetuned)). 
Please see supplementary material for more details. 
Figure~\ref{fig:ablation} (b) shows the results. 
Ours(IDE) outperforms Euclidean(IDE) by a margin of 10\% on Market with 20\% of labeled target data. The difference between Ours(finetuned) and Euclidean/KISSME (finetuned) is more noticeable with less labeled data and it becomes smaller with increase in labeled target data (Theorem~\ref{thm2}). However Ours(finetuned) consistently outperforms all the other baselines for up to 20\% labeling.

%First, we train a Resnet model, pretrained on the Imagenet dataset, using the source camera data. We remove the last classification layer and add two fully connected layers; one which embeds average pooled features to size $1024$ and another which works as a classifier. 
%We use the optimized source features to train the source metrics that will later be used to calculate new target metrics.
%Afterwards we fine-tune the model using the new target data and use the new optimized target features along with the source metrics in optimization~\ref{opt:main_opt}.
%Please see supplementary material for details on training and fine-tuning.
%The model is trained for 50 epochs using SGD, with a base learning rate of 0.001, which is decreased by a factor 10 after 20 and 40 epochs. We use a batch size of 32 and perform traditional data augmentation, such as cropping and flipping.
%We use the optimized source features to train the source metrics that will later be used to calculate new target metrics.
%Afterwards, we fine-tune the model for 30 epochs using the new target data. We fine-tune with a batch size of 32 and a base learning rate is 0.0001 and decreased by a factor 10 after 20 epochs. The new optimized target features are used along with the source metrics in optimization~\ref{opt:main_opt}.

\vspace{-0.5 mm}
\subsection{Parameter Sensitivity}

We perform this experiment to study the effect of $\lambda$ in optimization~(\ref{opt:main_opt}) for a given percentage of labeled target data. Figure~\ref{fig:ablation} (c) shows the Rank-1 accuracy of our proposed method for different values of $\lambda$. 
%The experiment is performed by running our method for a fixed percentage label with different $\lambda$. 
From optimization~\ref{opt:main_opt}, when $\lambda \rightarrow \infty$ the left term can be neglected resulting in optimal $M$ and $\beta$ to be zero. However, when $\lambda \rightarrow 0$, the regularization term is neglected resulting in no transfer. We can see from Figure~\ref{fig:ablation} (c) that there is an operating zone of $\lambda$ (e.g., in the range of $10^{-4}$ to $10^{-2}$), that is neither too high nor too low for useful transfer from source metrics.

%features are used from a deep model that is fine-tuned on the new limited data compared to IDE features.
%Only from 10\% labeling, the accuracy gap between Ours (finetuned) and KISSME (finetuned) is less than 5\%. This indicates that our method works even better when features are used from a deep model that is fine-tuned on the new limited data compared to IDE features.

%We have used \textbf{Market-1501} dataset \cite{} as source and RAiD \cite{} as the target dataset. We trained the source model for all camera pairs to get a single KISSME metric which is transferred as a prior to the target dataset. We have used only one camera pair in the target in order to compute the optimal metric between them. We have manually labeled very few samples from the target domain and we also used only the similar pairs in the optimization problem just like the first optimization problem described in this paper. We have calculated\textbf{Rank-1},\textbf{Rank-5},\textbf{Rank-10} and \textbf{Rank-20} accuracy of the test set. We have also done it across various levels of labeled data in the target domain.

%Lets denote the level of supervision as \textbf{LS}, which is the number of samples from each person taken for labelling from each view. We have compared the proposed HTL method with the supervised method done using various value of \textbf{LS}. Also , for each level of \textbf{LS}, we have compared 
\vspace{-2 mm}
\section{Conclusions}
\vspace{-2 mm}
We addressed a critically important problem in person re-identification which has received little attention thus far - how to quickly on-board new cameras into an existing camera network. We showed this can be addressed effectively using hypothesis transfer learning using only learned source metrics and a limited amount of labeled data collected after installing the new camera(s). We provided theoretical analysis to show that our approach minimizes the effect of negative transfer through finding an optimal weighted combination of multiple source metrics. We showed the effectiveness of our approach on four standard datasets, significantly outperforming several baseline methods.

\vspace{-3mm}
\noindent \paragraph{Acknowledgements.} This work was partially supported by ONR grant N00014-19-1-2264 and NSF grant 1724341.

% \amit{Ask Aske if he needs to ack anything.}
%To properly deal with the dynamic nature of a camera network, when a new target camera is added to an existing network of source cameras, we have proposed a method based on h The method considers and weights multiple source metrics to transfer knowledge only from source metrics that generalize well on the target data. From experimental results on on-boarding a single or multiple cameras, we have shown that our method outperforms related method that either use only the limited new data, a single source metric or average across using all source metrics. Furthermore, we have shown that while our method clearly outperforms supervised learning metric learning with limited amount labels. As the amount of labelled data increases, the accuracy of our method and the plain supervised learning methods become similar. 
% \FloatBarrier
{\small
\bibliographystyle{ieee_fullname}
\bibliography{main}
}
\onecolumn

% \documentclass[10pt,onecolumn,letterpaper]{article}
% \usepackage{cvpr}
% \usepackage{times}
% \usepackage{epsfig} 
% \usepackage{comment}
% \usepackage{xcolor}
% \usepackage{graphicx}
% \usepackage{amsmath}
% \usepackage{amssymb}
% \usepackage{amsthm}
% \usepackage[ruled,vlined]{algorithm2e}
% \include{pythonlisting}
% \usepackage{multicol}
% \usepackage{optidef}
% \usepackage{xcolor}
% \usepackage{float}
% \newtheorem{thm}{Theorem}
% \newtheorem{lemma}[thm]{Lemma}
% \usepackage{subcaption}
% \usepackage{dblfloatfix}
% \newcommand{\rp}[1]{{\color{blue}{\small\bf\sf [RP: #1]}}}
% \usepackage[pagebackref=true,breaklinks=true,letterpaper=true,colorlinks,bookmarks=false]{hyperref}

% \cvprfinalcopy % *** Uncomment this line for the final submission

% \def\cvprPaperID{9517} % *** Enter the CVPR Paper ID here
% \def\httilde{\mbox{\tt\raisebox{-.5ex}{\symbol{126}}}}

% \ifcvprfinal\pagestyle{empty}\fi

\newpage
% \onecloumn
% \newcounter
% \documentclass[10pt,onecolumn,letterpaper]{article}
% \usepackage{cvpr}
% \usepackage{times}
% \usepackage{epsfig} 
% \usepackage{comment}
% \usepackage{xcolor}
% \usepackage{graphicx}
% \usepackage{amsmath}
% \usepackage{amssymb}
% \usepackage{amsthm}
% \usepackage[ruled,vlined]{algorithm2e}
% \include{pythonlisting}
% \usepackage{multicol}
% \usepackage{optidef}
% \usepackage{xcolor}
% \usepackage{float}
% \newtheorem{thm}{Theorem}
% \newtheorem{lemma}[thm]{Lemma}
% \usepackage{subcaption}
% \usepackage{dblfloatfix}
% \newcommand{\rp}[1]{{\color{blue}{\small\bf\sf [RP: #1]}}}
% \usepackage[pagebackref=true,breaklinks=true,letterpaper=true,colorlinks,bookmarks=false]{hyperref}

% \cvprfinalcopy 
% \def\cvprPaperID{9517} 
% \def\httilde{\mbox{\tt\raisebox{-.5ex}{\symbol{126}}}}

% \ifcvprfinal\pagestyle{plain}\fi

% \begin{document}

% \title{Camera On-boarding for Person Re-identification using Hypothesis Transfer Learning \\ (Supplementary Material)}

% \author{Sk Miraj Ahmed$^{1,*}$, Aske R Lejb{\o}lle$^{2, }$\thanks {Joint first authors}~, Rameswar Panda$^{2}$, Amit K. Roy-Chowdhury$^{1}$\\
% $^{1}$ University of California, Riverside,  $^{2}$ Aalborg University, Denmark, $^{3}$ IBM Research AI, Cambridge\\
% {\tt \small \{sahme047@, alejboel@, rpand002@,  amitrc@ece\}.ucr.edu}
% }
% \newcounter
\begin{center}
\textbf{\Large{Camera On-boarding for Person Re-identification using Hypothesis Transfer Learning \\ (Supplementary Material)}}
\end{center}

\begin{table} [h]
\begin{center}
\begin{tabular}{ |p{2cm}||p{12cm}|  }
\hline
 
 Page Number    
 & Content \\
 \hline
 \hline
 {\color{blue}[12]}   & Dataset descriptions \\
 \hline
 {\color{blue}[12]}   & Detailed description of the optimization steps \\
 \hline
 {\color{blue}[15]}  & Proof of theorems from the main paper \\
 \hline 
 {\color{blue}[17]}  & On-boarding a single new camera (camera-wise CMC curves)\\
 \hline
 {\color{blue}[21]}   & On-boarding multiple new cameras (camera-wise CMC curves) \\
 \hline
 {\color{blue}[21]}   & Additional Experiments \\
 \hline
 {\color{blue}[22]}  & Finetuning with deep features \\
 \hline
\end{tabular}
\end{center}
\caption{Supplementary Material Overview.}
\end{table}

\newpage
\setcounter{section}{0}
\section{Dataset Descriptions}
This section contains detailed descriptions of the datasets used in our experiemnts (see Figure~\ref{datasets} for sample images).

\subparagraph{WARD~\cite{martinel2012re}} was collected from three outdoor cameras. The dataset contains 4,786 images of 70 different persons and includes variations in illumination.

\subparagraph{RAiD~\cite{das2014consistent}} was collected from four cameras; two indoor and two outdoor. 6,920 images were captured of 43 different persons. However, two of these persons were only seen by two of the four cameras. As a result of having both indoor and outdoor cameras, the dataset includes large illumination and viewpoint variations.

\subparagraph{Market1501~\cite{zheng2015scalable}} was collected from six cameras and used a Deformable Part Model~\cite{felzenszwalb2009object} to annotate images. This resulted in 32,668 images of 1,501 different persons, but also 2,793 ``distractors'' that are badly drawn bounding boxes. The dataset includes variations in both detection precision, resolution and viewpoint.

\subparagraph{MSMT17~\cite{wei2018person}} is the largest person re-identification dataset to date, and contains images collected by no more than 15 cameras; 3 indoor and 12 outdoor. Data was collected over the course of four different days in a month, and Faster RCNN~\cite{ren2015faster} was using for bounding box detection, resulting in 126,441 images of 4,101 different persons. Due to the diversity in data collection, this dataset contains large variations in illumination and viewpoint.
\begin{figure}[h]
\centering
%\large \underline{Camera wise CMC curves for RAiD dataset}\par\medskip
\begin{subfigure}{0.201\textwidth}
\includegraphics[width=\textwidth]{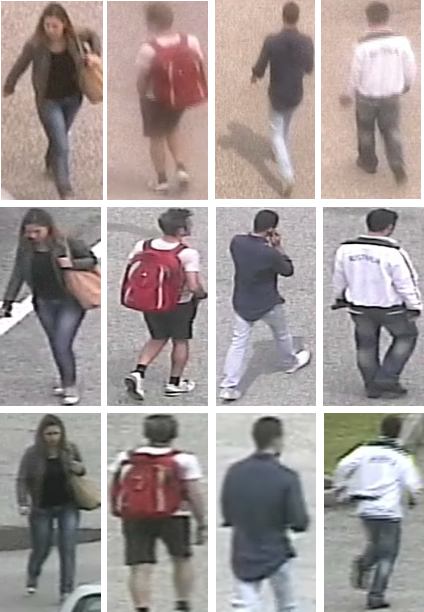}
\caption{WARD}
\end{subfigure}
\hspace{0.33cm}
\begin{subfigure}{0.2\textwidth}
\includegraphics[width=\textwidth]{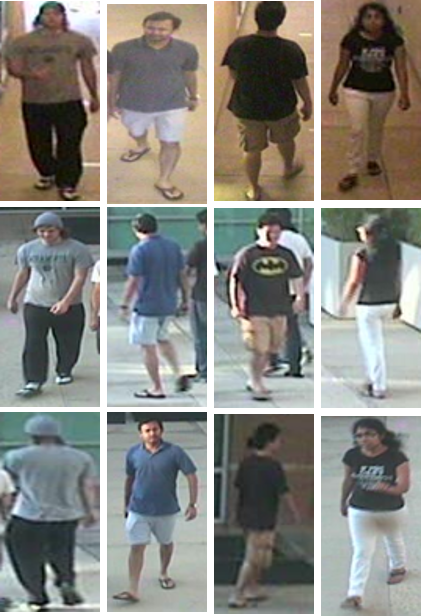}
\caption{RAiD}
\end{subfigure} 
\hspace{0.33cm}
\begin{subfigure}{0.202\textwidth}
\includegraphics[width=\textwidth]{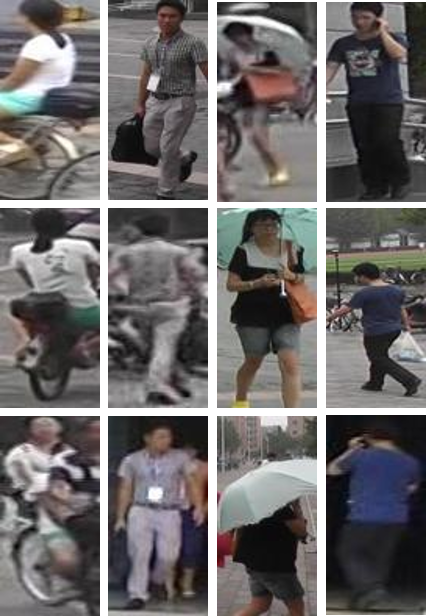}
\caption{Market1501}
\end{subfigure}
\hspace{0.33cm}
\begin{subfigure}{0.205\textwidth}
\includegraphics[width=\textwidth]{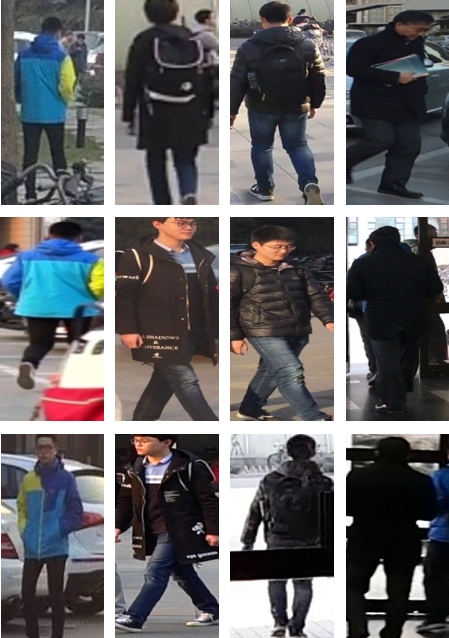}
\caption{MSMT17}
\end{subfigure}
\caption{A total of 48 Sample images from the 4 datasets used in our experimentation. In each row 4 different persons are shown whereas for each column 3 different views of the same person from 3 different cameras are shown. We can see the that across cameras, the viewpoint of the same person is very diverse because of change in illumination condition or occlusion.}
\label{datasets}
\end{figure}

%\newpage

\section{Detailed Description of the Optimization Steps}
 In this section we will rigorously discuss all the necessary derivations of the steps of our proposed algorithm that could not be shown in the main paper due to space constraint. We first present the notations that we will use throughout this section. \\
\noindent\textbf{Notations:}
\begin{itemize}
     \item $\frac{1}{n_s}\sum\limits_{(i,j) \in S}{ x_{ij}x_{ij}^\top}=\Sigma_S$
      \item $\frac{1}{n_d}\sum\limits_{(i,j) \in D}{ x_{ij}x_{ij}^\top}=\Sigma_D$
      \item $\mathcal{C}_1=\{M \mid \frac{1}{n_d}\sum\limits_{(i,j) \in D}(x_{ij}^\top M x_{ij})-b \geq 0\}$
      \item $\mathcal{C}_2=\{M \mid M\succeq 0\}$
      \item $\mathcal{C}_3=\{\beta \mid \|\beta\|_2 \leq 1\}$
      \item $\Pi_\mathcal{C}(X)=\underset{\hat{X} \in \mathcal{C}}{\text{minimize}}~~ \frac{1}{2}\|\hat{X}-X\|_F^2$
      \item $f(M,\beta)=\frac{1}{n_s}\sum\limits_{(i,j) \in S}{ x_{ij}^\top M x_{ij}} + \lambda \|M-\sum_{j=1}^{N}\beta_j M_j\|_F^2$
\end{itemize}

The proposed optimization problem in the main paper is defined below.
\setcounter{equation}{0}
\begin{mini}
{M,\beta}{\frac{1}{n_s} \sum_{(i,j) \in S}{ x_{ij}^\top M x_{ij}} +  \lambda \|M-\sum_{j=1}^{N}\beta_jM_j\|_F^2}
{}{}
\label{opt:main_opt_supp}
\addConstraint {\frac{1}{n_d} \sum_{(i,j) \in D}(x_{ij}^\top M x_{ij})-b}{\geq 0 ,}{\ M \succeq 0}
\addConstraint{\beta}{ \geq 0,}{\ \|\beta\|_2}{ \leq 1}
\end{mini}

\noindent\textbf{Step 1: Gradient w.r.t $M$ with fixed $\beta$.}

\begin{equation}
\begin{split}
    \nabla_M(f(M,\beta)) &= \frac{1}{n_s}\sum_{(i,j) \in S}x_{ij} x_{ij}^\top +2 \lambda (M-\sum_{j=1}^{N}\beta_jM_j)\\
    &= \Sigma_S+2 \lambda (M-\sum_{j=1}^{N}\beta_jM_j)
    \end{split}
\end{equation}

\noindent\textbf{Step 2: Projection of $M$ onto $\mathcal{C}_1$ and $\mathcal{C}_2$.}

This can be done by solving a constrained optimization problem.
\begin{alignat*}{4}
\Pi_{\mathcal{C}_1}(M) &= \text{arg}\ \underset{\hat{M}}{\text{min}}~~&& \frac{1}{2} \|\hat{M}-M\|_F^2 \\
& \text{Subject to} && \frac{1}{n_d}\sum\limits_{(i,j) \in D}(x_{ij}^\top \hat{M} x_{ij})-b \geq 0
\end{alignat*} 
We can write the lagrangian as follows,
\begin{equation}
\mathcal{L}(\hat{M},\psi)= \frac{1}{2} \|\hat{M}-M\|_F^2 + \psi(b-\frac{1}{n_d}\sum\limits_{(i,j) \in D}x_{ij}^\top \hat{M}x_{ij})
\label{lagrange}
\end{equation}
The KKT conditions for this problem are:
\begin{enumerate}
    \item  
    \begin{equation*}
    \nabla_{\hat{M}} \mathcal{L}(\hat{M},\psi)|_{\hat{M}=\hat{M}^\star}=0 \\
    \implies  (\hat{M}^\star-M)-\frac{\psi}{n_d} \sum_{(i,j) \in D} x_{ij}x_{ij}^\top=0 \\
    \implies  (\hat{M}^\star-M)-\psi \Sigma_D=0 \\
    \implies  \hat{M}^\star= (M+\psi \Sigma_D) \\
    \end{equation*}
    \item $\psi^\star(b-\frac{1}{n_d}\sum\limits_{(i,j) \in D}x_{ij}^\top \hat{M}^\star x_{ij}) \geq 0$
    \item $\psi^\star \geq 0$
\end{enumerate}
The optimization problem is convex, so strong duality should hold. So, we put the value of $\hat{M}^\star$ from KKT condition $1$ in the equation~\eqref{lagrange} to get the dual objective function as follows,
\begin{equation}
    \begin{split}
    g(\psi)= \mathcal{L}(\hat{M}^\star,\psi) &=\frac{1}{2} \|M+\psi \Sigma_D-M\|_F^2 + \psi\big(b-\frac{1}{n_d}\sum_{(i,j) \in D}x_{ij}^\top (M+\psi \Sigma_D) x_{ij}\big) \\
    & = \frac{1}{2} \psi^2 \|\Sigma_D\|_F^2 + \psi(b-\frac{1}{n_d}\sum\limits_{(i,j) \in D}x_{ij}^\top M x_{ij}) -\frac{\psi^2}{n_d} \sum\limits_{(i,j) \in D}x_{ij}^\top \Sigma_D x_{ij} \\
    & =  \frac{1}{2} \psi^2 \|\Sigma_D\|_F^2 + \psi(b-\frac{1}{n_d}\sum\limits_{(i,j) \in D}x_{ij}^\top M x_{ij}) -\frac{\psi^2}{n_d} \sum\limits_{(i,j) \in D} \mathrm{trace}(x_{ij}^\top \Sigma_D x_{ij}) \\
    & =  \frac{1}{2} \psi^2 \|\Sigma_D\|_F^2 + \psi(b-\frac{1}{n_d}\sum\limits_{(i,j) \in D}x_{ij}^\top M x_{ij}) -\frac{\psi^2}{n_d} \sum\limits_{(i,j) \in D} \mathrm{trace}( \Sigma_D x_{ij}x_{ij}^\top) \\
    & =  \frac{1}{2} \psi^2 \|\Sigma_D\|_F^2 + \psi(b-\frac{1}{n_d}\sum\limits_{(i,j) \in D}x_{ij}^\top M x_{ij}) -\psi^2  \mathrm{trace}( \Sigma_D \frac{1}{n_d}\sum\limits_{(i,j) \in D}x_{ij}x_{ij}^\top) \\
     & =  \frac{1}{2} \psi^2 \|\Sigma_D\|_F^2 + \psi(b-\frac{1}{n_d}\sum\limits_{(i,j) \in D}x_{ij}^\top M x_{ij}) -\psi^2  \mathrm{trace}( \Sigma_D^\top \Sigma_D) \\
      & =  \frac{1}{2} \psi^2 \|\Sigma_D\|_F^2 + \psi(b-\frac{1}{n_d}\sum\limits_{(i,j) \in D}x_{ij}^\top M x_{ij}) -\psi^2 \|\Sigma_D\|_F^2  \\
      & =  -\frac{1}{2} \psi^2 \|\Sigma_D\|_F^2 + \psi(b-\frac{1}{n_d}\sum\limits_{(i,j) \in D}x_{ij}^\top M x_{ij})  \\
    \end{split}
    \label{dualfunc}
\end{equation}
To get the optimal $\psi^\star$ we have to maximize $g(\psi)$.
\begin{alignat*}{3}
     & g^\prime(\psi^\star) =  0 \\
    \implies &- \psi^\star  \|\Sigma_D\|_F^2 +  (b-\frac{1}{n_d}\sum\limits_{(i,j) \in D}x_{ij}^\top M x_{ij}) = 0 \\
    \implies &\psi^\star = \frac{(b-\frac{1}{n_d}\sum\limits_{(i,j) \in D}x_{ij}^\top M x_{ij})}{\|\Sigma_D\|_F^2}\\
   %  \implies &\psi^\star = \frac{(b-\|x_i,x_j\|_{MD}^2)}{\|\Sigma_D\|_F^2}
\end{alignat*}
But also from KKT condition $(3)$, we know $\psi \geq 0$. Combining with the last equation we get 
\begin{equation}
    \psi^\star= \max\Bigg \{0,\frac{(b-\frac{1}{n_d}\sum\limits_{(i,j) \in D}x_{ij}^\top M x_{ij})}{\|\Sigma_D\|_F^2}\ \Bigg \}
\end{equation}

So, putting the value of $\psi^\star$, finally we can write the projection from KKT condition 1 as,
\begin{equation}
    \Pi_{C_1}(M)= M + \max\Bigg \{0,\frac{(b-\frac{1}{n_d}\sum\limits_{(i,j) \in D}x_{ij}^\top M x_{ij})}{\|\Sigma_D\|_F^2}\ \Bigg \} \Sigma_D
\end{equation}

projection onto $\mathcal{C}_2$ is standard, so we are not discussing it here.
\newpage
\noindent\textbf{Step 3: Gradient w.r.t $\beta$ with fixed $M$.}

\begin{equation}
\begin{split}
    f(M^{k+1},\beta) &= \frac{1}{n_s}\sum\limits_{(i,j) \in S}{ x_{ij}^\top M^{k+1} x_{ij}} + \lambda \|M^{k+1}-\sum_{j=1}^{N}\beta_j M_j\|_F^2 \\
    &= K+  \lambda \|M^{k+1}-\sum_{j=1}^{N}\beta_j M_j\|_F^2 \\
    &= K+  \lambda \mathrm{trace}\Big((M^{k+1}-\sum_{j=1}^{N}\beta_j M_j)^\top(M^{k+1}-\sum_{j=1}^{N}\beta_j M_j)\Big)  \\
    &= K+ \lambda \beta_i^2 \mathrm{trace}(M_i^\top M_i) -2 \lambda\beta_i  \mathrm{trace}( M_i^\top (M^{k+1}-\sum_{j=1,j \neq i}^{N}\beta_j M_j) )
    \end{split}
    \label{phibeta}
\end{equation}
$K$ is term which is independent of $\beta$. Now differentiating equation~\eqref{phibeta} w.r.t $\beta_i$ we get ,
\begin{equation}
    \nabla_{\beta_i}  f(M^{k+1},\beta) = 2 \lambda\beta_i  \mathrm{trace}(M_i^\top M_i) -2 \lambda \mathrm{trace}( M_i^\top (M^{k+1}-\sum_{j=1,j \neq i}^{N}\beta_j M_j) )=a_i
\end{equation}
So, derivative of $f(M^{k+1},\beta)$ w.r.t $\beta$ is given by,
\begin{equation}
  \nabla_{\beta}  f(M^{k+1},\beta)  = \begin{bmatrix} 
  a_1 & a_2 & \ldots & a_N \\
\end{bmatrix}^\top
\end{equation}

\noindent\textbf{Step 4: Projection of $\beta$ onto $\mathcal{C}_3$.}
 \begin{equation}
    \Pi_{\mathcal{C}_3} (\beta) = \max\Bigg\{0,\frac{\beta}{\max \{1,\|\beta\|_2 \}} \Bigg \}
\end{equation}
The intuition here is that, when the norm of $\beta$ is greater than $1$ then $\max \{1,\|\beta\|_2 \}=\|\beta\|_2$ which implies the normalization of $\beta$. Similarly when the norm of $\beta$ is lesser or equal to $1$ then $\max \{1,\|\beta\|_2 \}=1$, which means keeping the $\beta$ as it is since it already lies in the unit norm ball. The maximum with $0$ essentially denotes the projection of any vector within the unit norm ball to the first quadrant of that ball only.\\

\section{Proof of the Theorems}
As mentioned in the paper the optimization proposed by us can be written in the same format as \cite{perrot2015theoretical}
\begin{equation}
     \underset{M \succeq 0}{\text{minimize}}~~L_T(M)+\lambda \|M-M_S\|_F^2
     \label{perrotopt_supp}
\end{equation}
where $M_S= \sum_{j=1}^{N} \beta_j M_j$ and
\begin{equation}
   L_T(M)=\frac{1}{n_s}\sum_{(i,j) \in S}{ x_{ij}^\top M x_{ij}}+\mu^\star\big(b-\frac{1}{n_d}\sum_{(i,j) \in D}x_{ij}^\top M x_{ij}\big)
   \label{LTM_supp}
\end{equation}
\setcounter{thm}{0}
\begin{thm}
For the convex and $k$-Lipschitz loss  defined  in~\eqref{LTM_supp} the average bound can be expressed as
\begin{equation}
    \mathbb{E}_{T \sim \mathcal{D}_{\mathcal{T}^n}}[L_{\mathcal{D}_\mathcal{T}}(M^\star)] \leq L_{\mathcal{D}_\mathcal{T}}(\widehat{M_S}) + \frac{8k^2}{\lambda n},
    \label{ineq_supp}
\end{equation}
where $n$ is the number of target labeled example, $M^\star$ is the optimal metric computed from Algorithm {\color{red}$1$}, $\widehat{M_S}$ is the average of all source metrics defined as $\frac{\sum_{j=1}^{N} M_j}{N}$, $\mathbb{E}_{T \sim \mathcal{D}_{\mathcal{T}^n}}[L_{\mathcal{D}_\mathcal{T}}(M^\star)]$ is the expected loss by $M^\star$ computed over distribution $\mathcal{D}_\mathcal{T}$ and $L_{\mathcal{D}_\mathcal{T}}(\widehat{M_S})$ is the loss of average of source metrics computed over  $\mathcal{D}_\mathcal{T}$. 
\label{thm1_supp}
\end{thm}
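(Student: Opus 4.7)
My plan is to mirror the single-source stability argument of Perrot and Habrard~\cite{perrot2015theoretical} (Theorem~3 there), adapted to the fact that we are now jointly minimizing over both $M$ and the weight vector $\beta$. The trick is to exploit two properties of Eq.~\eqref{opt:main_opt_supp}: (i) the regularizer $\lambda\|M-\sum_j\beta_jM_j\|_F^2$ is $2\lambda$-strongly convex in $M$ for any fixed $\beta$, and (ii) the uniform weight vector $\beta_{\mathrm{avg}}=(1/N,\ldots,1/N)$ lies inside $\mathcal{C}_3$ because $\|\beta_{\mathrm{avg}}\|_2=1/\sqrt{N}\leq 1$ and its entries are non-negative, so $(\widehat{M_S},\beta_{\mathrm{avg}})$ is a feasible point of the joint program.

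First I would use the minimization property to anchor the empirical risk of $M^\star$ to $\widehat{M_S}$. Evaluating the objective at the feasible point $(M,\beta)=(\widehat{M_S},\beta_{\mathrm{avg}})$ makes the regularizer vanish, so optimality of $(M^\star,\beta^\star)$ gives
\begin{equation*}
L_T(M^\star)\ \leq\ L_T(M^\star)+\lambda\Big\|M^\star-\sum_{j=1}^{N}\beta_j^\star M_j\Big\|_F^2\ \leq\ L_T(\widehat{M_S}).
\end{equation*}
Since $\widehat{M_S}$ does not depend on the sample $T$, taking expectation over $T\sim\mathcal{D}_\mathcal{T}^n$ and using unbiasedness of empirical risk yields $\mathbb{E}_T[L_T(M^\star)]\leq L_{\mathcal{D}_\mathcal{T}}(\widehat{M_S})$.

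Next I would close the generalization gap $\mathbb{E}_T[L_{\mathcal{D}_\mathcal{T}}(M^\star)-L_T(M^\star)]$ via uniform stability, following Bousquet--Elisseeff exactly as in~\cite{perrot2015theoretical}. Let $T$ and $T^i$ be two training sets differing in one example, and let $M^\star,\widetilde{M^\star}$ be the corresponding joint optima (with companion weights $\beta^\star,\widetilde{\beta^\star}$). Because $L_T(\cdot)+\lambda\|\cdot-\sum_j\beta_j^\star M_j\|_F^2$ is $2\lambda$-strongly convex in $M$ and $M^\star$ is its minimizer, a standard two-sided first-order inequality at $\widetilde{M^\star}$, together with the analogous inequality on $T^i$ (where the regularizer center is $\sum_j\widetilde{\beta_j^\star} M_j$), and the crucial observation that the only data-dependent term is $L_T$ (the regularizer itself is data-independent), yields $\|M^\star-\widetilde{M^\star}\|_F\leq 2k/(\lambda n)$ after combining and using $k$-Lipschitzness of $L_T$ in $M$. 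Lipschitzness then gives the uniform stability constant $\beta_{\mathrm{stab}}\leq 4k^2/(\lambda n)$, and Bousquet--Elisseeff's generalization bound in expectation yields
\begin{equation*}
\mathbb{E}_T\bigl[L_{\mathcal{D}_\mathcal{T}}(M^\star)-L_T(M^\star)\bigr]\ \leq\ \frac{8k^2}{\lambda n}.
\end{equation*}
Adding this to the previous step produces~\eqref{ineq_supp}.

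The main obstacle is the stability step, because $\beta$ also shifts when we perturb one sample, and a priori $\widetilde{\beta^\star}\neq\beta^\star$. The way I would handle it is to keep the center fixed at $\sum_j\beta_j^\star M_j$ when writing the strong-convexity inequality on $T$ and at $\sum_j\widetilde{\beta_j^\star} M_j$ on $T^i$; when the two inequalities are added, the difference of the two centers contributes a cross-term that is \emph{non-negative} (by another use of optimality of $\beta^\star,\widetilde{\beta^\star}$ in their respective programs, together with the fact that the $\beta$-subproblem is itself convex and both $\beta^\star,\widetilde{\beta^\star}\in\mathcal{C}_3$), so it can be discarded without loosening the bound. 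This reduces the situation to Perrot's single-source stability computation and the constant $8k^2/(\lambda n)$ survives unchanged. Finally, a short verification that $L_T$ in~\eqref{LTM_supp} is $k$-Lipschitz in $M$ on the domain of interest (because the feature differences $x_{ij}$ are bounded and $\mu^\star$ is bounded by a standard dual-feasibility argument) completes the proof; this verification is deferred to the Lipschitz-constant lemma mentioned parenthetically in the theorem statement.
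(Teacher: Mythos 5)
Your first step is exactly the paper's argument: the paper proves the theorem by invoking Theorem~2 of \cite{perrot2015theoretical} with $M_S$ replaced by $\sum_{j}\beta_jM_j$ for an arbitrary fixed $\beta\in\mathcal{C}_3$, and then observing, as you do, that the uniform vector $(1/N,\ldots,1/N)$ is feasible (non-negative entries, $\ell_2$-norm $1/\sqrt{N}\le 1$), which turns the comparator into $\widehat{M_S}$. So the comparator/optimality half of your argument is correct and is essentially the entire content of the paper's proof, which does not re-derive the stability constant at all.

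The gap is in your stability step, which is precisely where you try to be more careful than the paper about the fact that $M^\star$ is the \emph{joint} minimizer over $(M,\beta)$ rather than the minimizer for a fixed $\beta$. Write $c=\sum_j\beta_j^\star M_j$ and $\tilde c=\sum_j\widetilde{\beta_j^\star}M_j$. Adding your two strong-convexity inequalities with these two centers leaves, after expanding the squared norms, the cross-term $2\lambda\langle M^\star-\widetilde{M^\star},\,c-\tilde c\rangle$ on the same side as the loss differences. For fixed $M$ the optimal $\beta$ makes $\sum_j\beta_jM_j$ the Euclidean projection of $M$ onto the convex set $\{\sum_j\beta_jM_j:\beta\in\mathcal{C}_3\}$, so firm non-expansiveness gives $\langle M^\star-\widetilde{M^\star},c-\tilde c\rangle\ge 0$: the cross-term is indeed non-negative, but it sits on the side that must be \emph{upper}-bounded by the one-sample loss difference, so a non-negative term there cannot be discarded --- you would need it to be non-positive for the argument to close. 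The deeper issue is that the objective in \eqref{opt:main_opt_supp} is not jointly strongly convex in $(M,\beta)$: the partially minimized objective $L_T(M)+\lambda\,\mathrm{dist}\bigl(M,\{\sum_j\beta_jM_j:\beta\in\mathcal{C}_3\}\bigr)^2$ is convex but its regularizer vanishes on an entire affine-like set, so replace-one stability of the jointly learned $M^\star$ does not follow from the single-source computation, and the constant $8k^2/(\lambda n)$ does not ``survive unchanged.'' The paper sidesteps this by applying \cite{perrot2015theoretical} with $\beta$ held fixed, i.e.\ it implicitly reads $M^\star$ as the minimizer of the fixed-$\beta$ problem; to make your version rigorous you would need either to state the bound for that object, or to supply a genuinely new stability argument for the joint minimizer (e.g.\ adding strong convexity in the $\beta$ block, or a bound on $\|c-\tilde c\|$ strictly better than non-expansiveness). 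A smaller point: $\widehat{M_S}$ need not satisfy the dissimilarity constraint in $\mathcal{C}_1$, so your ``feasible point'' claim should be phrased for the Lagrangian form \eqref{LTM_supp}, where only $M\succeq 0$ and $\beta\in\mathcal{C}_3$ are required.
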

\begin{proof}
If there is a single source metric is available for transfer , the proof has been shown in \cite{perrot2015theoretical}. In case of multiple metric for any fixed $\beta$, we can directly replace $M_S$ by $\sum_{j=1}^{N} \beta_j M_j$ in the \textbf{Theorem 2} in \cite{perrot2015theoretical} to get,
\begin{equation}
    \mathbb{E}_{T \sim \mathcal{D}_{\mathcal{T}^n}}[L_{\mathcal{D}_\mathcal{T}}(M^\star)] \leq L_{\mathcal{D}_\mathcal{T}}\big(\sum_{j=1}^{N} \beta_j M_j\big) + \frac{8k^2}{\lambda n}
    \label{ineqbeta}
\end{equation}

which is true $\forall \beta \in \mathcal{C}_3$. Where,
\begin{equation}
\beta= 
    \begin{bmatrix}
    \beta_1 &\beta_2 &\ldots &\beta_N
    \end{bmatrix}^\top \in \mathbb{R}^N
\end{equation}

Clearly without loss of generality we can write  $\beta=\beta^\prime$ where,
\begin{equation}
\beta^\prime= 
    \begin{bmatrix}
    \frac{1}{N} &\frac{1}{N} &\ldots &\frac{1}{N}
    \end{bmatrix}^\top \in \mathcal{C}_3
\end{equation}
since, $\beta^\prime \geq 0$ and $\|\beta^\prime\|_2 =\frac{1}{\sqrt{N}} \leq 1$. So, plugging $\beta^\prime$ in equation~\eqref{ineqbeta} we get equation~\eqref{ineq}, which completes the proof.
\end{proof}

\begin{thm}
With probability $(1-\delta)$, for any metric $M$ learned from Algorithm {\color{red}$1$} we have,
\begin{equation}
\begin{aligned}
    L_{\mathcal{D}_\mathcal{T}}(M) \leq &L_T(M) + \mathcal{O}\big(\frac{1}{n}\big) +  \Bigg(\sqrt{\frac{L_T(\sum_{j=1}^{N} \beta_{j} M_j)}{\lambda}}+ 
    &\|\sum_{j=1}^{N} \beta_j M_j\|_F \Bigg) \sqrt{\frac{\ln(\frac{2}{\delta})}{2n}},
\end{aligned}
\label{gbound_supp2}
\end{equation}
\label{thm2_supp}
where $L_{\mathcal{D}_\mathcal{T}}(M)$ is the loss over the original target distribution (true risk), $L_T(M)$ is the loss over the existing target data (empirical risk), and $n$ is the number of target samples. 
\end{thm}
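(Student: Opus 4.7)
The plan is to reduce Theorem~\ref{thm2_supp} to the generalization bound for the single-source regularized metric-learning problem of Perrot and Habrard, applied with the aggregate source metric $M_S := \sum_{j=1}^{N} \beta_j M_j$ treated as fixed. As noted in the proof of Theorem~\ref{thm1_supp}, once $\beta$ is held constant the inner problem is exactly the formulation \eqref{perrotopt_supp}, so their machinery applies without modification. The proof will proceed in three steps: algorithmic stability, a stability-based concentration inequality, and a norm bound on the learned $M$.

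First I would invoke uniform $\mathcal{O}(1/n)$-stability of the learner. Because the objective is $\lambda$-strongly convex in $M$ (from the regularizer $\lambda \|M - M_S\|_F^2$) and $L_T$ is $k$-Lipschitz in $M$ (the hypothesis carried over from Theorem~\ref{thm1_supp}), a standard Bousquet--Elisseeff argument shows that replacing a single target example perturbs the minimizer by $\mathcal{O}\!\bigl(1/(\lambda n)\bigr)$ in Frobenius norm, hence the per-example loss by $\mathcal{O}\!\bigl(k^{2}/(\lambda n)\bigr)$. Invoking the Bousquet--Elisseeff concentration result for stable algorithms then yields, with probability at least $1-\delta$,
\begin{equation*}
L_{\mathcal{D}_\mathcal{T}}(M) \;\leq\; L_T(M) \;+\; \mathcal{O}\!\Bigl(\tfrac{1}{n}\Bigr) \;+\; \bigl(2B + \mathcal{O}(1/n)\bigr)\sqrt{\tfrac{\ln(2/\delta)}{2n}},
\end{equation*}
where $B$ is a pointwise upper bound on the loss and satisfies $B \leq \mathrm{const}\cdot k\,\|M\|_F$ by the $k$-Lipschitz property.

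The key technical step is then to bound $\|M\|_F$ in the form required by \eqref{gbound_supp2}. Using optimality of $M$ in the regularized objective and plugging in $M_S$ itself as a feasible comparator yields
\begin{equation*}
L_T(M) + \lambda \|M - M_S\|_F^2 \;\leq\; L_T(M_S),
\end{equation*}
and since $L_T(M) \geq 0$ we obtain $\|M - M_S\|_F \leq \sqrt{L_T(M_S)/\lambda}$. The triangle inequality then gives
\begin{equation*}
\|M\|_F \;\leq\; \sqrt{L_T(M_S)/\lambda} \;+\; \|M_S\|_F,
\end{equation*}
which, after absorbing the $k$ factor into the leading $\mathcal{O}$ constants and substituting $M_S = \sum_{j=1}^{N}\beta_j M_j$, produces exactly the coefficient multiplying $\sqrt{\ln(2/\delta)/(2n)}$ in \eqref{gbound_supp2}. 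Since the derivation holds for every $\beta \in \mathcal{C}_3$, in particular for the $\beta$ returned by Algorithm~1, the bound follows.

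The main obstacle is the stability step: showing that the minimizer moves by only $\mathcal{O}\!\bigl(1/(\lambda n)\bigr)$ under a single-sample replacement requires carefully combining $\lambda$-strong convexity with the inequality constraints $M \succeq 0$ and the dissimilar-pair margin that define the feasible set. Fortunately, Perrot and Habrard carry out precisely this analysis for a generic fixed source metric, and after fixing $\beta$ our problem lies in their class; the bound therefore transfers verbatim under the substitution $M_S \leftarrow \sum_{j=1}^{N}\beta_j M_j$, and no additional stability argument specific to the multi-source setting is needed.
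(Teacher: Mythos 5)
Your proposal is correct and follows essentially the same route as the paper: the paper fixes $\beta$, recasts the problem as the single-source formulation \eqref{perrotopt_supp} with $M_S=\sum_{j=1}^{N}\beta_j M_j$, and invokes Perrot--Habrard's admissibility lemma and stability-based generalization theorem, where the admissibility constant is obtained by exactly your comparator argument $L_T(M)+\lambda\|M-M_S\|_F^2\leq L_T(M_S)$ yielding the coefficient $\sqrt{L_T(M_S)/\lambda}+\|M_S\|_F$. The only difference is bookkeeping: the paper additionally rewrites $L_T$ as a pairwise loss with soft labels $\zeta_{ij}$ so as to match that framework verbatim, a step you implicitly subsume by carrying over the $k$-Lipschitz hypothesis.
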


\begin{proof}
In \cite{perrot2015theoretical}, $L_T(M)$ is defined as,
\begin{equation}
    L_T(M)=\frac{1}{n^2} \sum\limits_{(z_i,z_j)\in T} l(M,z_i,z_j)
\end{equation}
\end{proof}
The authors in  \cite{perrot2015theoretical} have used a specific loss for analysis,
\begin{equation}
    l(M,z_i,z_j)=[yy^\prime((z_i-z_j)^\top M (z_i-z_j)-\gamma_{yy^\prime})]_+
\end{equation}
For our case,
\begin{equation}
\begin{split}
   L_T(M) &=\frac{1}{n_s}\sum_{(i,j) \in S}{ z_{ij}^\top M z_{ij}}+\mu^\star\big(b-\frac{1}{n_d}\sum_{(i,j) \in D}z_{ij}^\top M z_{ij}\big)\\
   &=\frac{1}{(n_s+n_d)}\frac{(n_s+n_d)}{n_s}\sum_{(i,j) \in S}{ z_{ij}^\top M z_{ij}}+\frac{\mu^\star b (n_s+n_d)}{(n_s+n_d)}-\frac{\mu^\star (n_s+n_d)}{n_d}.\frac{1}{(n_s+n_d)}\sum_{(i,j) \in D}z_{ij}^\top M z_{ij}\\
  &=  \frac{1}{n^2}\sum\limits_{(i,j) \in T}(\zeta_{ij}(z_i-z_j)^\top M (z_i-z_j) +\gamma)
\end{split}
\end{equation}
In our case we took similar and dissimilar pairs in equal number. So, for our case $n_s=n_d=\frac{n^2}{2}$ which implies $(n_s+n_d)=n^2$.
Also, $\zeta_{ij}=(1+\frac{n_d}{n_s})=2$ if $(i,j) \in S$ and 
$\zeta_{ij}=-\mu ^\star(1+\frac{ n_s}{n_d})=-2\mu^\star$ if $(i,j) \in D$ are soft labels. Also $\gamma=  \mu^\star b (n_s+n_d)=\mu^\star b n^2$. 
so for our case,
\begin{equation}
    l(M,z_i,z_j)=(\zeta_{ij}(z_i-z_j)^\top M (z_i-z_j) +\gamma)
\label{smallloss}
\end{equation}
Also unlike \cite{perrot2015theoretical} our source metric is defined as $M_S=\sum_{j=1}^{N} \beta_j M_j$. 
With the loss in equation~\eqref{smallloss} if we follow the exact same steps as in proof of the \textbf{Lemma 2} of \cite{perrot2015theoretical} then we will end up with the fact that our proposed loss is $(\sigma,m)$ admissible with $m=2(1+\mu^\star)\underset{x,x^\prime}{\text{max}}\|x-x^\prime\|_2^2\Bigg(\sqrt{\frac{L_T(\sum_{j=1}^{N} \beta_{j} M_j)}{\lambda}}+ 
    \|\sum_{j=1}^{N} \beta_j M_j\|_F \Bigg)$ and $\sigma=0$.
    
Now putting these values of $\sigma$ and $m$ in the equation of inequality of \textbf{Theorem 4} of   \cite{perrot2015theoretical} which is,\\
\begin{equation}
\begin{aligned}
    L_{\mathcal{D}_\mathcal{T}}(M) \leq &L_T(M) + \mathcal{O}\big(\frac{1}{n}\big) +  (4\sigma+2m+c) \sqrt{\frac{\ln(\frac{2}{\delta})}{2n}},
\end{aligned}
\label{gbound_supp1}
\end{equation}
and ignoring $c$ and the constant factor  which are not functions of source metrics or their weights we conclude our proof.
\subsection{Finding lipschitz constant for our loss}
\noindent\textbf{Goal:} Our goal is to show the $k$ in equation~\eqref{ineq} has a finite value.
According to the definition the loss $l(M,x,x^\prime)$ is $k$-lipschitz with respect to its first argument if for any pair of matrices $M$ and $M^\prime$ and pair of samples $x$ and $x^\prime$ we have the inequality as follows for a finite non-negative $k$ ($0 \leq k < \infty$)
\begin{equation}
  |l(M,x,x^\prime)-l(M^\prime,x,x^\prime)| \leq k\|M-M^\prime\|_F  
  \label{klip}
\end{equation}
\begin{lemma}
The loss defined in equation~\eqref{smallloss} is $k$-lipschitz with $k= 2 \max\left(1,\mu^\star\right)\underset{x,x^\prime}{\max}\|x-x^\prime\|_2^2$
\end{lemma}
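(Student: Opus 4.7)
The plan is to establish the Lipschitz bound directly from the definition in \eqref{klip} by exploiting the linearity of the loss in $M$. First I would subtract the two losses: since the constant $\gamma$ is independent of $M$, it cancels, leaving
\begin{equation*}
l(M,x,x') - l(M',x,x') = \zeta_{xx'}\,(x-x')^\top (M - M')(x - x'),
\end{equation*}
where $\zeta_{xx'} \in \{2, -2\mu^\star\}$ depending on whether the pair is similar or dissimilar.

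Next I would rewrite the scalar quadratic form as a Frobenius inner product, using the identity $u^\top A u = \langle A, uu^\top\rangle_F = \mathrm{trace}(A^\top uu^\top)$, applied with $u = x - x'$ and $A = M - M'$. This gives
\begin{equation*}
|l(M,x,x') - l(M',x,x')| = |\zeta_{xx'}| \cdot |\langle M - M', (x-x')(x-x')^\top\rangle_F|.
\end{equation*}
By Cauchy--Schwarz in the Frobenius inner product, the right-hand side is bounded by $|\zeta_{xx'}| \cdot \|M - M'\|_F \cdot \|(x-x')(x-x')^\top\|_F$. The Frobenius norm of the rank-one matrix $(x-x')(x-x')^\top$ equals $\|x-x'\|_2^2$, which is a standard fact I would quickly verify by noting that for any vector $u$, $\|uu^\top\|_F^2 = \mathrm{trace}(uu^\top uu^\top) = (u^\top u)^2$.

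Finally I would bound the two scalar coefficients. Since $\zeta_{xx'}$ is either $2$ or $-2\mu^\star$, we have $|\zeta_{xx'}| \leq 2\max(1,\mu^\star)$. Taking the supremum of $\|x-x'\|_2^2$ over the (bounded) feature domain completes the bound with $k = 2\max(1,\mu^\star)\,\sup_{x,x'}\|x-x'\|_2^2$, which matches the stated constant. The only mild subtlety is justifying that the supremum over $x,x'$ is finite; this is immediate under the standard assumption in \cite{perrot2015theoretical} that the features lie in a bounded set (equivalently, are normalized), so no real obstacle arises. The proof is essentially a one-line Cauchy--Schwarz argument once the quadratic form is rewritten as a Frobenius inner product.
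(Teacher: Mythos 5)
Your proposal is correct and follows essentially the same route as the paper: cancel the constant $\gamma$, pull out $|\zeta|\leq\max(2,2\mu^\star)=2\max(1,\mu^\star)$, and bound the quadratic form $|(x-x')^\top(M-M')(x-x')|$ by $\|x-x'\|_2^2\,\|M-M'\|_F$ before taking the supremum over the feature pairs. The only difference is that you spell out the last inequality explicitly as Cauchy--Schwarz on the Frobenius inner product with the rank-one matrix $(x-x')(x-x')^\top$, which the paper uses without comment.
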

\begin{proof}
\begin{equation}
\begin{split}
|l(M,x_i,x_j)-l(M^\prime,x_i,x_j)| &\leq  
|(\zeta_{ij}(x_i-x_j)^\top M (x_i-x_j) +\gamma)-(\zeta_{ij}(x_i-x_j)^\top M^\prime (x_i-x_j) +\gamma)| \\
& \leq |\zeta_{ij} (x_i-x_j)^\top (M-M^\prime) (x_i-x_j)|\\
& \leq \max \left(|\zeta_{ij}|\right)|(x_i-x_j)^\top (M-M^\prime) (x_i-x_j)| \\
& \leq \max \left(2,2\mu^\star \right)|(x_i-x_j)^\top (M-M^\prime) (x_i-x_j)| \\
& \leq 2 \max \left(1,\mu^\star \right)\|x_i-x_j\|_2^2
\|M-M^\prime\|_F \\
& \leq 2 \max\left(1,\mu^\star\right)\underset{x,x^\prime}{\max}\|x-x^\prime\|_2^2 \|M-M^\prime\|_F
\end{split}
\end{equation}
Comparing this inequality with eq.~\eqref{klip} we get $k= 2 \max\left(1,\mu^\star\right)\underset{x,x^\prime}{\max}\|x-x^\prime\|_2^2$, which is clearly non-negative and finite.
\end{proof}
%\newpage

\section{On-boarding a Single New Camera}
This section covers the camera wise experimental results of on-boarding a single new camera (See Figure~(\ref{fig:singlecamwardold},\ref{fig:singlecamraid},\ref{fig:singlecammarket},\ref{fig:singlecammsmt}). We show for each dataset the camera wise CMC curves that are averaged to a single CMC curve in the main paper. We also showed the comparison of GFK based methods in their original setting where source data is used during target adaptation in WARD dataset (See Figure~\ref{fig:singlecamwardoldnew}).

\begin{figure}[H]
\centering
\large \underline{Camera wise CMC curves for WARD dataset}\par\medskip
\begin{subfigure}{0.3\textwidth}
\includegraphics[width=\textwidth]{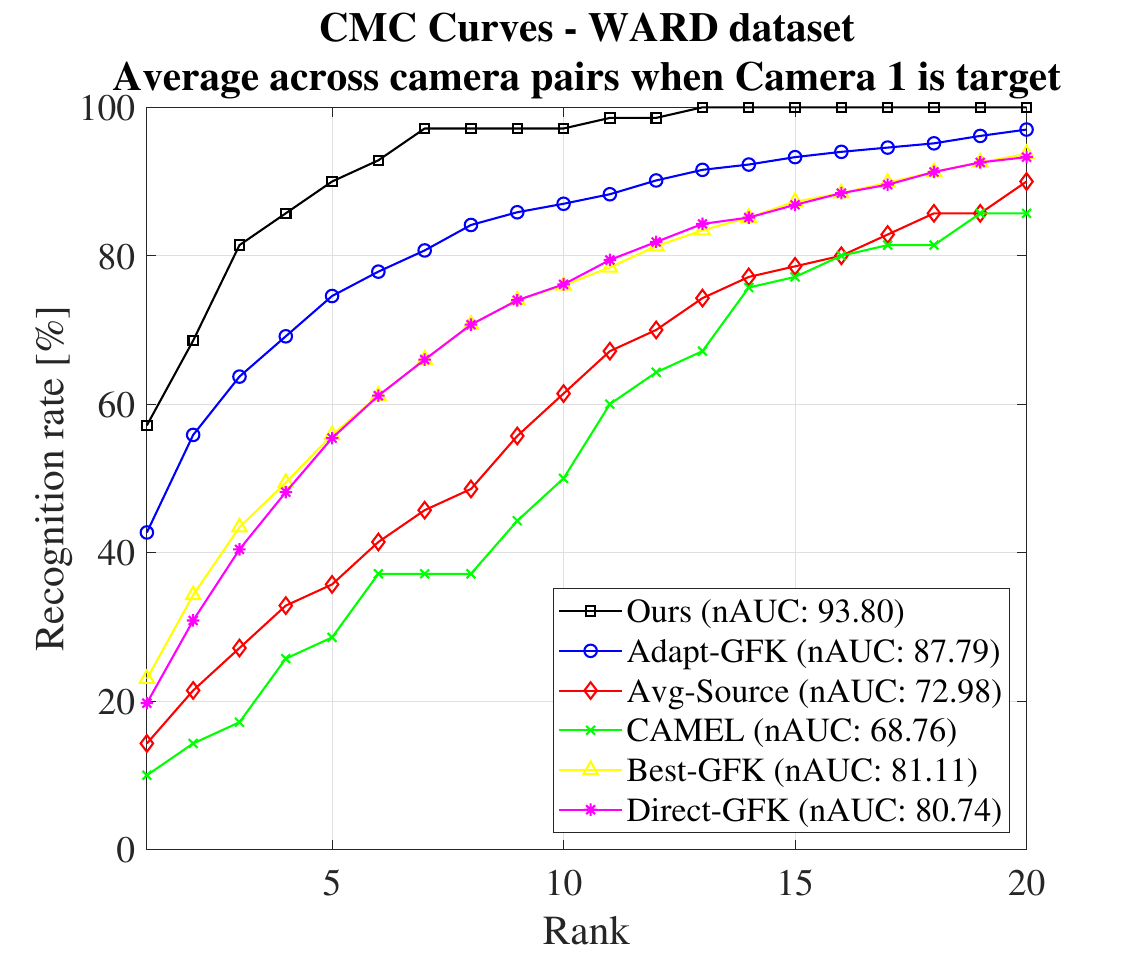}
\caption{}
\end{subfigure}
\begin{subfigure}{0.3\textwidth}
\includegraphics[width=\textwidth]{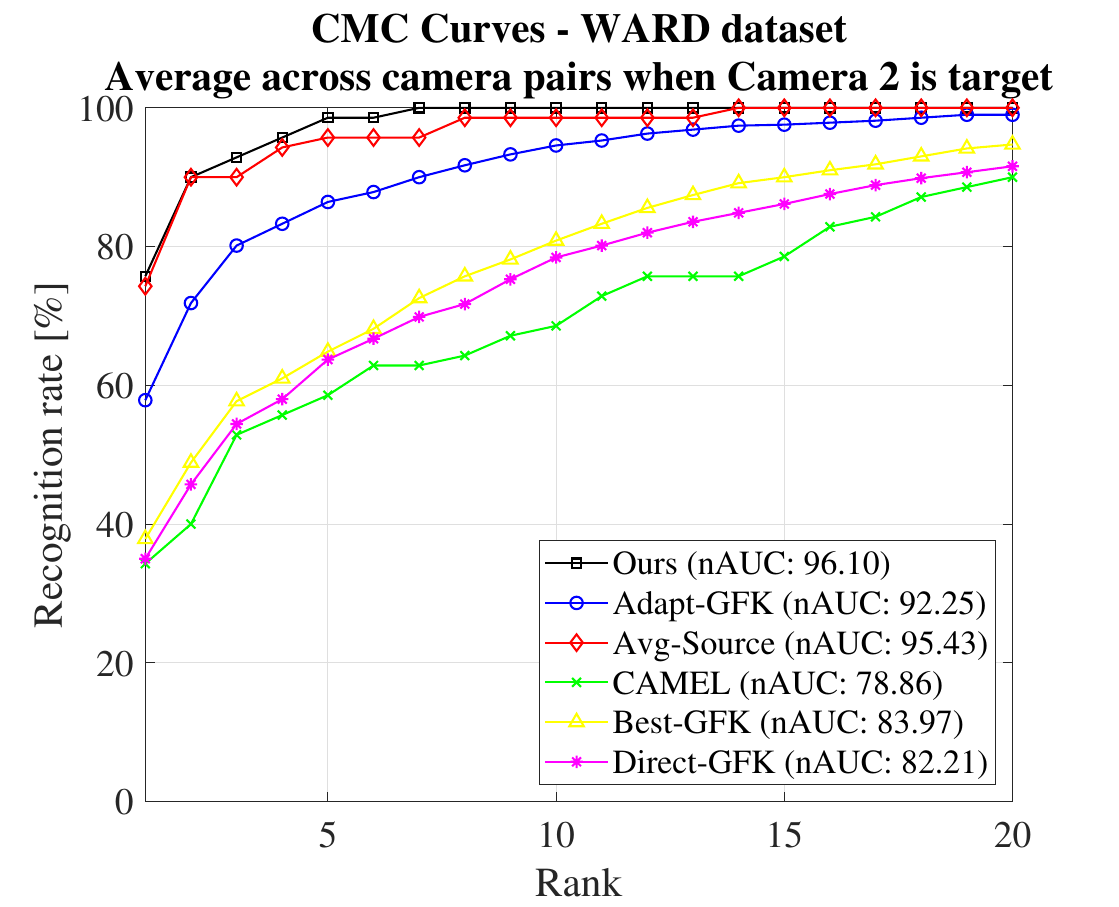}
\caption{}
\end{subfigure}
\begin{subfigure}{0.3\textwidth}
\includegraphics[width=\textwidth]{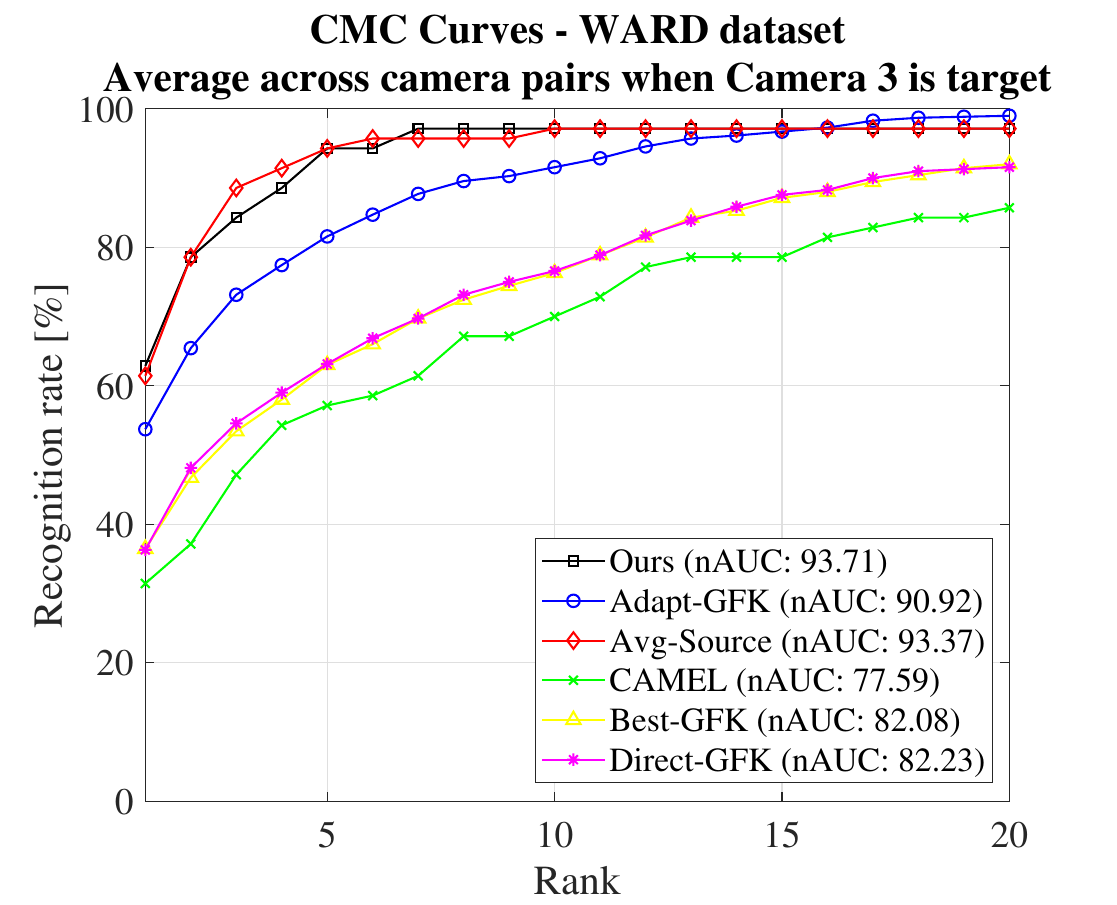}
\caption{}
\end{subfigure}
%\caption{Single-cam ward.}
\caption{CMC curves for WARD\cite{martinel2012re} with 3 cameras. In this experiment each camera is shown as target while other two cameras served as source. The percentage label of new persons between the new target camera and the existing source cameras is taken to be 20\% in this case. The most competitive method here is Adapt-GFK which is outperformed by our method in nAUC with margins 6\%, 3.5\% and 2.79\% for camera 1,2 and 3 as target (plot a, b and c) respectively. In this case Adapt-GFK is calculated using the GFK matrix calculated by only using the limited labelled target data after the installation of new camera. Moreover for camera 1 as target (plot (a)) our method outperforms Adapt-GFK by a large rank-1 margin of almost 16\%. Notable thing in this case is that there is only one source metric available for this dataset which is also handled by our multiple source metric transfer algorithm efficiently. Our method significantly outperform the semisupervised method CAMEL for all the plots which shows the strength of our method when a little target labeled data availabe. Also, our method outperforms Avg-Source for all the plots which is a proof of implication of Theorem~\ref{thm1}. }
\label{fig:singlecamwardold}
\end{figure}

\begin{figure}[ht]
\centering
\large \underline{Camera wise CMC curves for WARD dataset} \\ \large \underline{(GFK computed for other relevant methods using old source data and new target data)}\par\medskip
\begin{subfigure}{0.3\textwidth}
\includegraphics[width=\textwidth]{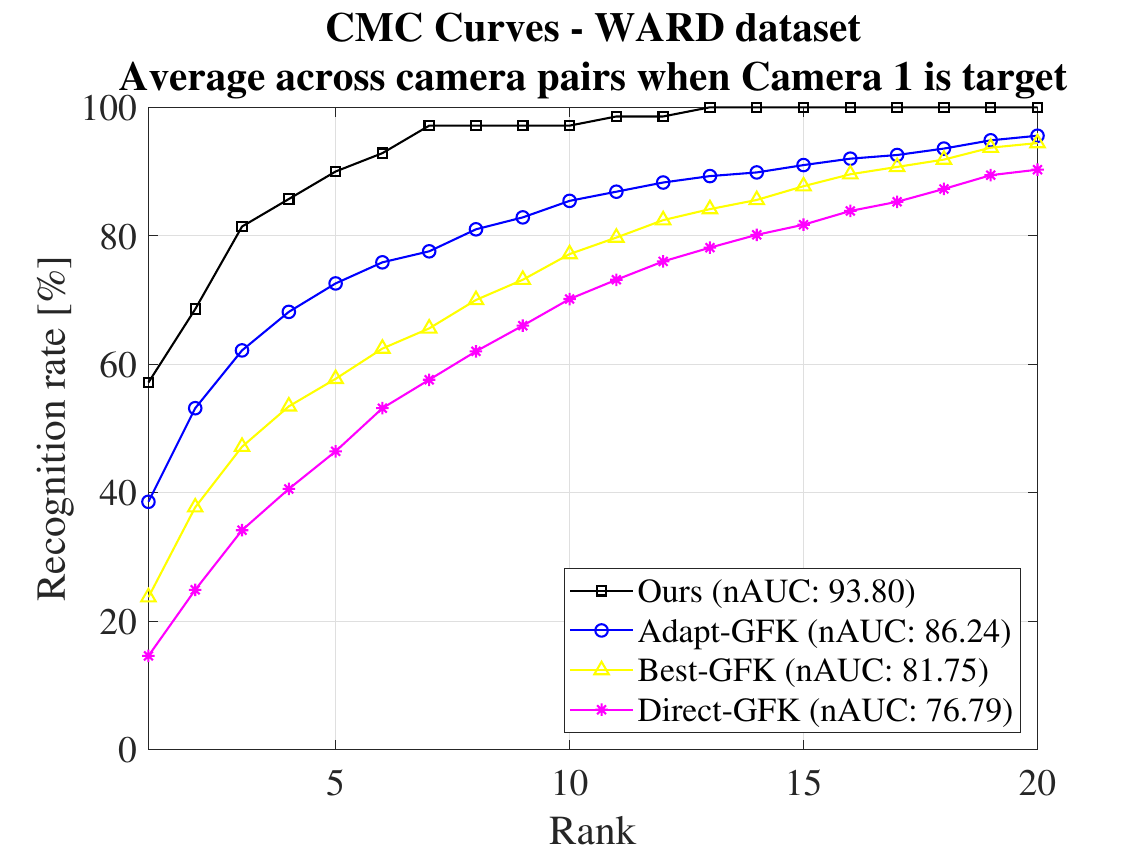}
\caption{}
\end{subfigure}
\begin{subfigure}{0.3\textwidth}
\includegraphics[width=\textwidth]{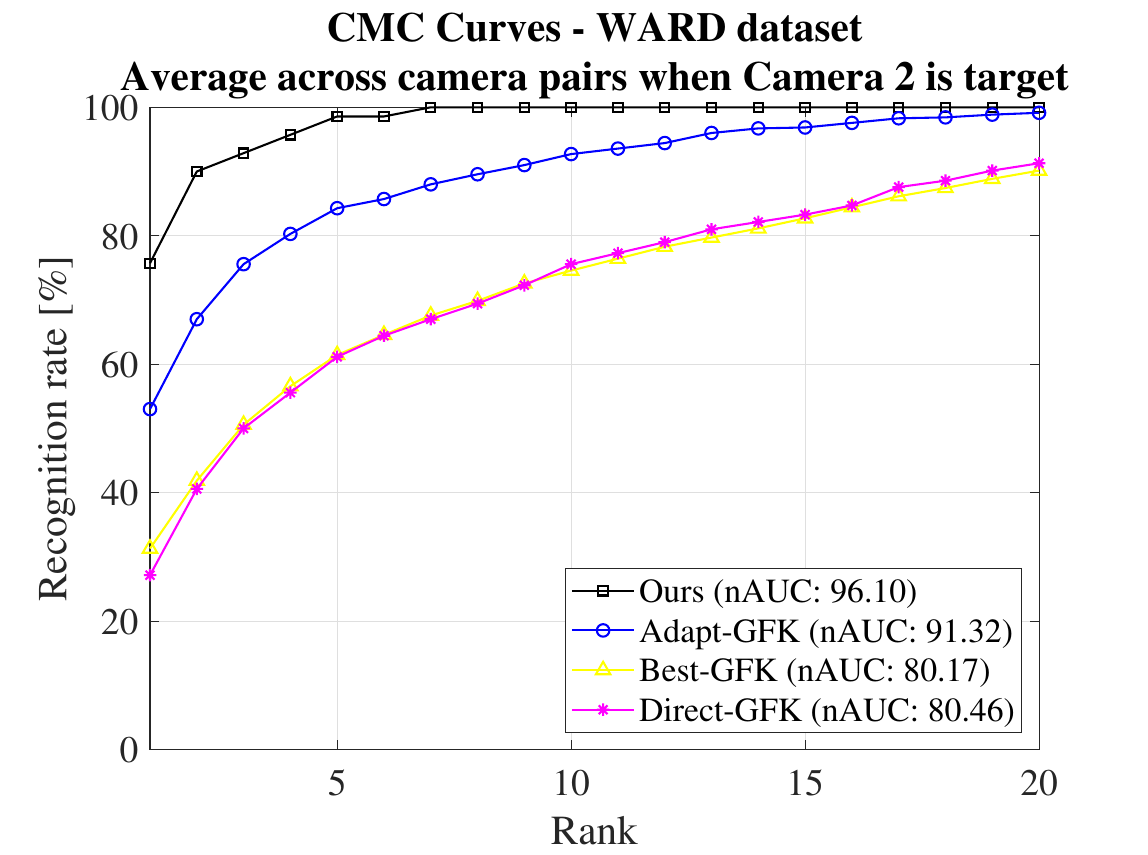}
\caption{}
\end{subfigure} 
\begin{subfigure}{0.3\textwidth}
\includegraphics[width=\textwidth]{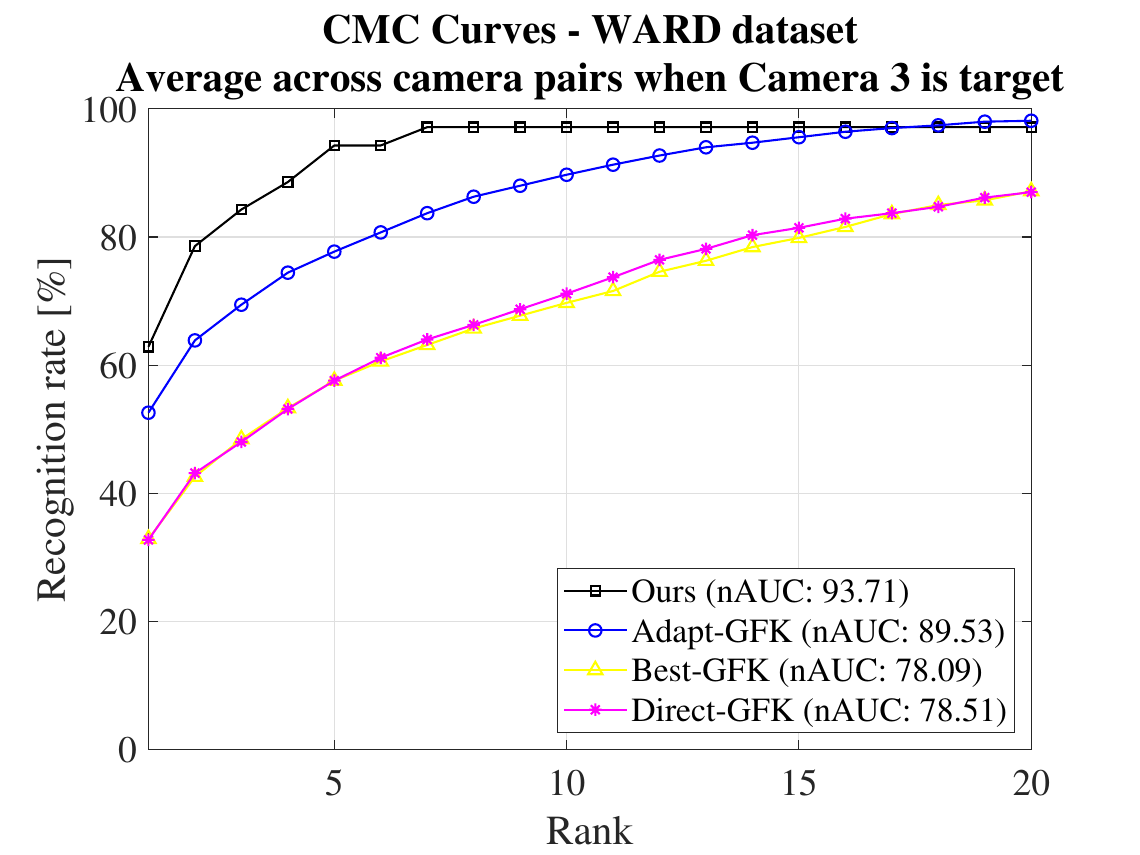}
\caption{}
\end{subfigure}
%\caption{Single-cam ward (old+new GFK).}
\caption{The setting in this case is exactly same as the setting of Figure~\ref{fig:singlecamwardold}. However this experiment is done only to compare our method with GFK methods in the original settings \cite{panda2017unsupervised} where the assumption was of the availability of source data. In this case GFK is calculated using the old source data as well as new limited target data. Our method significantly outperforms all the GFK based methods in this case also. It proves that even if our method does not use source data, it still outperforms the doamin adaptation methods which uses source data. }
\label{fig:singlecamwardoldnew}
\end{figure}

\begin{figure}[H]
\centering
\large \underline{Camera wise CMC curves for RAiD dataset}\par\medskip
\begin{subfigure}{0.4\textwidth}
\includegraphics[width=0.9\textwidth]{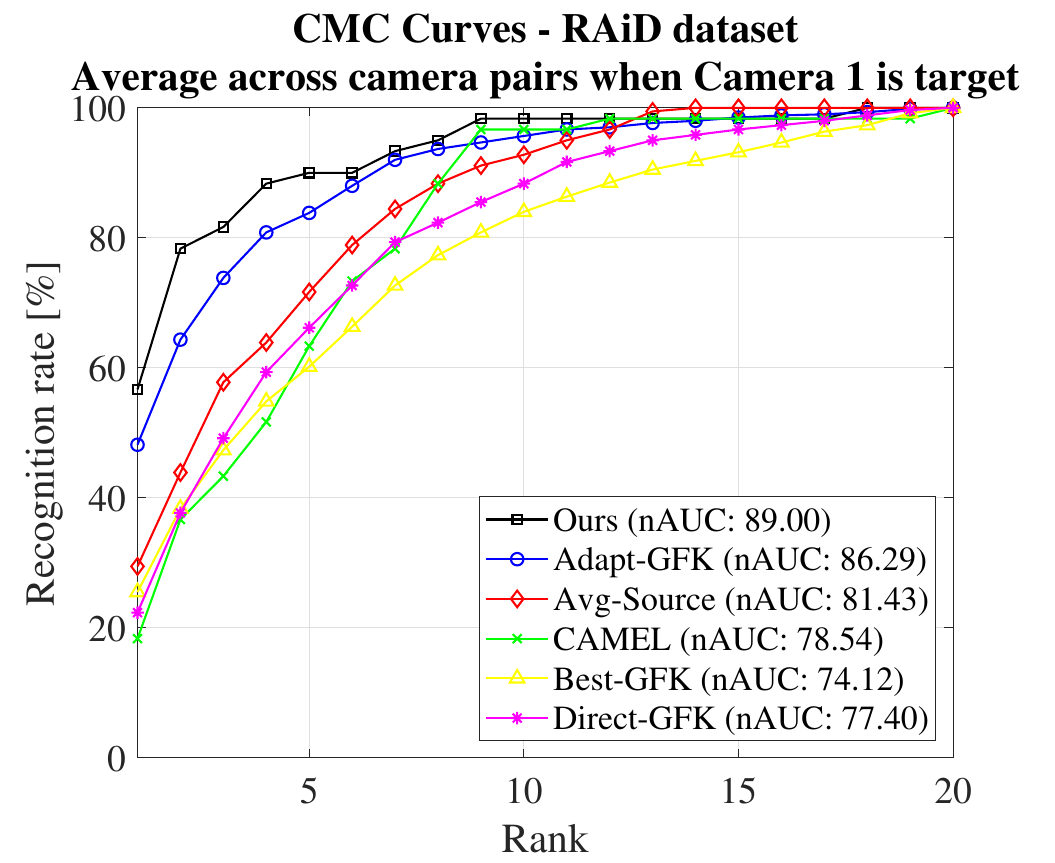}
\caption{}
\end{subfigure}
\begin{subfigure}{0.4\textwidth}
\includegraphics[width=0.9\textwidth]{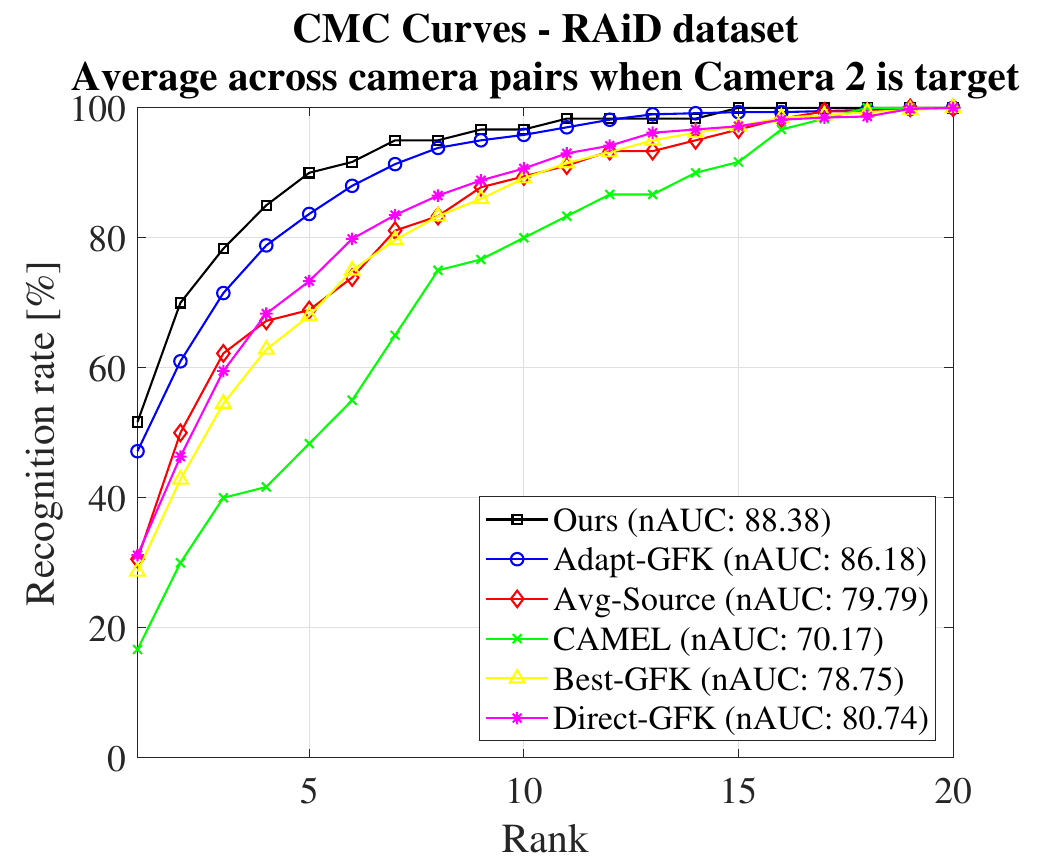}
\caption{}
\end{subfigure} \\
\begin{subfigure}{0.4\textwidth}
\includegraphics[width=0.9\textwidth]{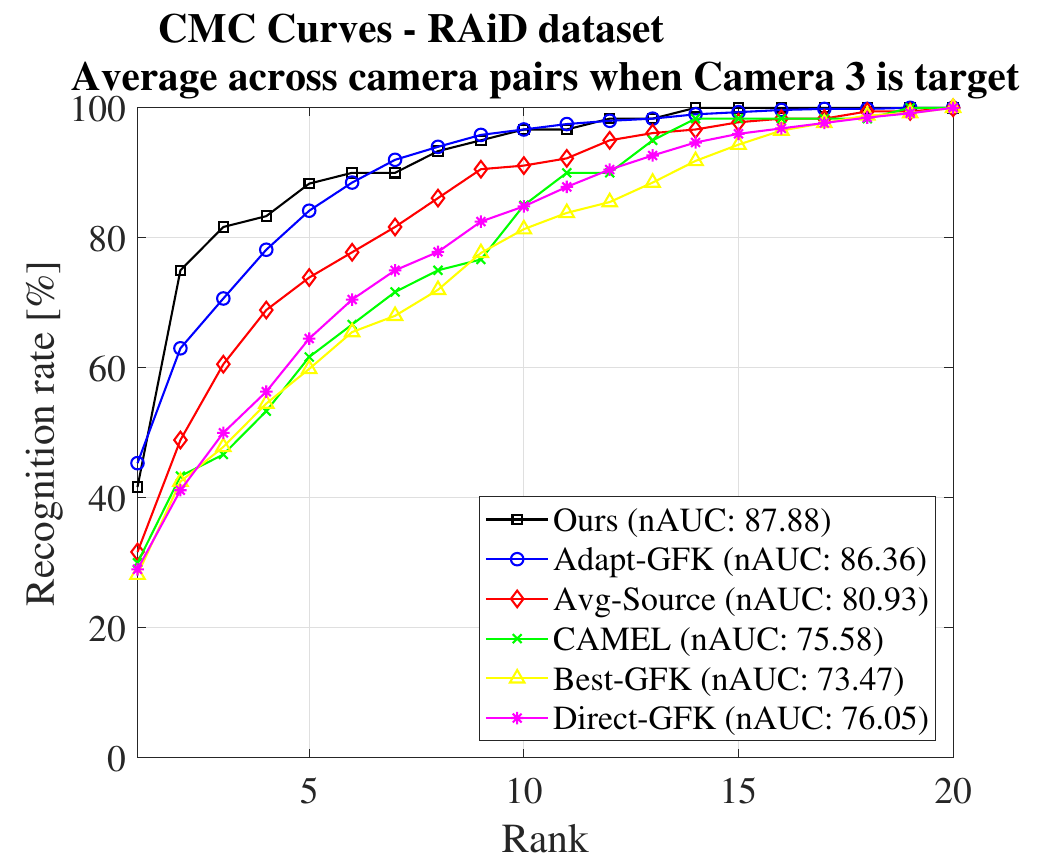}
\caption{}
\end{subfigure}
\begin{subfigure}{0.4\textwidth}
\includegraphics[width=0.9\textwidth]{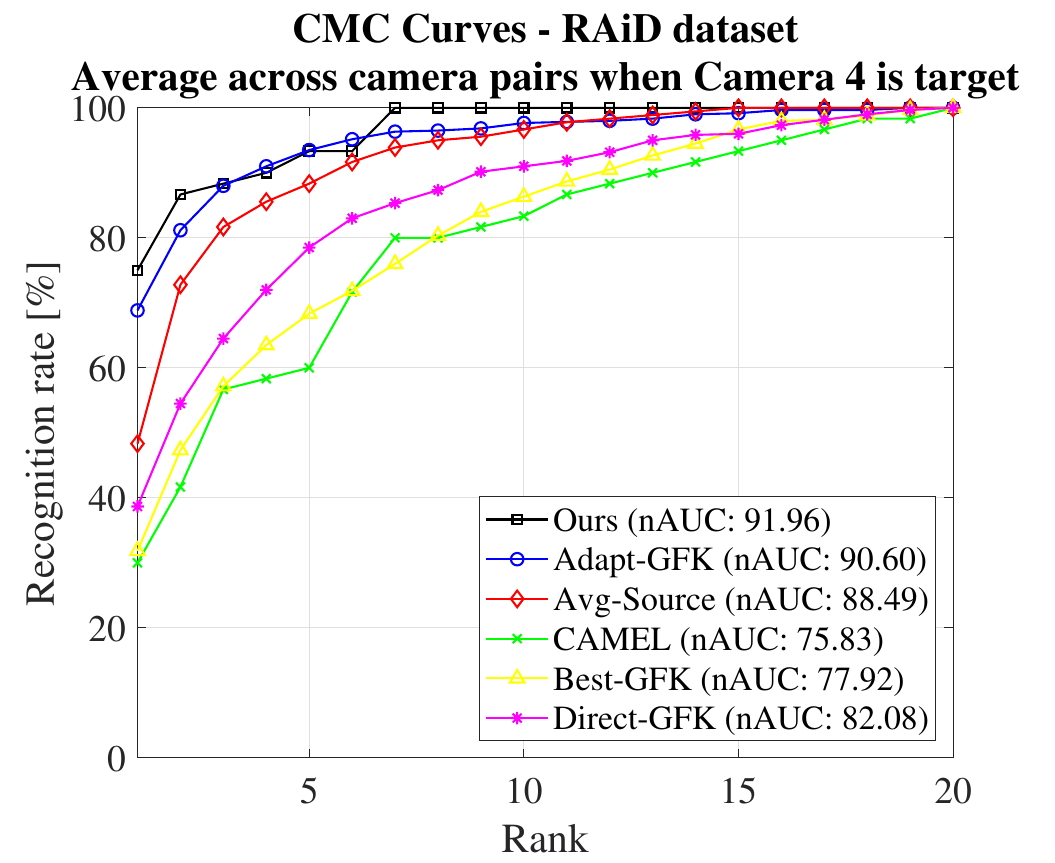}
\caption{}
\end{subfigure}
\caption{In this experiment RAiD dataset with 4 cameras \cite{das2014consistent} is used. Each of the camera has been set as target while rest of the 3 cameras with 3 pairwise metrics served as source metrics. plot (a,b,c,d) are generated from camera 1,2,3 and 4 as target target camera. The most competitive method here is Adapt-GFK which is outperformed by our method in nAUC with margins 2.71\%, 2.2\%, 1.52\% and 1.36\% for camera 1,2,3 and 4 as target respectively. Moreover for camera 1 as target (plot (a)) and  camera 4 as target (plot (d)) our method outperforms Adapt-GFK by a rank-1 margin of almost 7\% and 5\% respectively. Also for each of the cameras our method outperforms Avg-source significantly both in rank-1 and nAUC which proves the  Theorem~\ref{thm1}. Moreover, for all the cases our method outperforms CAMEL significantly (Like in camera 4 rank-1 margin is almost ~36\%) which is equivalent to fully supervised learning with limited labels with no transfer from any sources.}
%\caption{Single-cam raid.}
\label{fig:singlecamraid}
\end{figure}

\begin{figure}[H]
\centering
\large \underline{Camera wise CMC curves for Market1501 dataset}\par\medskip
\begin{subfigure}{0.4\textwidth}
\includegraphics[width=\textwidth]{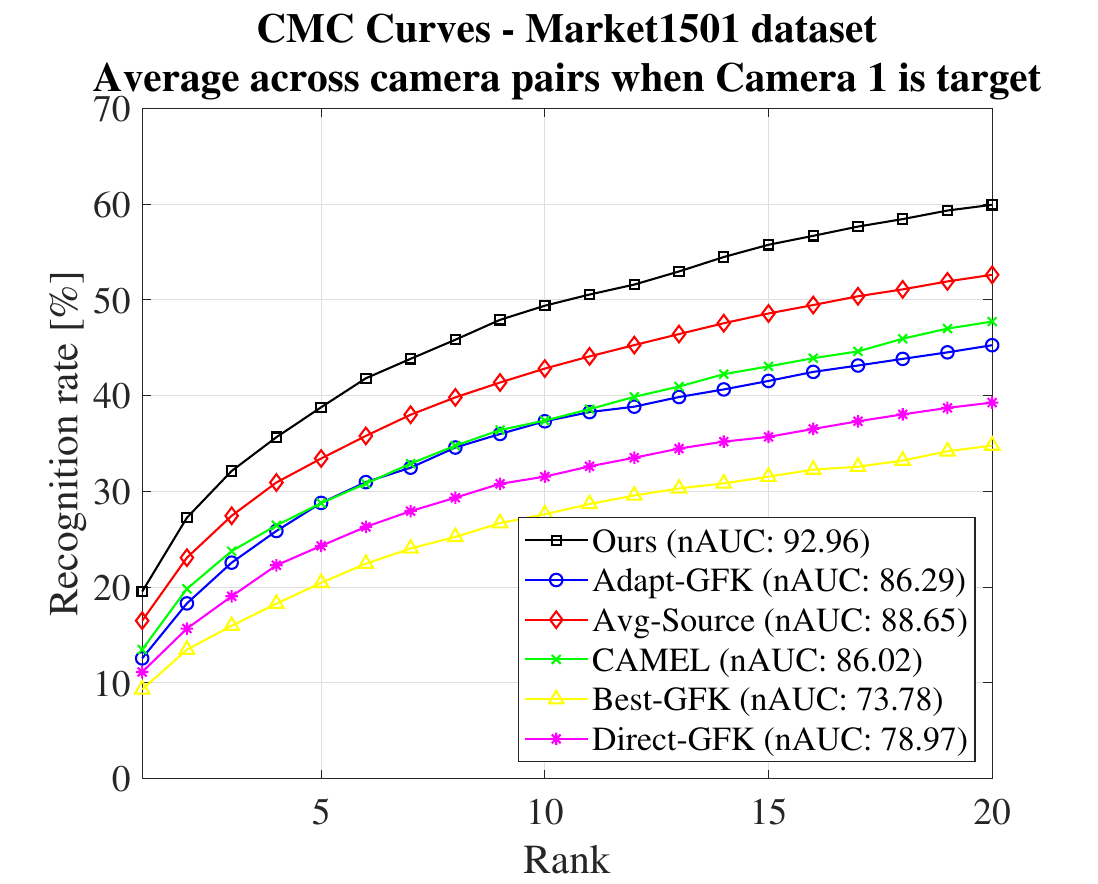}
\caption{}
\end{subfigure}
\begin{subfigure}{0.4\textwidth}
\includegraphics[width=\textwidth]{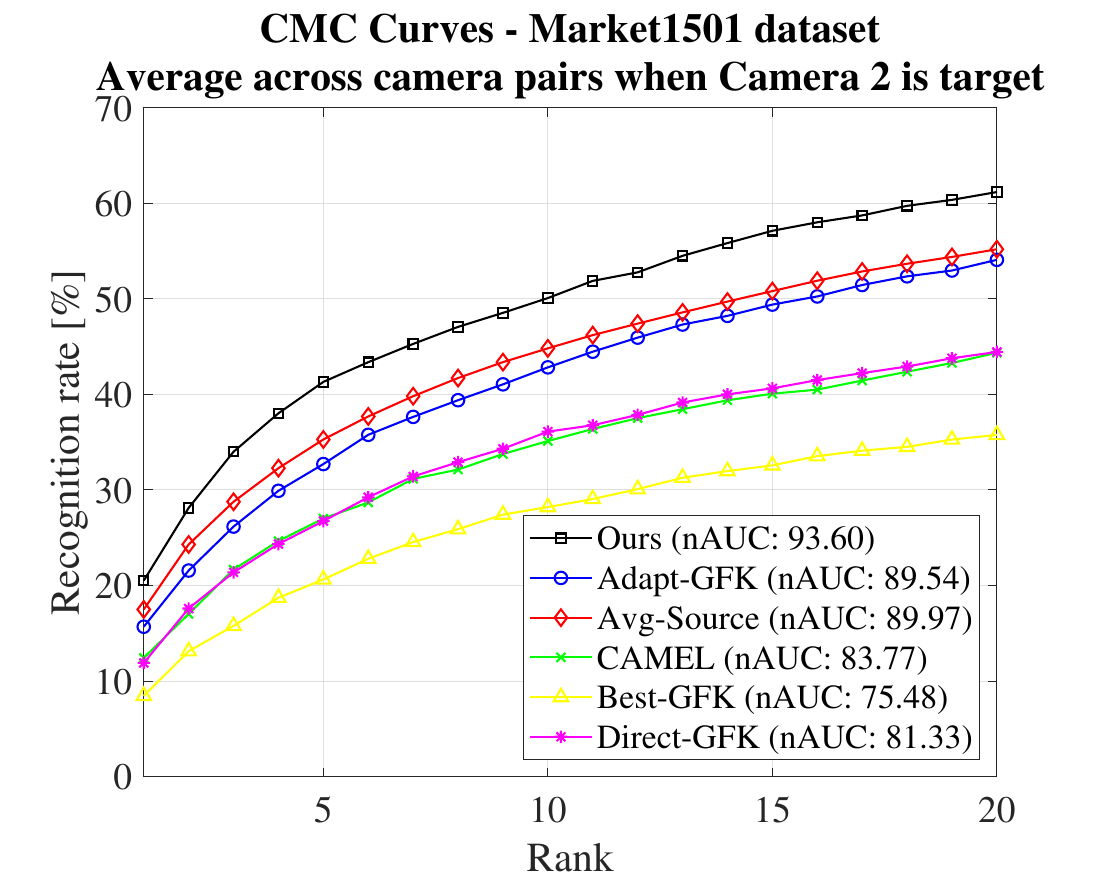}
\caption{}
\end{subfigure} \\
\begin{subfigure}{0.4\textwidth}
\includegraphics[width=\textwidth]{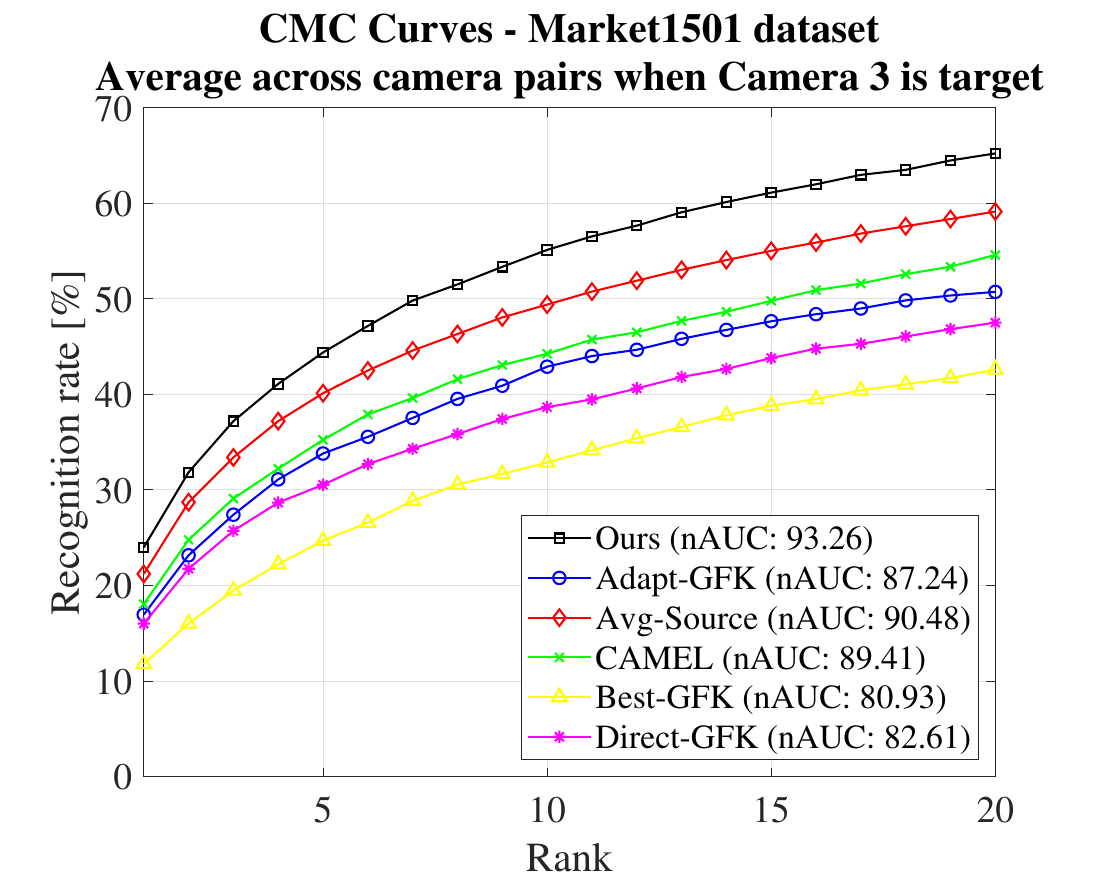}
\caption{}
\end{subfigure} 
\begin{subfigure}{0.4\textwidth}
\includegraphics[width=\textwidth]{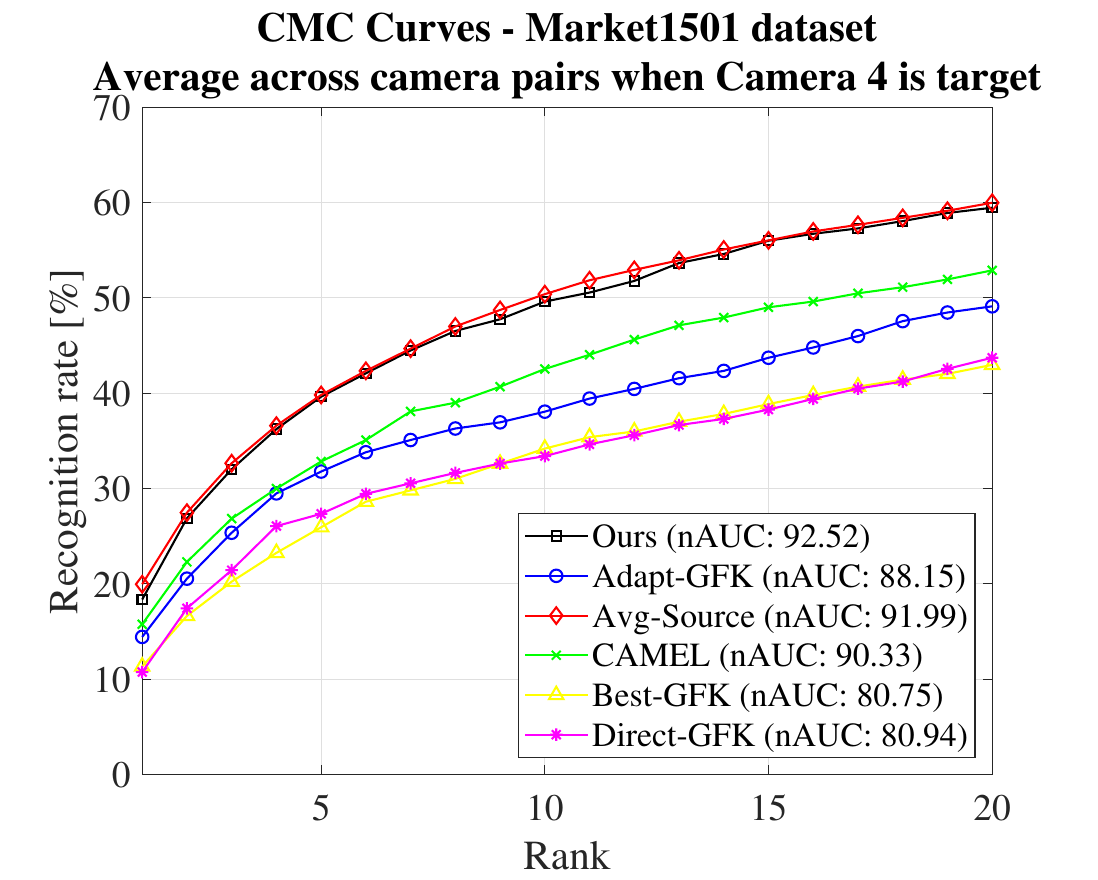}
\caption{}
\end{subfigure} \\
\begin{subfigure}{0.4\textwidth}
\includegraphics[width=\textwidth]{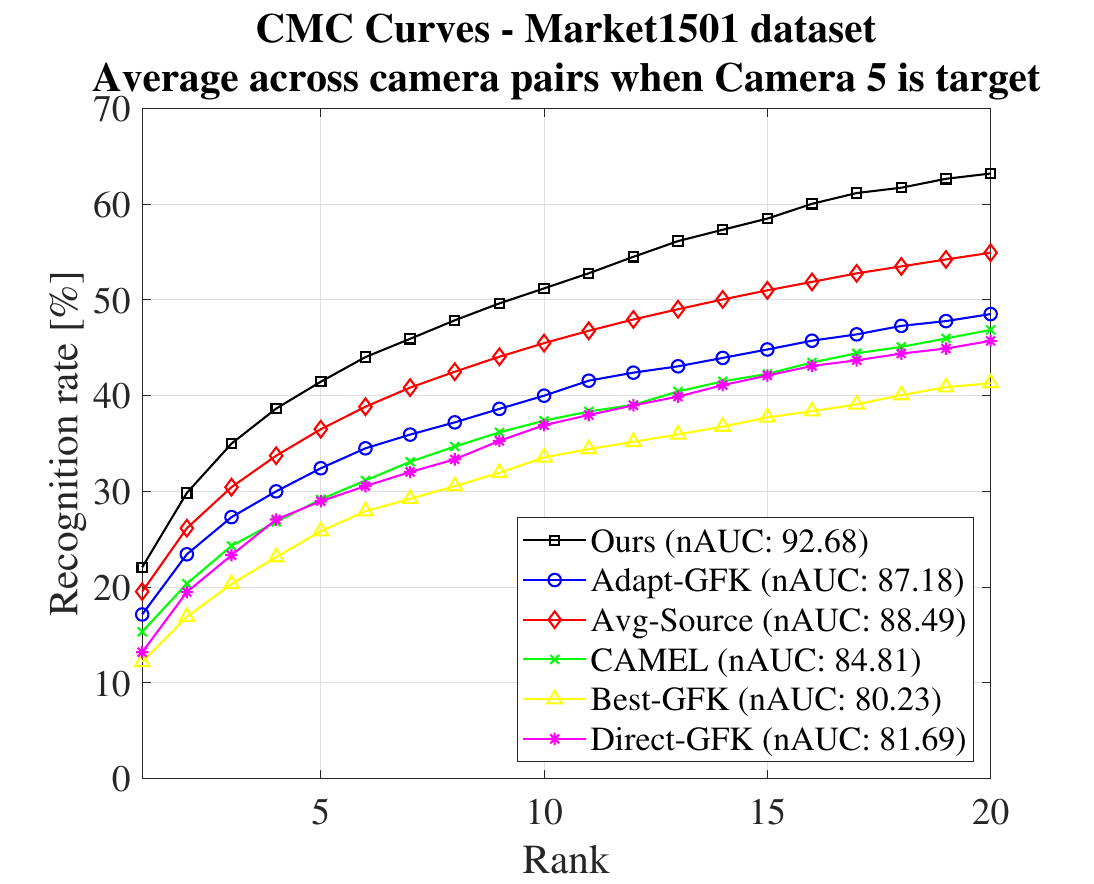}
\caption{}
\end{subfigure}
\begin{subfigure}{0.4\textwidth}
\includegraphics[width=\textwidth]{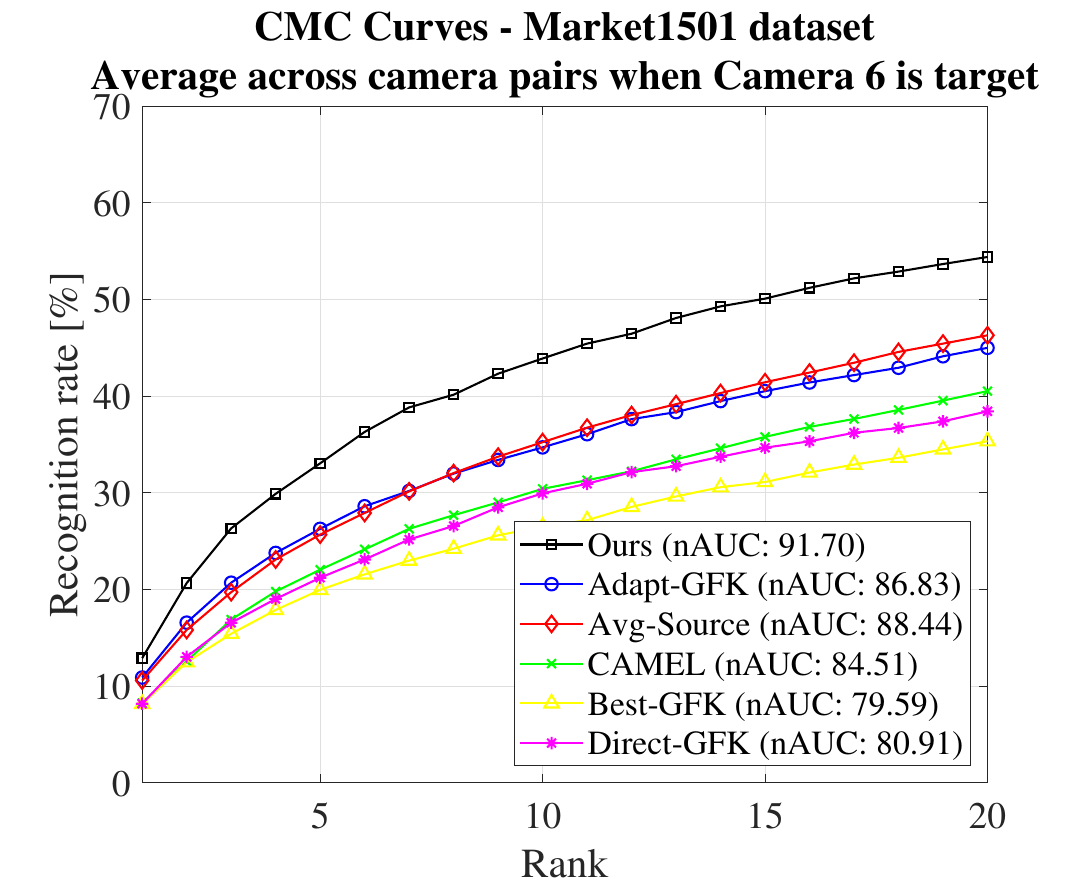}
\caption{}
\end{subfigure}
%\caption{Single-cam market.}
\caption{In this single camera insertion experiment Market1501 \cite{zheng2015scalable} dataset is used. In plots (a,b,c,d,e and f) cmc curves are shown for camera 1,2,3,4,5 and 6 as target respectively. Only 10\%  of the available data is used between each target-source pairs. Our method outperforms Adapt-GFK which was the most competitive one in case of RAiD and WARD by 6.67\%,4.06\%,6.02\%,4.37\%,5.5\%,4.87\% in nAUC. However, in this case we see that Adapt-GFK has lower accuracy than just the Avg-source, which we outperform in both rank-1 and nAUC for each and every camera as target. Also our method has very high accuracy both in rank-1 and nAUC than CAMEL which is equivalent to no transfer scenario. It is clear that our method gives theoretical guarantee that it would not perform worse than Avg-source case or no transfer case whereas other method has no guarantee which is depicted in this case where Adapt-GFk performed worse than just the Avg-source.}
\label{fig:singlecammarket}
\end{figure}

\begin{figure}[H]
\vspace{-2 cm}
\centering
\large \underline{Camera wise CMC curves for MSMT dataset camera (1-15)}\par\medskip
\begin{subfigure}{0.3\textwidth}
\includegraphics[width=\textwidth]{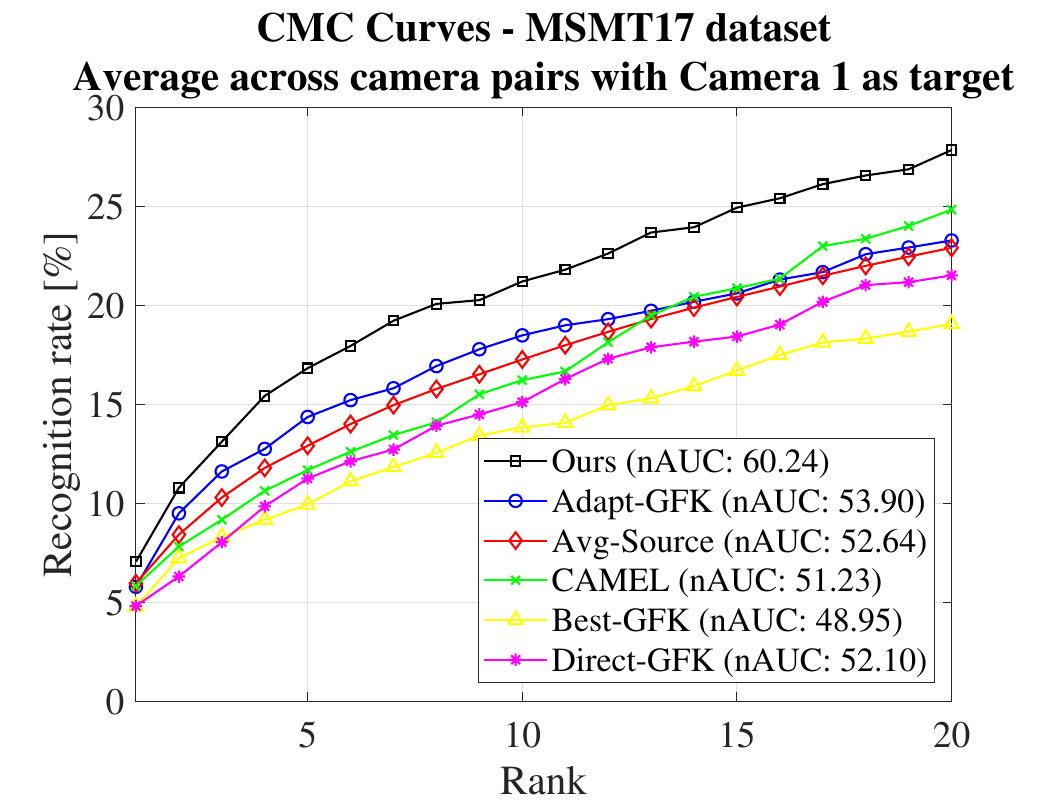}
\caption{}
\end{subfigure}
\begin{subfigure}{0.3\textwidth}
\includegraphics[width=\textwidth]{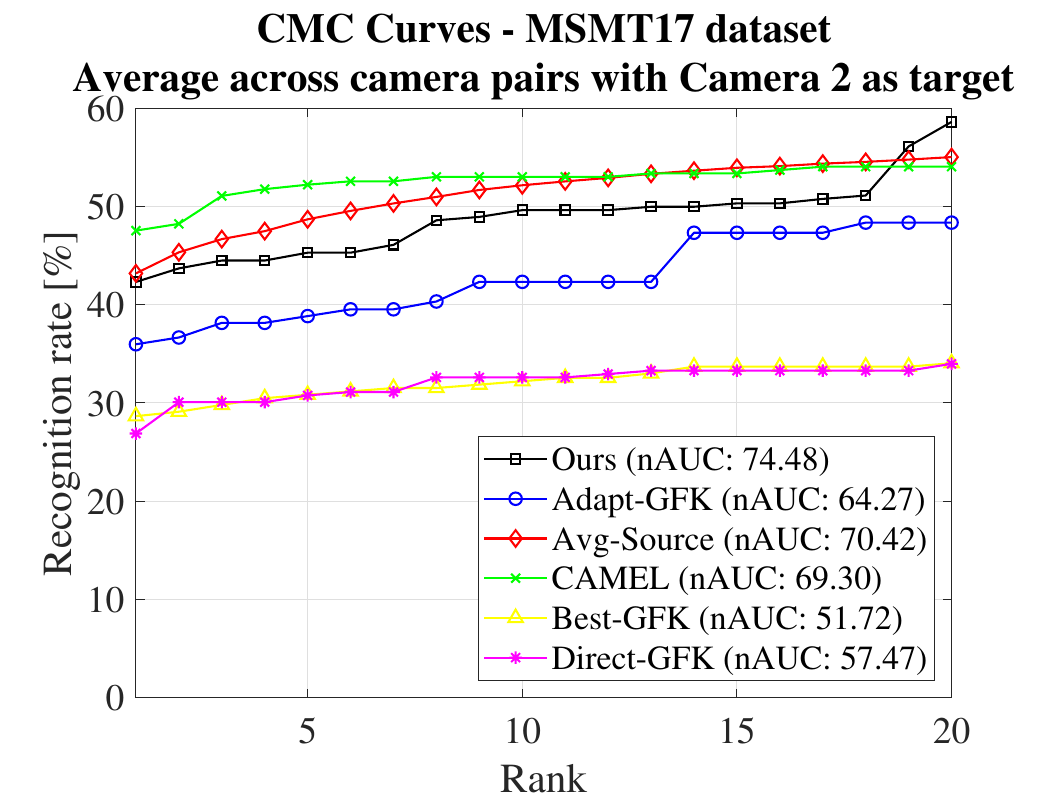}
\caption{}
\end{subfigure} 
\begin{subfigure}{0.3\textwidth}
\includegraphics[width=\textwidth]{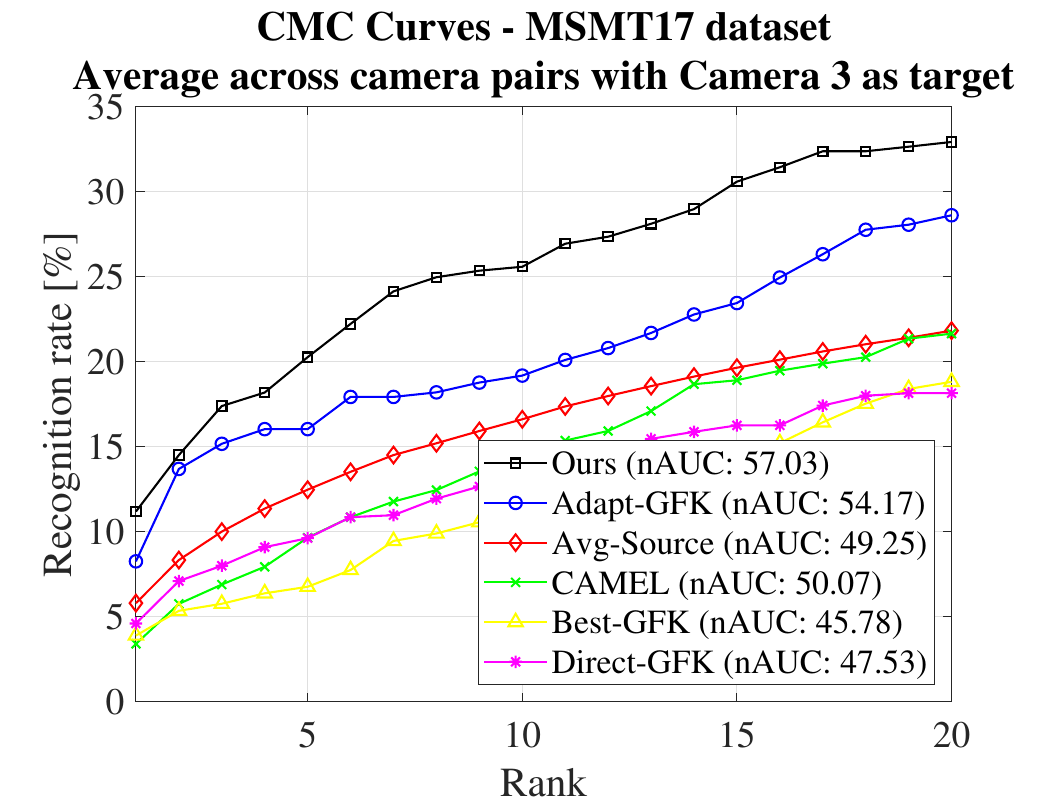}
\caption{}
\end{subfigure}\\
\begin{subfigure}{0.3\textwidth}
\includegraphics[width=\textwidth]{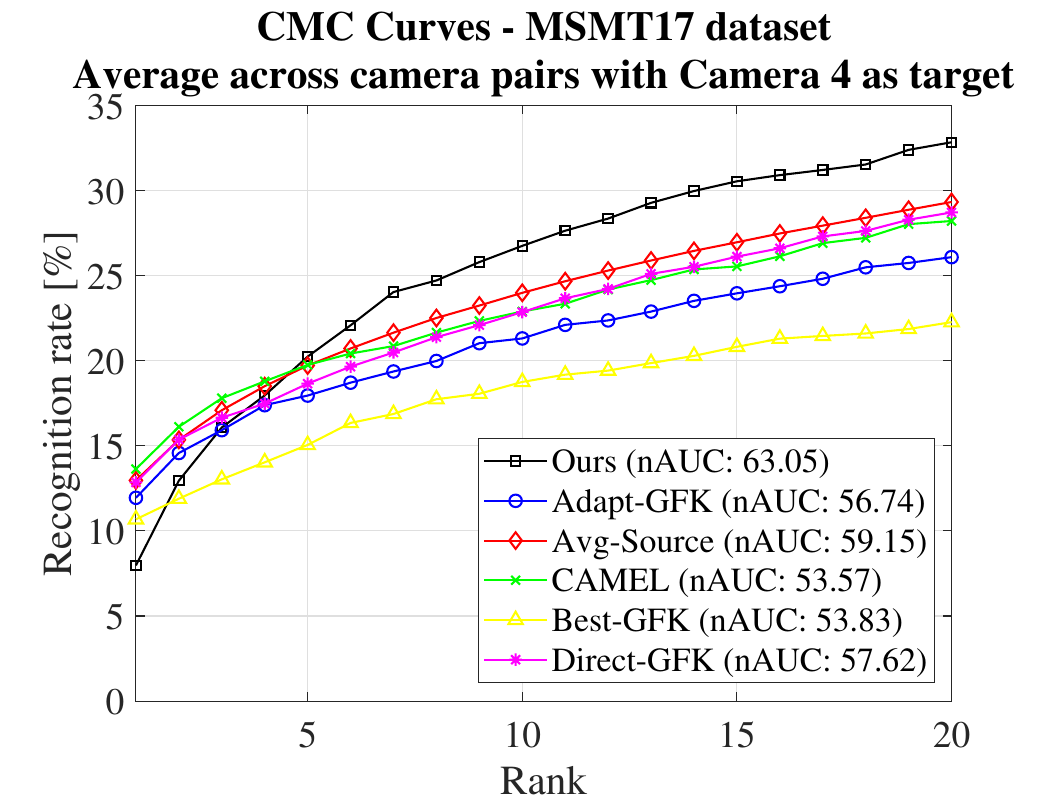}
\caption{}
\end{subfigure} 
\begin{subfigure}{0.3\textwidth}
\includegraphics[width=\textwidth]{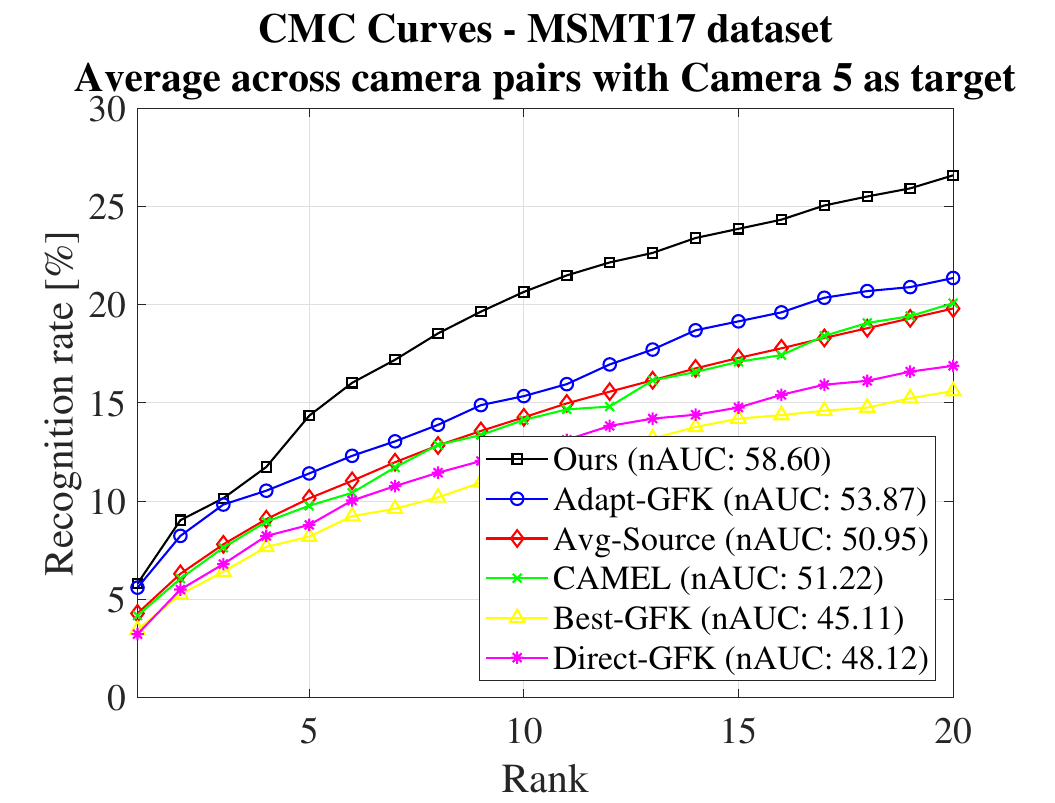}
\caption{}
\end{subfigure}
\begin{subfigure}{0.3\textwidth}
\includegraphics[width=\textwidth]{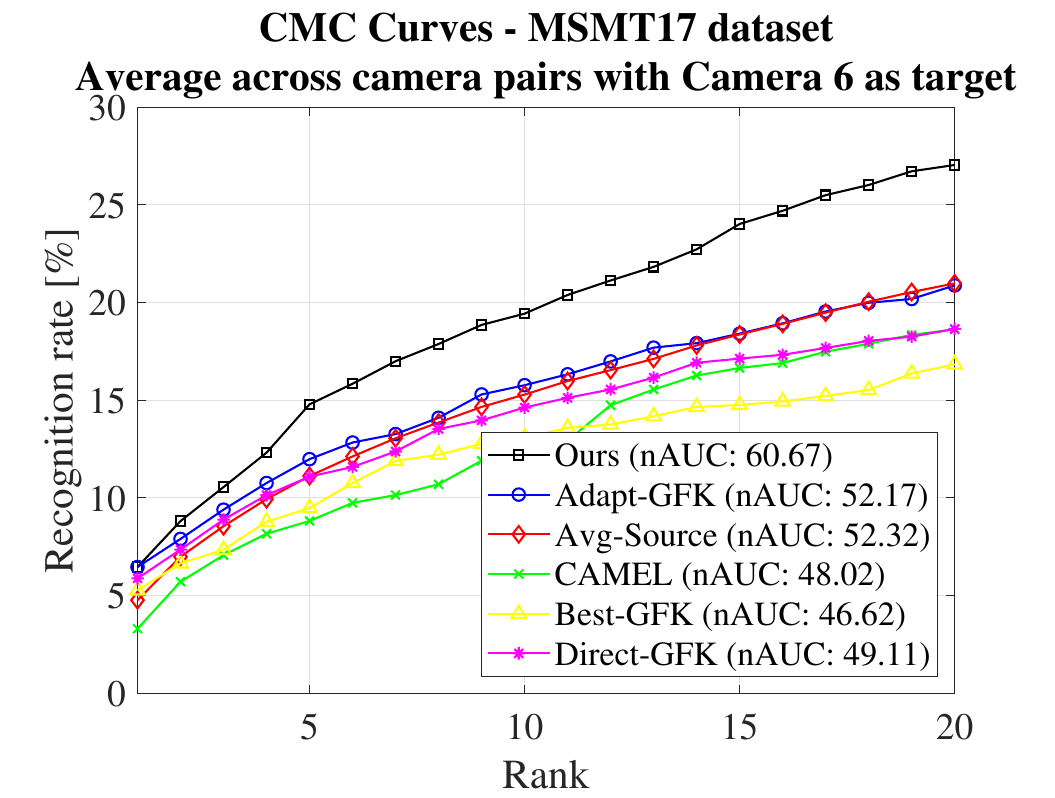}
\caption{}
\end{subfigure} \\
\begin{subfigure}{0.3\textwidth}
\includegraphics[width=\textwidth]{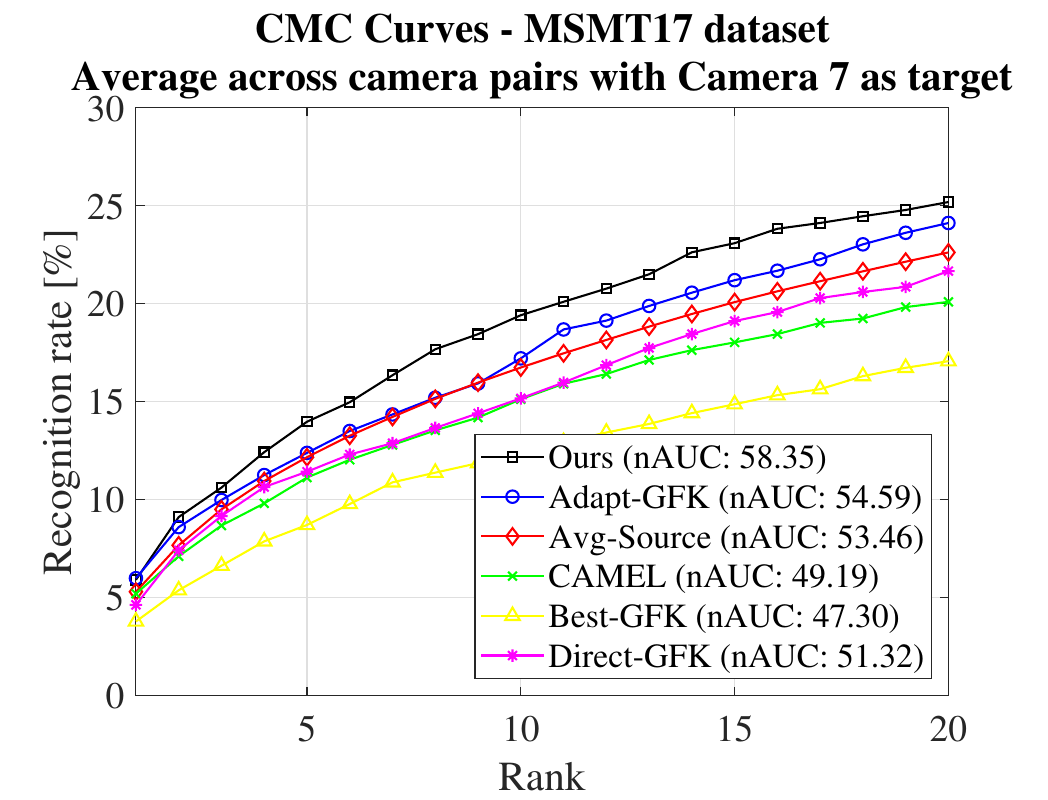}
\caption{}
\end{subfigure}
\begin{subfigure}{0.3\textwidth}
\includegraphics[width=\textwidth]{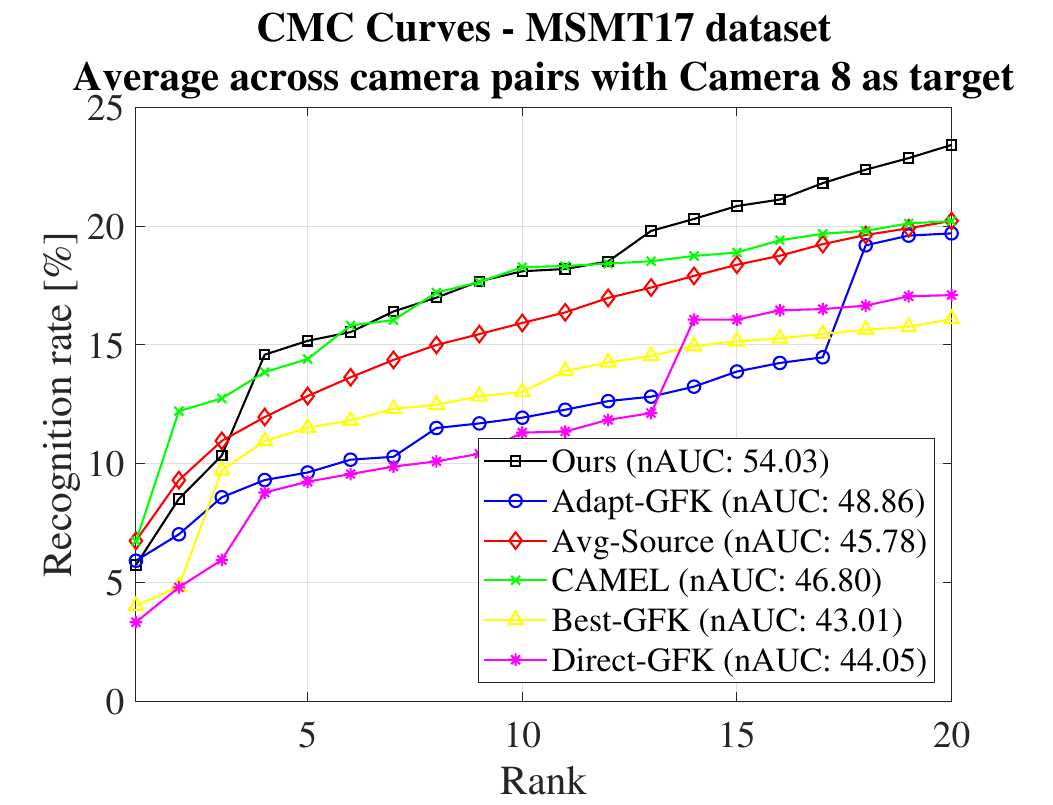}
\caption{}
\end{subfigure} 
\begin{subfigure}{0.3\textwidth}
\includegraphics[width=\textwidth]{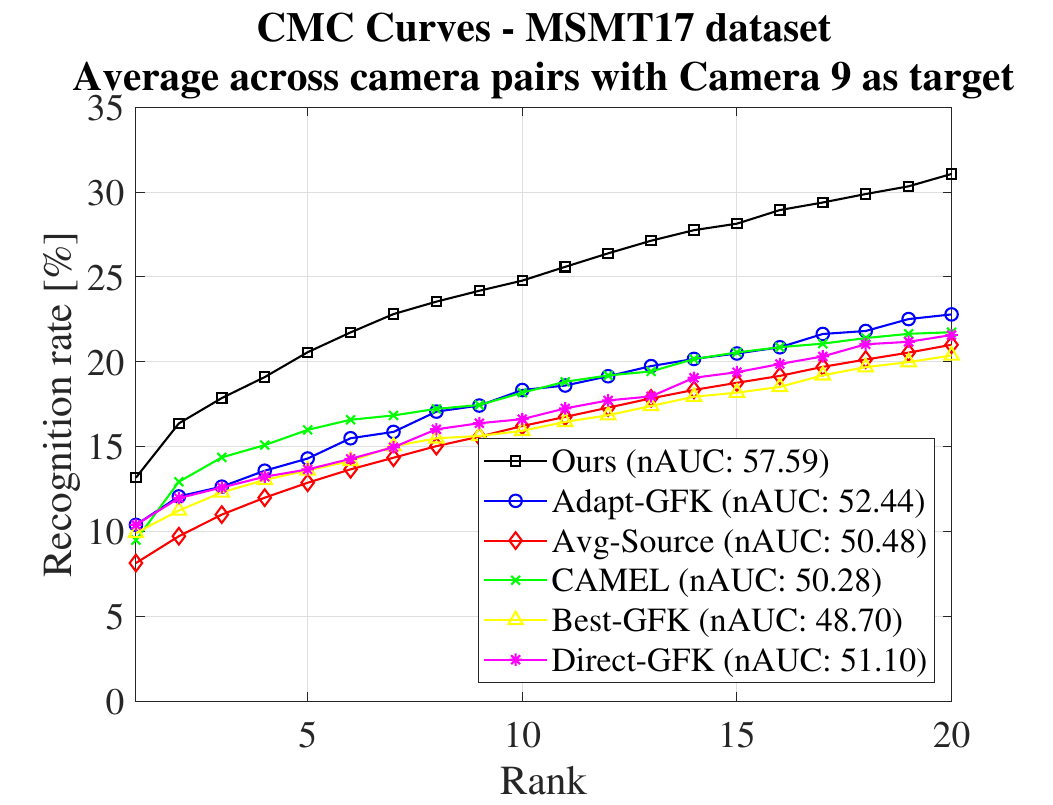}
\caption{}
\end{subfigure}\\
\begin{subfigure}{0.3\textwidth}
\includegraphics[width=\textwidth]{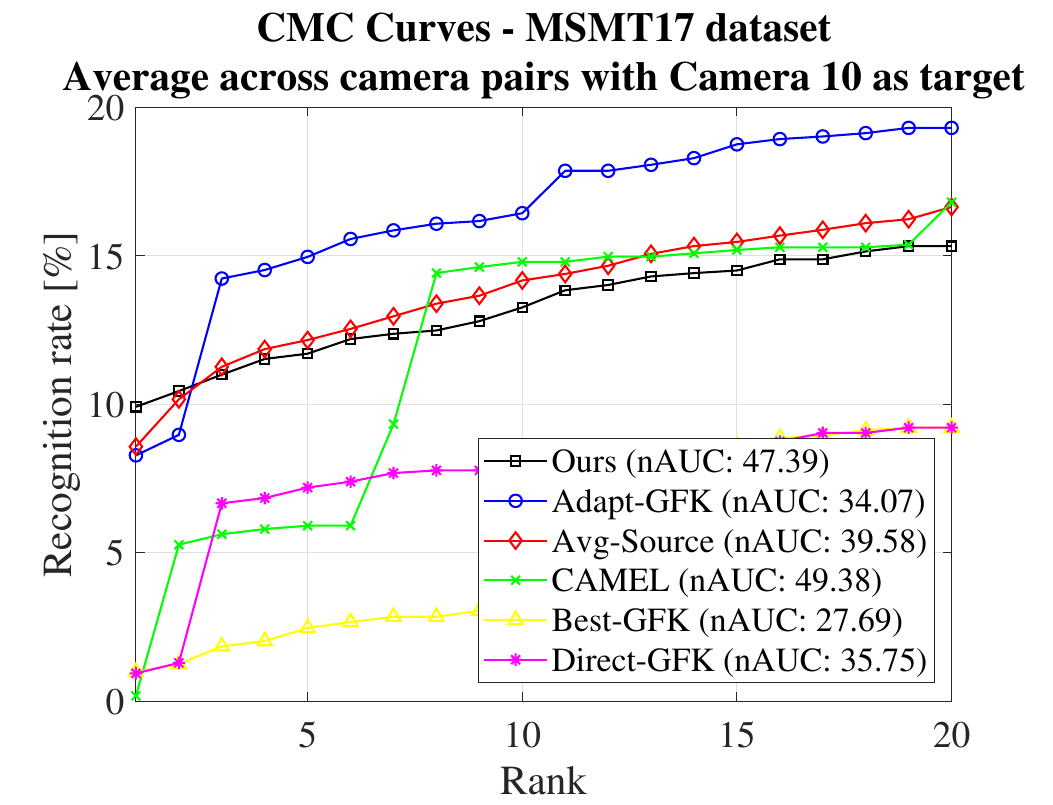}
\caption{}
\end{subfigure} 
\begin{subfigure}{0.3\textwidth}
\includegraphics[width=\textwidth]{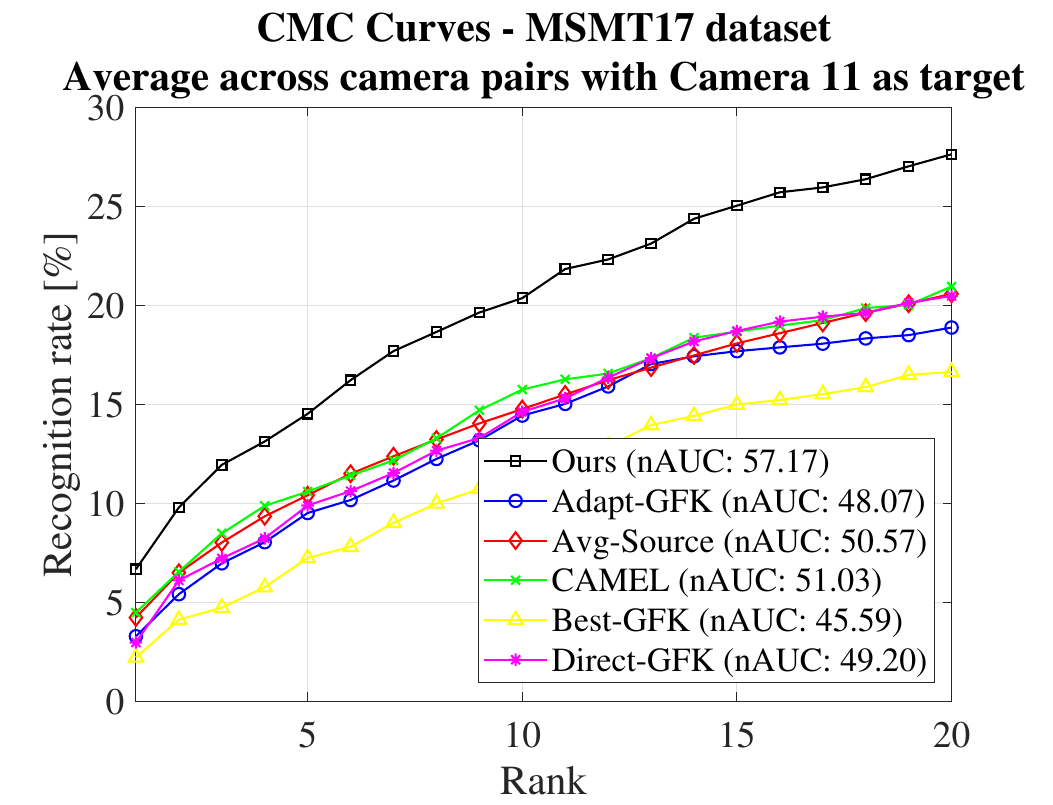}
\caption{}
\end{subfigure}
\begin{subfigure}{0.3\textwidth}
\includegraphics[width=\textwidth]{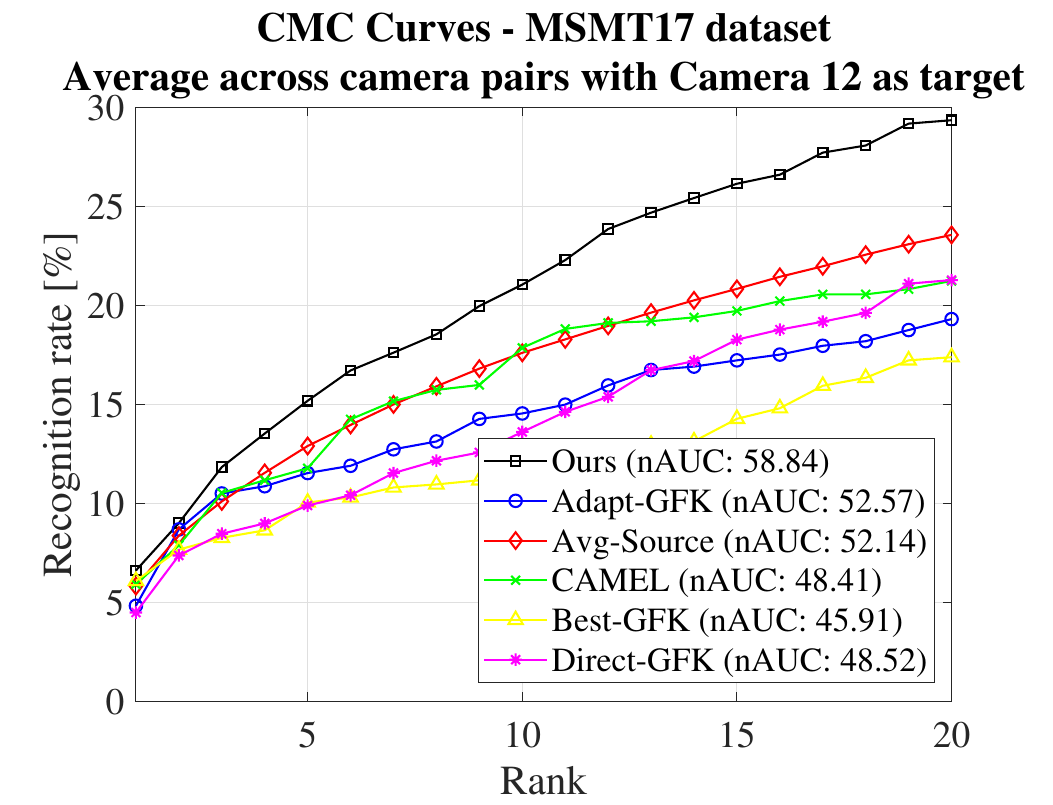}
\caption{}
\end{subfigure} \\
\begin{subfigure}{0.3\textwidth}
\includegraphics[width=\textwidth]{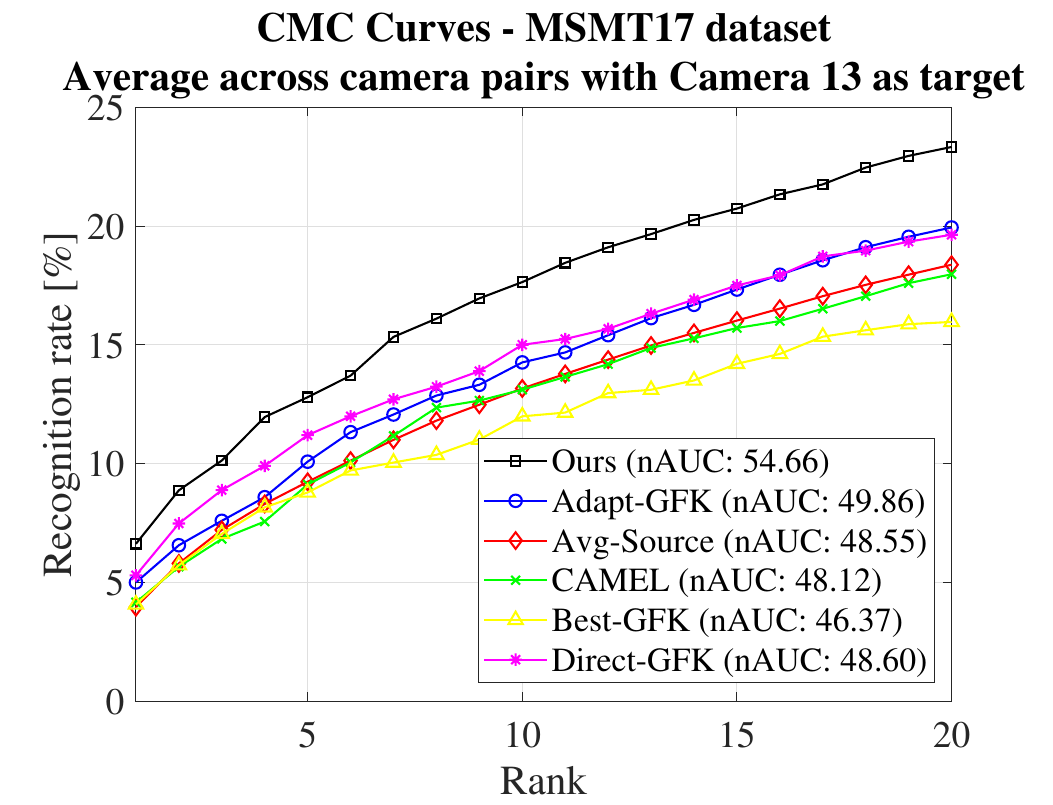}
\caption{}
\end{subfigure}
\begin{subfigure}{0.3\textwidth}
\includegraphics[width=\textwidth]{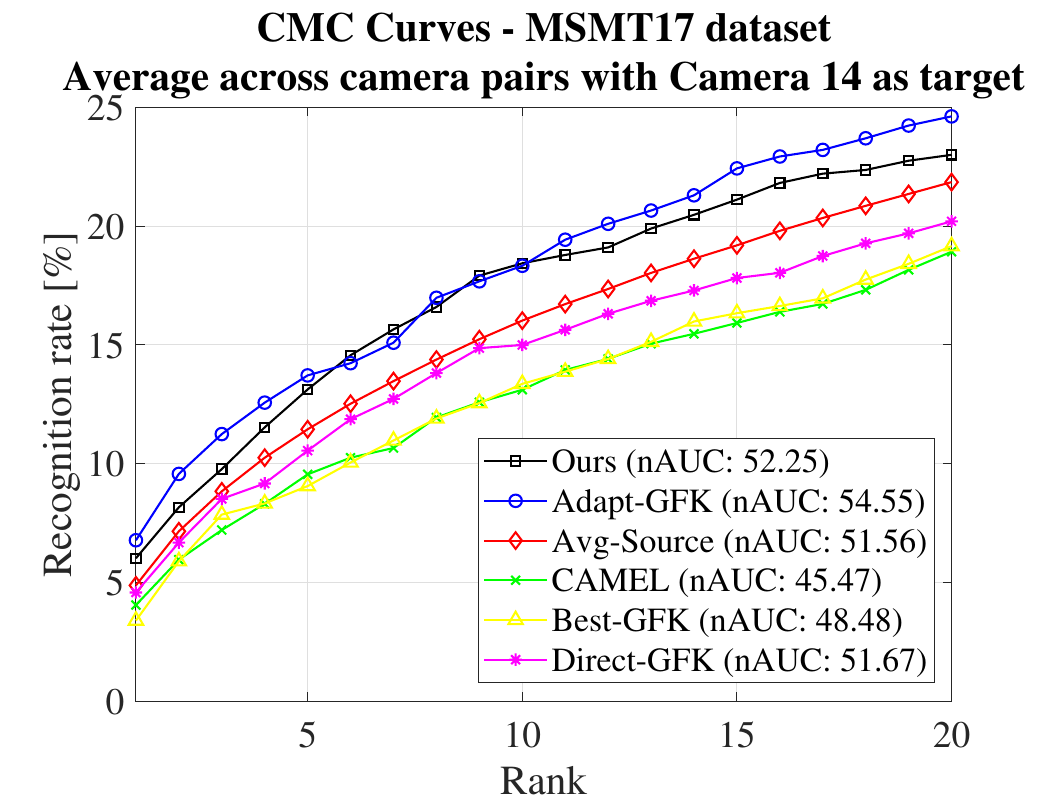}
\caption{}
\end{subfigure} 
\begin{subfigure}{0.3\textwidth}
\includegraphics[width=\textwidth]{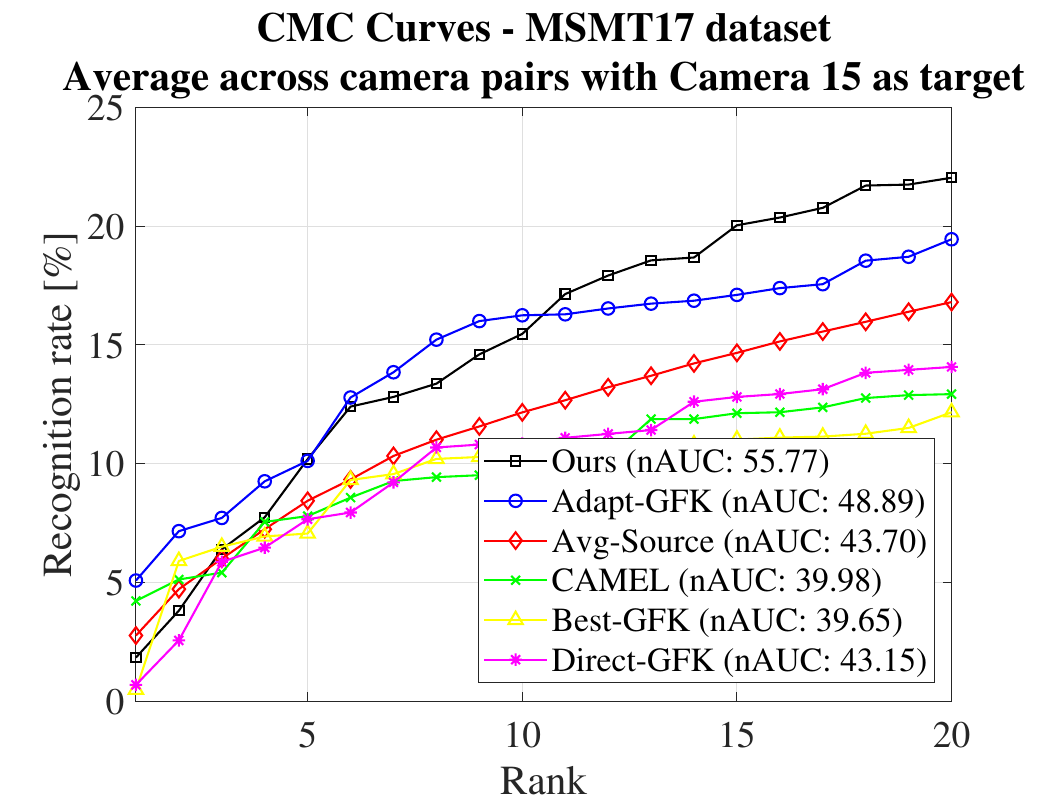}
\caption{}
\end{subfigure} \vspace{-2mm}
\caption{Total 15 plots from 15 cameras as target in MSMT dataset are shown. For all cameras our method outperforms other methods in nAUC. While rank-1 performances varied a lot across different cameras, our method on average performs the best as shown in the main paper. Best viewed in color.}
\label{fig:singlecammsmt}
\end{figure}

% \begin{figure}[H]
% \centering
% \Large \underline{Camera wise CMC curves for MSMT dataset (9-15)}\par\medskip
% \begin{subfigure}{0.4\textwidth}
% \includegraphics[width=\textwidth]{images_supp/msmt_cam9}
% \caption{}
% \end{subfigure}
% \begin{subfigure}{0.4\textwidth}
% \includegraphics[width=\textwidth]{images_supp/msmt_cam10}
% \caption{}
% \end{subfigure} \\
% \begin{subfigure}{0.4\textwidth}
% \includegraphics[width=\textwidth]{images_supp/msmt_cam11}
% \caption{}
% \end{subfigure}
% \begin{subfigure}{0.4\textwidth}
% \includegraphics[width=\textwidth]{images_supp/msmt_cam12}
% \caption{}
% \end{subfigure} \\
% \begin{subfigure}{0.4\textwidth}
% \includegraphics[width=\textwidth]{images_supp/msmt_cam13}
% \caption{}
% \end{subfigure}
% \begin{subfigure}{0.4\textwidth}
% \includegraphics[width=\textwidth]{images_supp/msmt_cam14}
% \caption{}
% \end{subfigure} \\
% \begin{subfigure}{0.4\textwidth}
% \includegraphics[width=\textwidth]{images_supp/msmt_cam15}
% \caption{}
% \end{subfigure}
% \caption{Single-cam MSMT.}
% \label{fig:singlecam}
% \end{figure}

\section{On-boarding Multiple New Cameras}
This section covers the camera wise experimental results of on-boarding multiple new cameras (See Figure~(\ref{fig:2parallelmulticam},\ref{fig:3parallelmulticam},\ref{fig:continuousmulticam}). We show for each experiment the camera wise CMC curves that are averaged to a single CMC curve in the main paper. 
\begin{figure}[H]
\centering
\large \underline{Camera wise CMC curves for Market1501 dataset: parallel addition of 2 cameras}\par\medskip
\begin{subfigure}{0.4\textwidth}
\includegraphics[width=\textwidth]{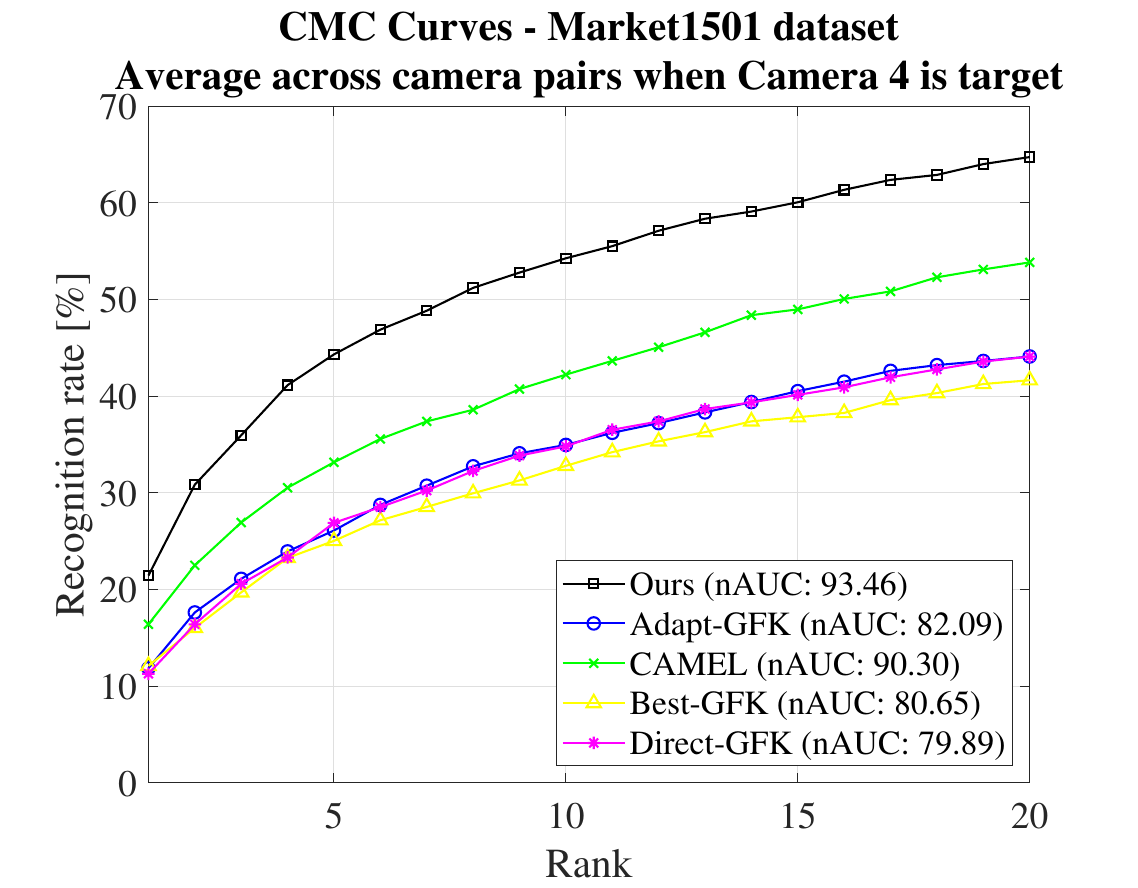}
\caption{}
\end{subfigure}
\begin{subfigure}{0.4\textwidth}
\includegraphics[width=\textwidth]{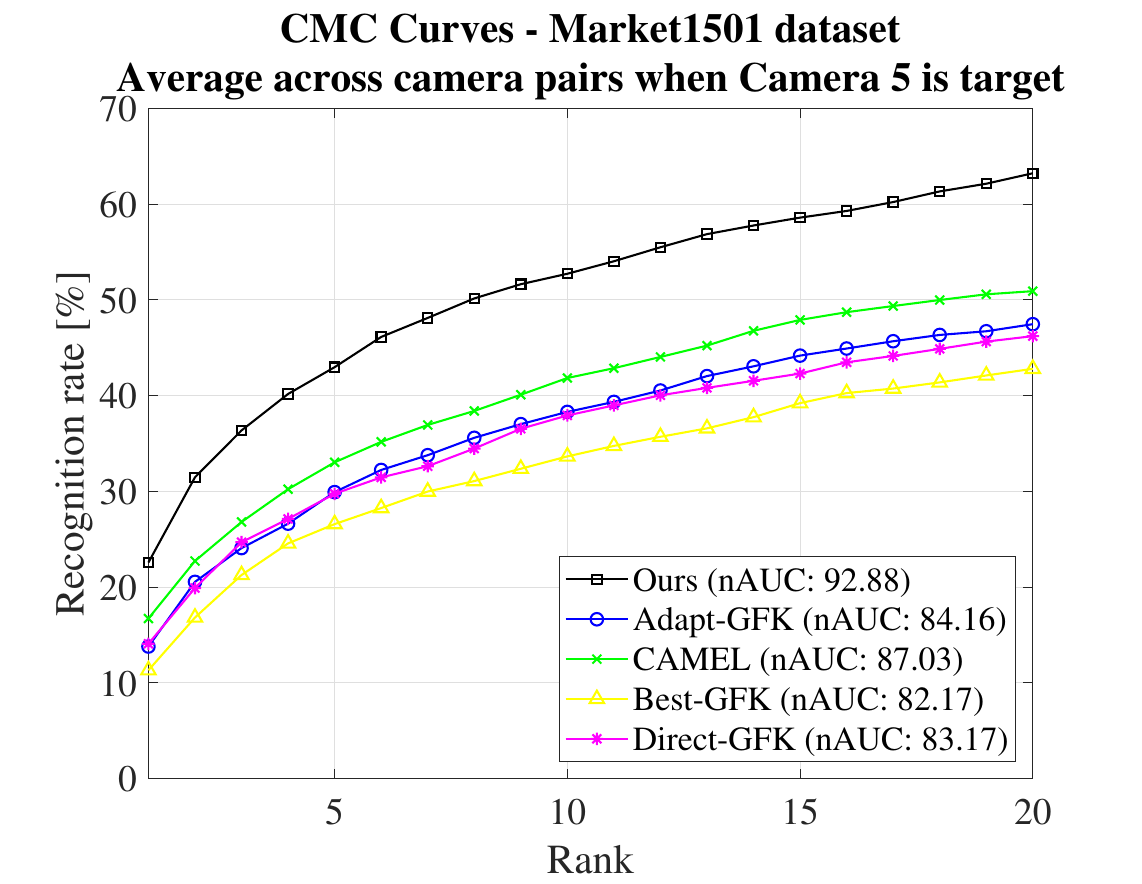}
\caption{}
\end{subfigure} 
%\caption{2 parallel multi-cam.}
\caption{In this figure we used Market1501 dataset to show the effect of parallel on-boarding of multiple cameras (In this case 2 cameras). We effectively set camera 4 and 5 as target and compute 6 source metrics from the remaining cameras to transfer knowledge from. Accuracy is shown between camera 4 and camera (1,2,3,6) (plot(a)) and also between camera 5 and camera (1,2,3,6) (plot(b)) separately. We can see that our method significantly outperform other methods both in rank-1 and nAUC. This shows the effectiveness of our method for adaptation of multiple cameras in the network added in parallel.}
\label{fig:2parallelmulticam}
\end{figure}

\begin{figure}[H]
\centering
\large \underline{Camera wise CMC curves for Market1501 dataset: parallel addition of 3 cameras}\par\medskip
\begin{subfigure}{0.3\textwidth}
\includegraphics[width=\textwidth]{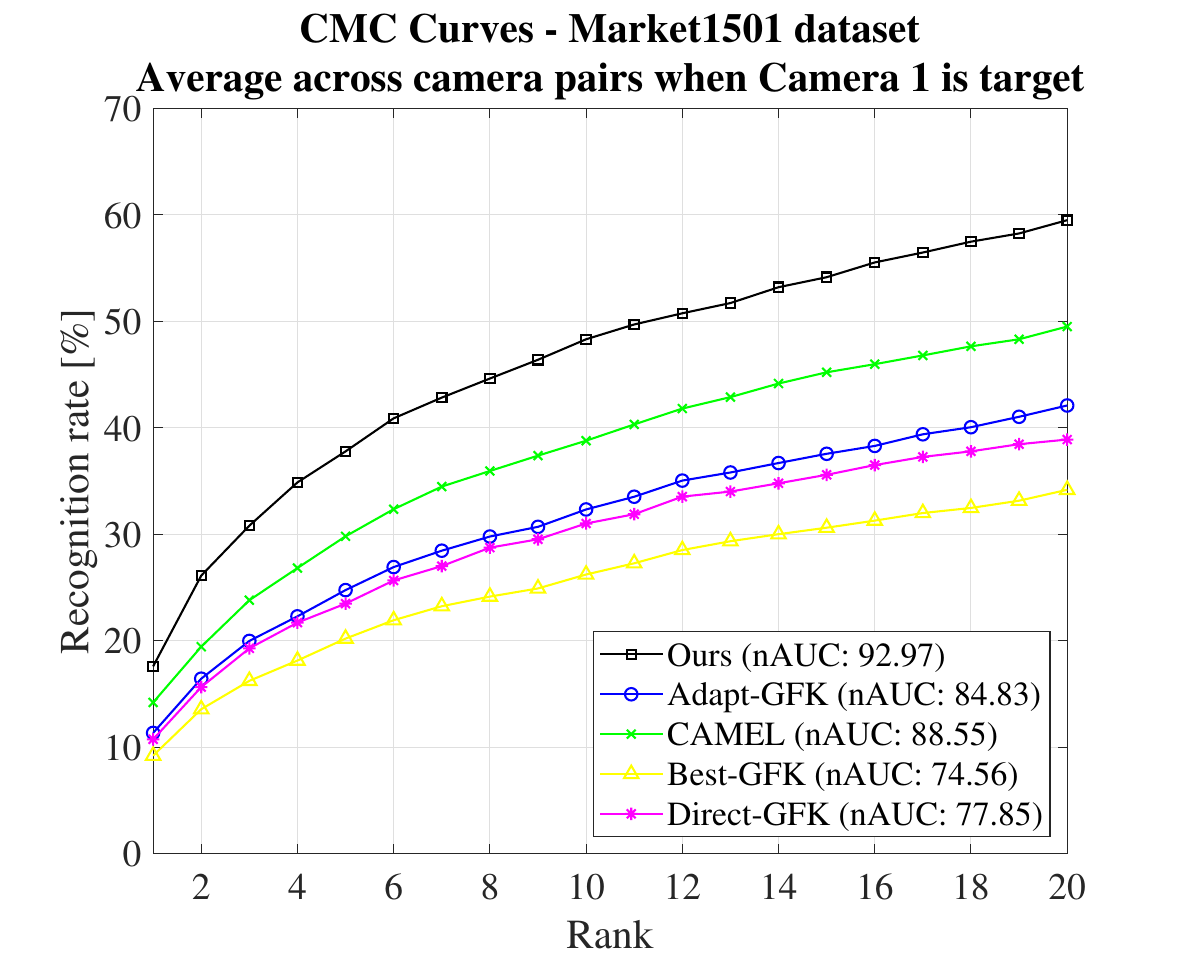}
\caption{}
\end{subfigure}
\begin{subfigure}{0.3\textwidth}
\includegraphics[width=\textwidth]{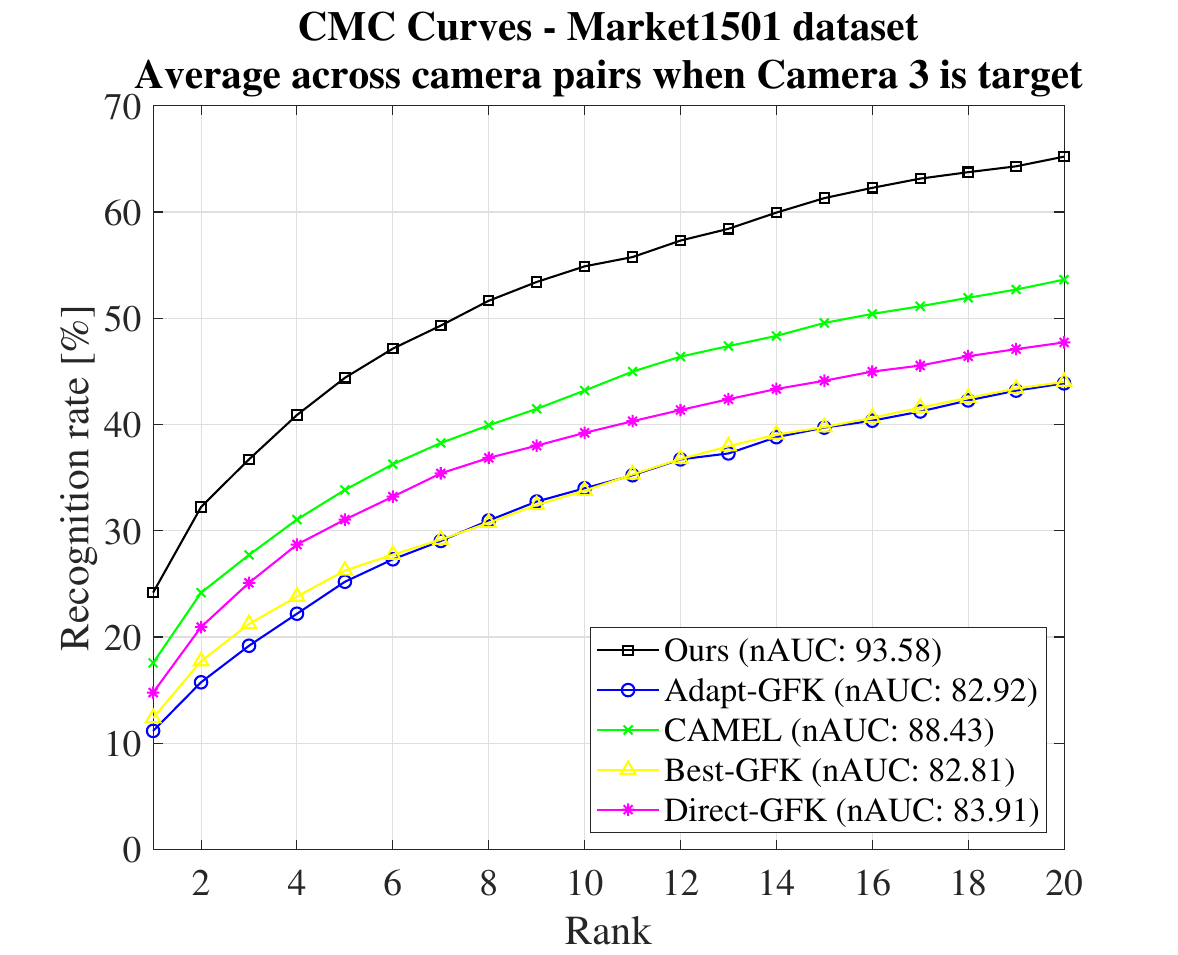}
\caption{}
\end{subfigure}  
\begin{subfigure}{0.3\textwidth}
\includegraphics[width=\textwidth]{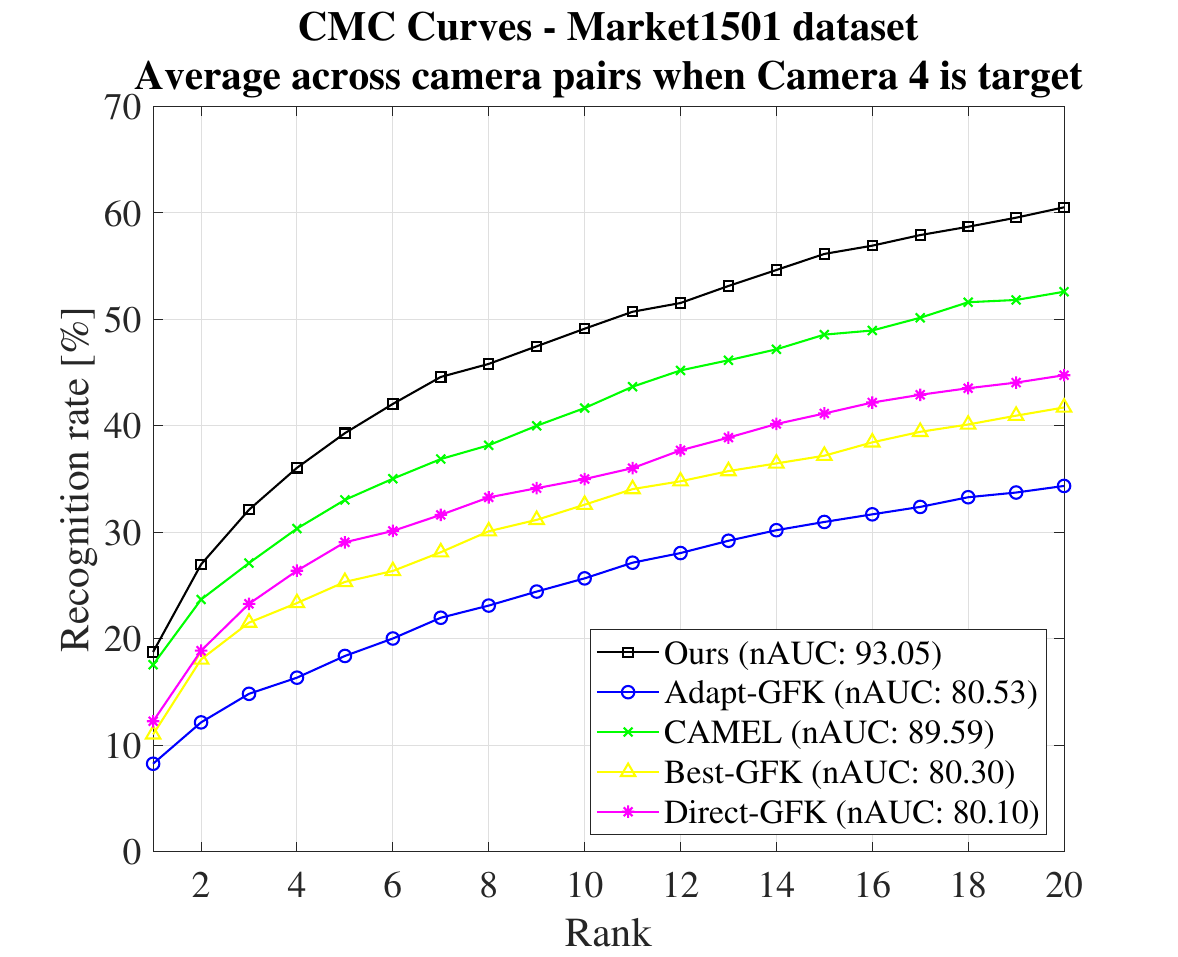}
\caption{}
\end{subfigure} 
%\caption{3 parallel multi-cam.}
\caption{In this figure we used Market1501 dataset to show the effect of parallel on-boarding of multiple cameras (In this case 3 cameras). We effectively set camera 1,3 and 4 as target and compute 3 source metrics from the remaining cameras to transfer knowledge from. Accuracy is shown between camera 1 and camera (2,5,6) (plot(a)),camera 3 and camera (2,5,6) (plot(b)) and also between camera 4 and camera (2,5,6) (plot(c)) separately. We can see that our method significantly outperform other methods both in rank-1 and nAUC. This shows the effectiveness of our method for adaptation of multiple cameras in the network added in parallel. Best viewed in color.}
\label{fig:3parallelmulticam}
\end{figure}

\begin{figure}[ht]
\centering
\large \underline{Camera wise CMC curves for Market1501 dataset: continuous addition of multiple cameras}\par\medskip
\begin{subfigure}{0.3\textwidth}
\includegraphics[width=\textwidth]{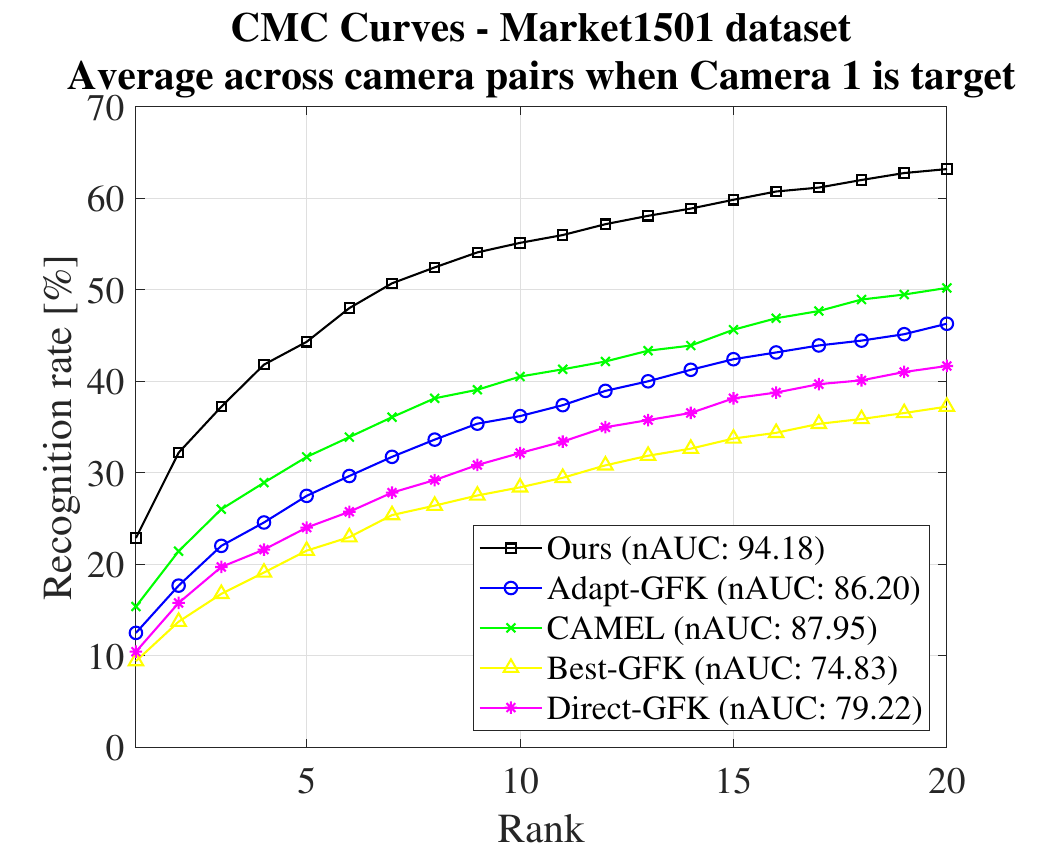}
\caption{}
\end{subfigure}
\begin{subfigure}{0.3\textwidth}
\includegraphics[width=\textwidth]{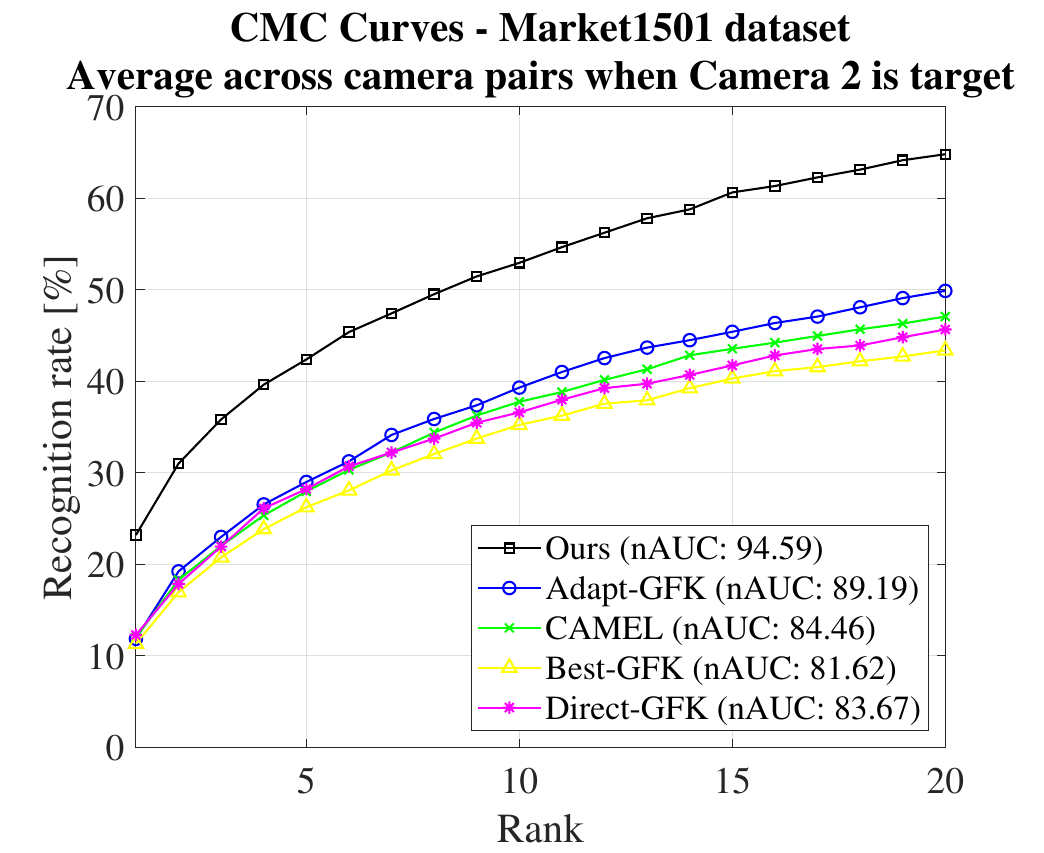}
\caption{}
\end{subfigure}  
\begin{subfigure}{0.3\textwidth}
\includegraphics[width=\textwidth]{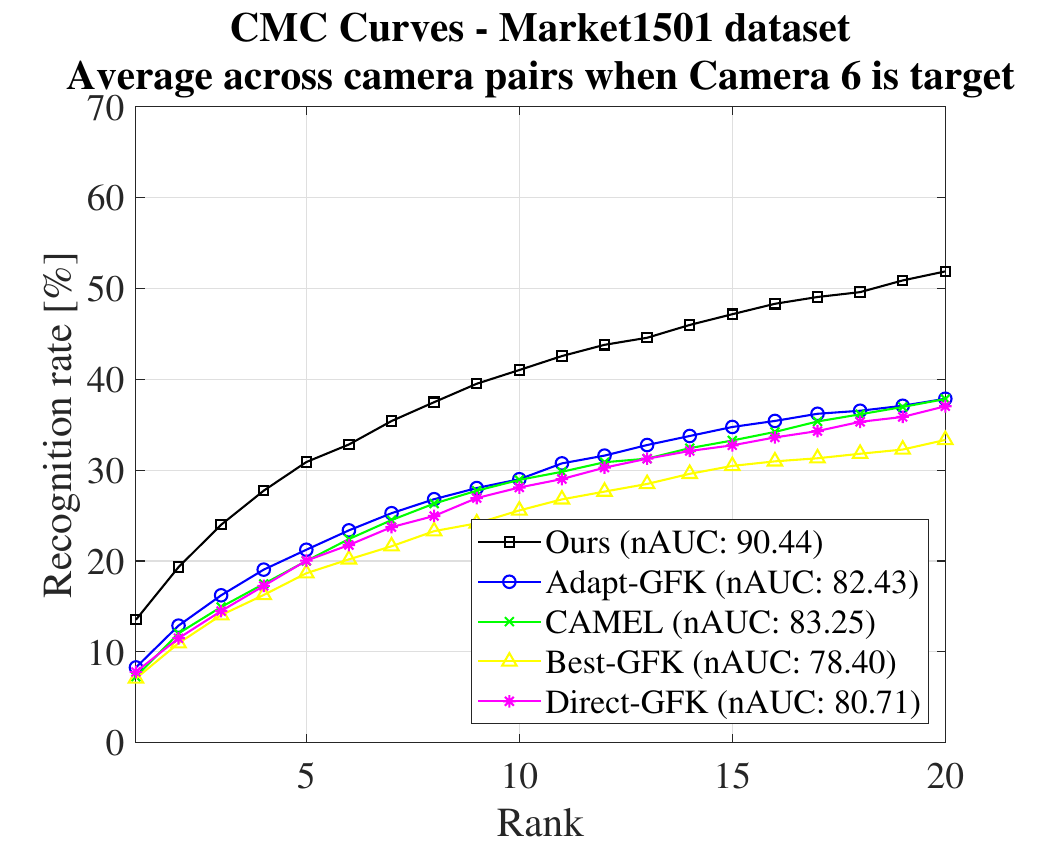}
\caption{}
\end{subfigure} 
%\caption{Continuous multi-cam.}
\caption{In this figure we used Market1501 dataset to show the effect of sequential on-boarding of multiple cameras (In this case 3 cameras). Source cameras are camera 3,4 and 5 which has three source metrics between them. First camera 1 is added to the network and adapted. Accuracy for camera 1 as target is computed between camera 1 and camera (3,4,5) (plot(a)). Then camera 2 is added and adapted. For calculation of camera 2 adaptation accuracy we calculate matching score between camera 2 and camera (1,3,4,5) (plot(b)). In same fashion camera 6 is added afterwards and accuracy is calculated between camera 6 and camera (1,2,3,4,5) (plot(c)). We can see that our method significantly outperform other methods both in rank-1 and nAUC. This shows the effectiveness of our method for adaptation of multiple cameras in the network added sequentially.}
\label{fig:continuousmulticam}
\end{figure}
\section{Additional Experiments}
\noindent\textbf{Pairwise PCA vs Global PCA:} We calculate one PCA projection matrix for the whole source network and use that in the target to project features in the main paper. Additionally to compare, we did pairwise PCA and observe that it significantly lowers performance, e.g., rank-1 accuracy drops from 51.25 to 22.92 in RAiD, and drops from 62.86 to 25.71 in WARD. We believe this is due to lack of enough data across pair-wise cameras to give a reliable estimate of PCA subspaces. Combining different PCA projected metrics could be an interesting direction for future work.\\
\noindent\textbf{Effect of $\mathbf{\lambda}=0$:} 
When the existing pair-wise learned metrics are not considered (i.e., $\lambda=0$), the rank-1 performance significantly drops from 62.86\% to 27.14\% on WARD. From that we conclude that a finite nonzero positive $\lambda$ is a very crucial factor in order for the algorihm to work.\\
% \textcolor{red}{\noindent\textbf{\textbf{Initialization}:}
% Though the proposed optimization is non-convex, initialization has very little effect on the performance. We tried 2 different initializations such as identity and random positive semidefinite matrices with random weights within the first quadrant of unit-norm hypersphere, and found that both resulted minimal difference in rank-1 accuracy (RAiD: 51.25 vs 50.83 and WARD: 62.82 vs 62.38). This might be due to large convergence basin of the optimization problem.} \\

\noindent\textbf{Initialization}:
Since the proposed optimization is convex, initialization has very little effect on the performance. We tried 2 different initializations such as identity and random positive semidefinite matrices with random weights within the first quadrant of unit-norm hypersphere, and found that both resulted minimal difference in rank-1 accuracy (RAiD: 51.25 vs 50.83 and WARD: 62.82 vs 62.38).

\section{Finetuning with Deep Features}
\noindent\textbf{Goal:} In this section our goal is to show the performance of our method (See Table~\ref{table1} and Figure~\ref{fig:sinlecamdeep6}), if we have access to a deep model trained well using the source data. \\
\noindent\textbf{Implementation details:}
This section covers the implementation details of finetuning deep features used in the experiments of Section 5.4 in the main paper.
First, we train a ResNet model~\cite{he2016deep}, pretrained on the Imagenet dataset, using the source camera data. We remove the last classification layer and add two fully connected layers; one which embeds average pooled features to size $1024$ and another which works as a classifier. 
We use the optimized source features to train the source metrics that will later be used to calculate new target metrics.
Afterwards we fine-tune the model using the new target data and use the new optimized target features along with the source metrics in optimization~\ref{opt:main_opt_supp}.
The model is trained for 50 epochs using SGD, with a base learning rate of 0.001, which is decreased by a factor 10 after 20 and 40 epochs. We use a batch size of 32 and perform traditional data augmentation, such as cropping and flipping.
We use the optimized source features to train the source metrics that will later be used to calculate new target metrics.
Afterwards, we fine-tune the model for 30 epochs using the new target data. We fine-tune with a batch size of 32 and a base learning rate is 0.0001 and decreased by a factor 10 after 20 epochs. The new optimized target features are used along with the source metrics in optimization. From Figure 5 (b) of the main paper and Figure~\ref{fig:sinlecamdeep6} in here, we observe that when we remove sixth camera in Market dataset, the accuracy of the test set between sixth and other cameras become very low as $20\%$, whereas in standard result for fully supervised deep model in Market dataset is around  
$80\%$. This drop in accuracy from 80 to 20\% while removing $6^{th}$ camera in Market is due to two reasons. First, removing all the 151 person ids that appear in $6^{th}$ camera results in less labeled examples that leads to a less accurate deep model. Second, $6^{th}$ camera is the most uncorrelated with the other 5 cameras (see Fig. 7 in \cite{zheng2015scalable}). Figure {\color{red} 5}(b) in main paper and Figure~\ref{fig:sinlecamdeep6} in here clearly show that our approach works better than direct adaptation of the source model (even with finetuning) when feature distribution across source and target cameras are very different.
\begin{table}[H]
\centering
\begin{tabular}{|l|l|l|l|l|}
\hline
                  & \multicolumn{2}{l|}{\textbf{Single-query}} & \multicolumn{2}{l|}{\textbf{Multi-query}} \\ \hline
\textbf{Method}    & Top-1                 & mAP                & Top-1                & mAP                \\ \hline
                                                                            
Euclidean          &         46.51              &     40.04               &       54.40               &         48.54           \\
Euclidean-ft       &       51.51                &          45.52          &         59.66             &         54.36           \\
KISSME             &       45.57                &       38.42             &          55.31            &          48.02          \\
KISSME-ft          &         49.13              &          41.77          &          58.52            &            51.58        \\
Ours                &           47.79            &           41.20         &          57.57            &          50.83          \\
Ours-ft             &           \bf{52.84}            &          \bf{46.70}          &          \bf{61.96}            &         \bf{56.28}           \\ \hline
\end{tabular}
\caption{Results for Market1501 when we have a deep model trained using the data of 5 source cameras. We set each camera as target with 25\% labeled data in it and show result of average across all the cameras. \textbf{Euclidean} denotes the accuracy of target camera if the trained source model is directly used to extract features in target test set. \textbf{KISSME} is direct metric learning between new camera and old cameras. \textit{ft} stands for fine tuning. \textbf{Euclidean-ft} and \textbf{KISSME-ft} is same scheme that is described in the top lines of this section, except for the feature extraction policy. In these methods features are extracted using the fine tuned source model with limited target data. We can see that our proposed algorithm using features from fine-tuned model outperforms all the other accuracies.}
\label{table1}
\end{table}

\begin{figure}[ht]
\centering
\large \underline{CMC curves for Market1501 dataset with Camera 6 as target using deep learned features}\par\medskip
\begin{subfigure}{0.3\textwidth}
\includegraphics[width=\textwidth]{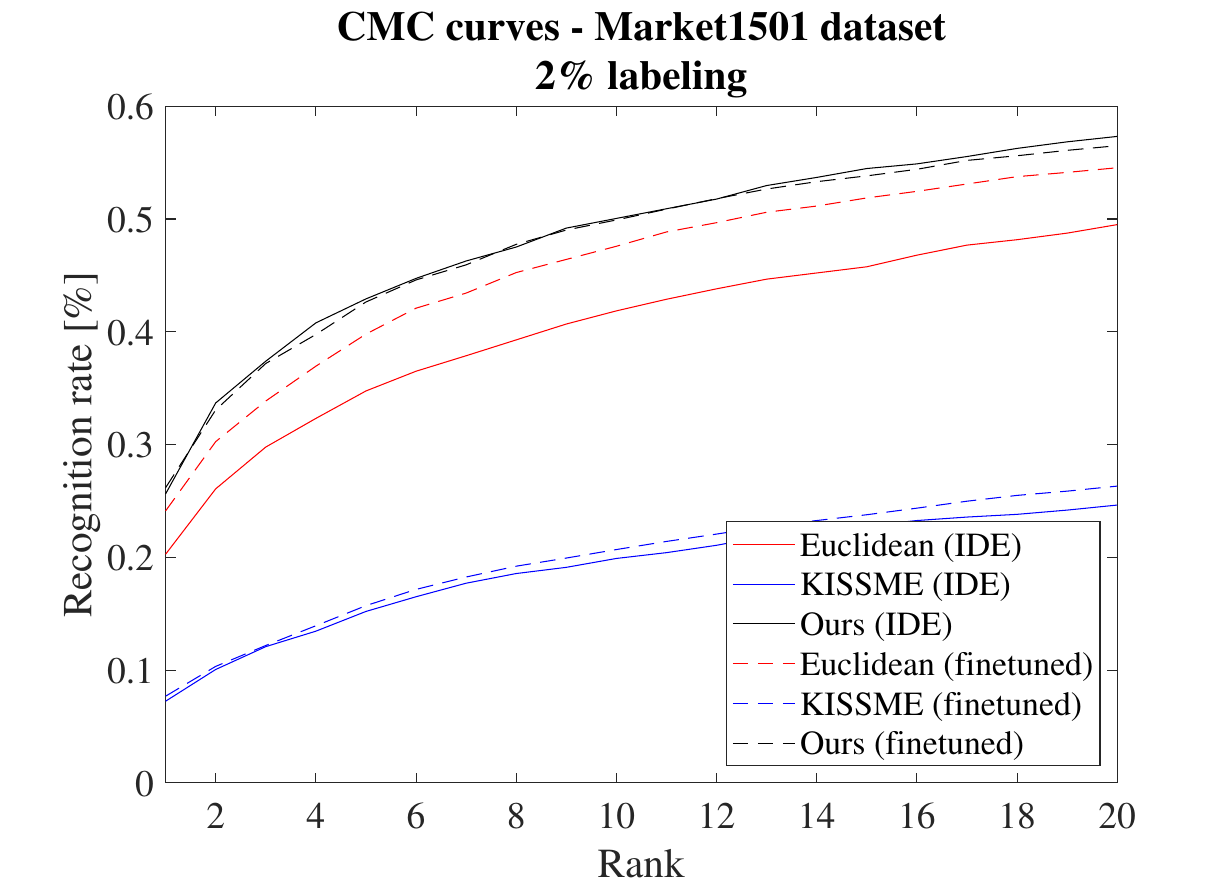}
\caption{}
\end{subfigure}
\begin{subfigure}{0.3\textwidth}
\includegraphics[width=\textwidth]{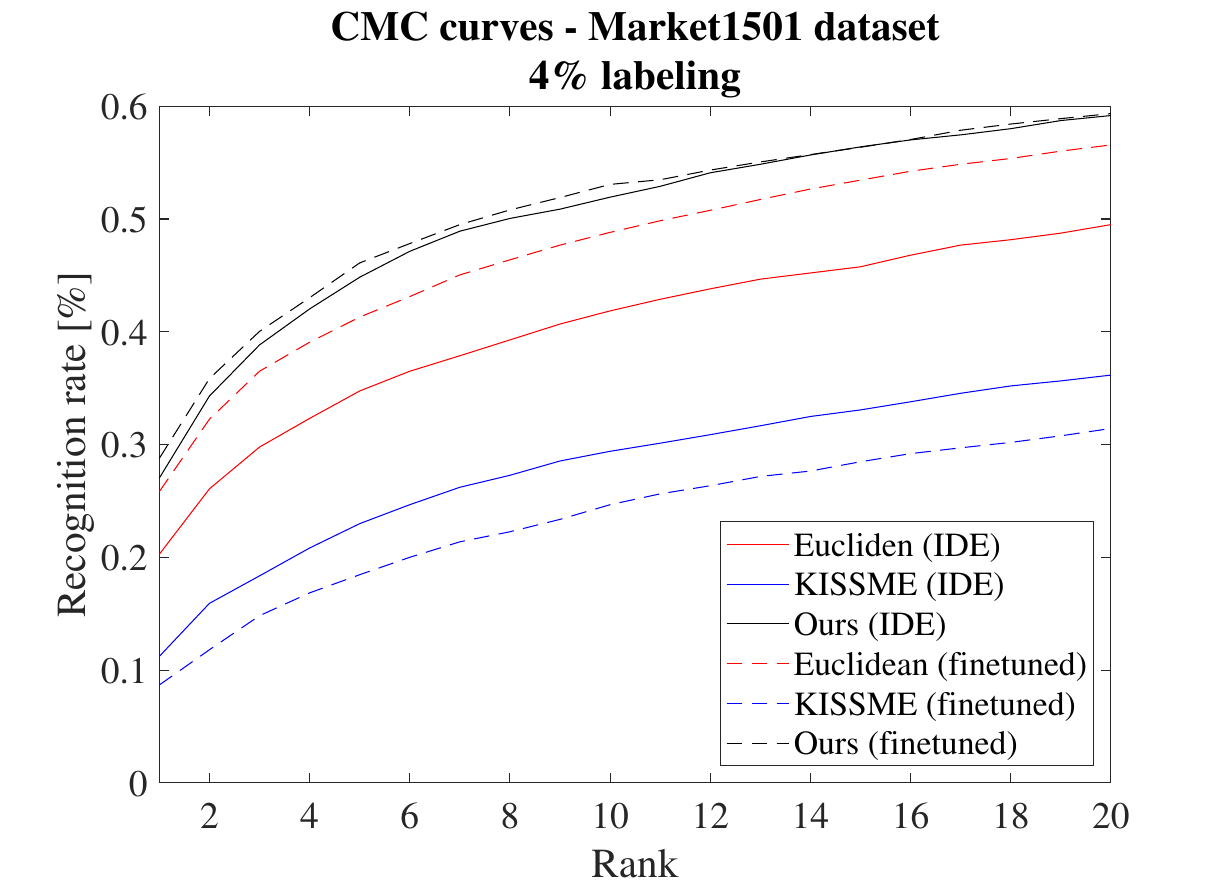}
\caption{}
\end{subfigure} 
\begin{subfigure}{0.3\textwidth}
\includegraphics[width=\textwidth]{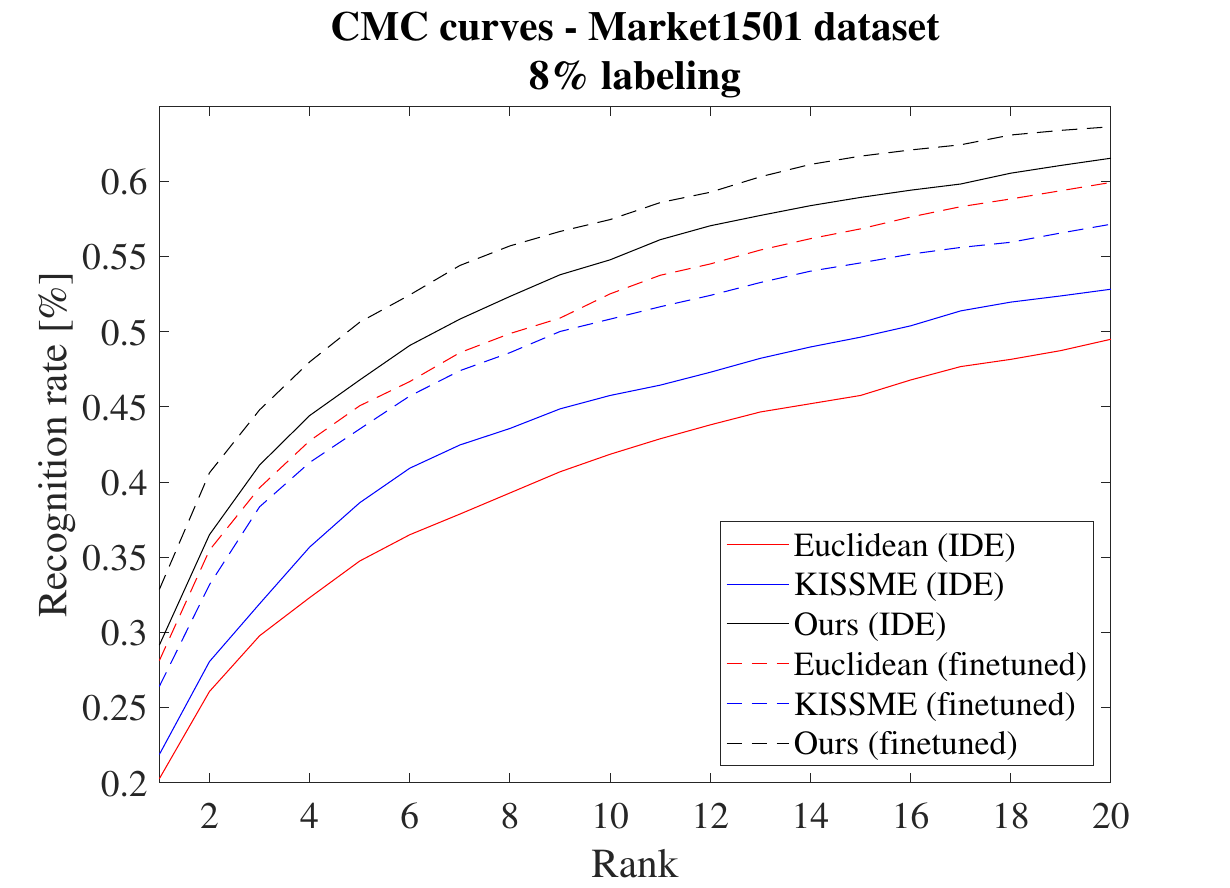}
\caption{}
\end{subfigure} \\
\begin{subfigure}{0.3\textwidth}
\includegraphics[width=\textwidth]{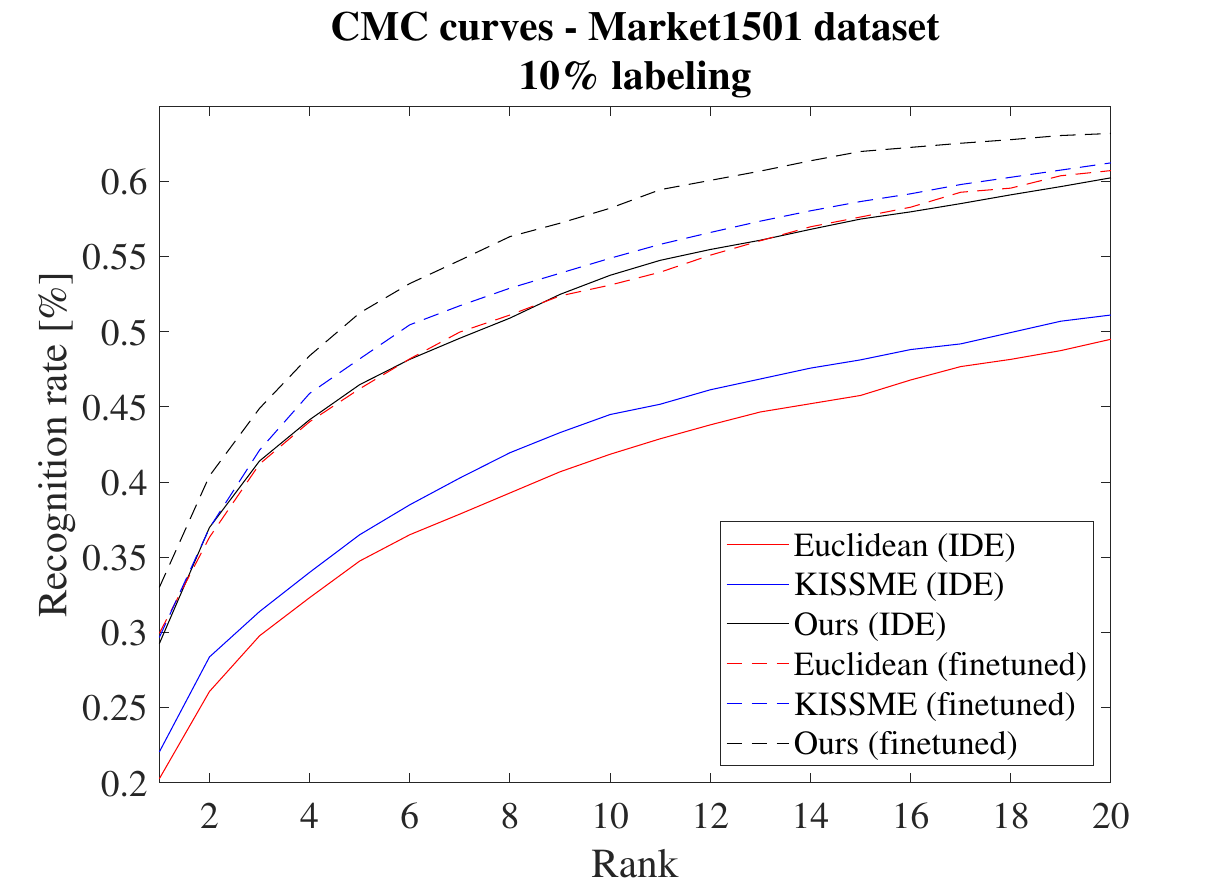}
\caption{}
\end{subfigure} 
\begin{subfigure}{0.3\textwidth}
\includegraphics[width=\textwidth]{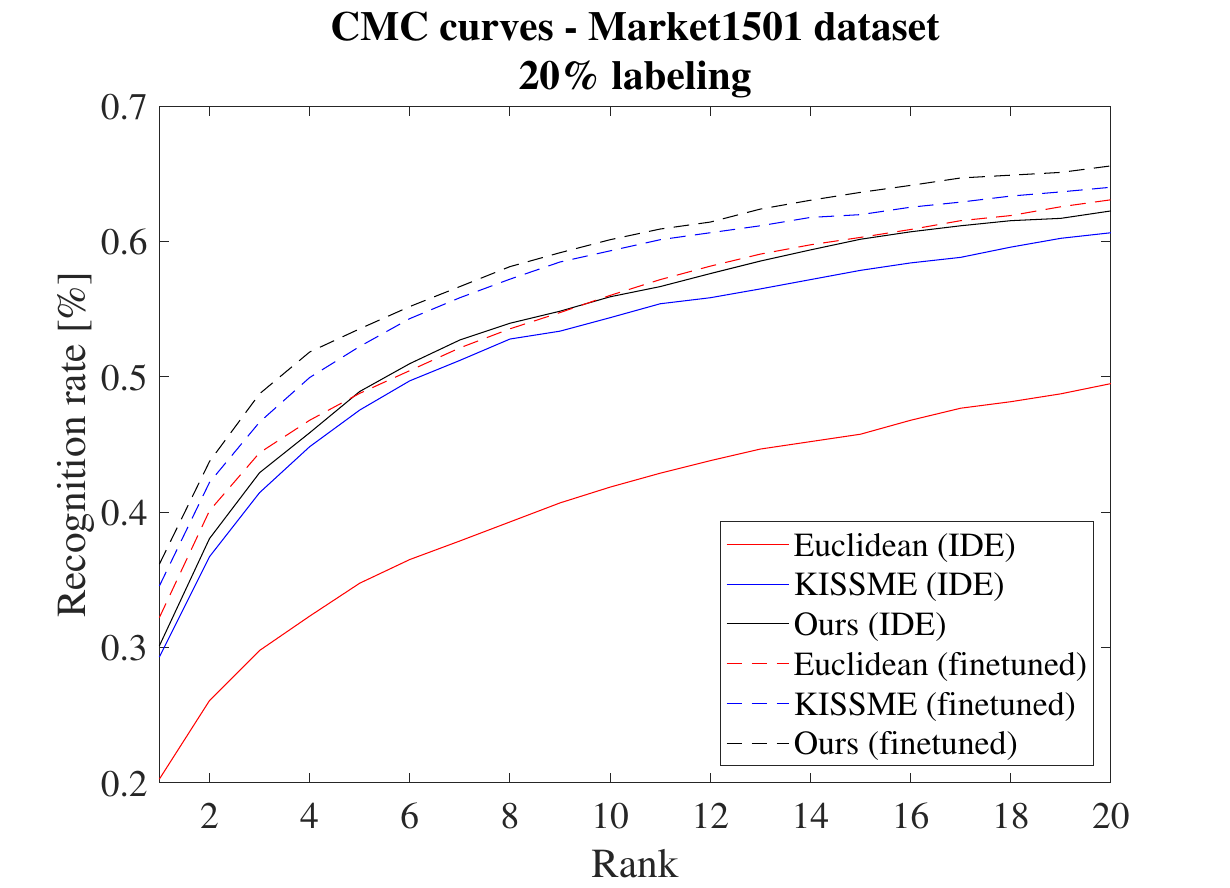}
\caption{}
\end{subfigure}
\caption{These plots show cmc curves for camera 6 of Market1501 dataset using the exact same scheme of Table~\ref{table1} but with different percentage labels in the target. We can clearly see that our method outperforms all the other (That is direct euclidean, direct metric learning and even fine tuning with target data). When the percentage label increase then our method with non-finetuned features merges with the direct fine tuning, whereas if we use our method with the finetuned features, it exceeds all the accuracy. This shows the strength of our method even in the presence of deep learned source model.}
\label{fig:sinlecamdeep6}
\end{figure}

%\clearpage
% {\small
% \bibliographystyle{ieee_fullname}
% \bibliography{egbib}
% }

% \end{document}

\end{document}